\icmltitlerunning{Provable Approximations for Constrained $\ell_p$ Regression}
\newcommand{\cost}{\mathrm{cost}}
\newcommand{\sign}{\mathrm{sign}}
\newcommand{\sphere}{\mathbb{S}}
\newcommand{\M}{\mathrm{m}}
\newcommand{\F}{\mathbb{F}}
\newcommand{\eps}{\varepsilon}
\newcommand{\dist}{\mathrm{dist}}
\newcommand{\range}{\mathrm{range}}
\newcommand{\lip}{\mathrm{lip}}
\newcommand{\smallest}{\mathrm{small}}
\newcommand{\opt}{ \mathrm{opt}}
\newcommand{\perms}{\mathrm{Perms}}
\newcommand{\REAL}{\mathbb{R}}
\newcommand{\br}[1]{\left\{#1\right\}}
\providecommand{\norm}[1]{\left\lVert#1\right\rVert}
\DeclarePairedDelimiter{\floor}{\lfloor}{\rfloor}
\DeclareMathOperator*{\argmin}{arg\,min}
\DeclareMathOperator*{\argmax}{arg\,max}
\newcommand{\algnameXCandidates}{\textsc{Calc-x-candidates}}
\newcommand{\algCalcOpt}{\textsc{Calc-opt}}
\newcommand{\matchingalgname}{\textsc{Match-Algorithm}}
\newcommand{\OurAlg}{\textsc{Our-Algorithm}}
\newcommand{\OurMatchingAlg}{\textsc{Our-Match-Alg}}
\newcommand{\Maple}{\textsc{Maple30}}
\newcommand{\MaplePro}{\textsc{Maple100}}
\newcommand{\MapleRansac}{\textsc{Maple-Ransac}}
\newcommand{\MapleICP}{\textsc{Maple30-ICP}}
\newcommand{\MapleProICP}{\textsc{Maple100-ICP}}
\newcommand{\Bnb}{\textsc{Yalmip-bmibnb}}
\newcommand{\BnbRansac}{\textsc{bmibnb-Ransac}}
\newcommand{\BnbICP}{\textsc{bmibnb-ICP}}
\newcommand{\Baron}{\textsc{Yalmip-baron}}
\newcommand{\BaronRansac}{\textsc{Baron-Ransac}}
\newcommand{\BaronICP}{\textsc{Baron-ICP}}
\newtheorem{theorem}{Theorem}[section]
\newtheorem{corollary}[theorem]{Corollary}
\newtheorem{lemma}[theorem]{Lemma}
\newtheorem{observation}[theorem]{Observation}
\newtheorem{definition}[theorem]{Definition}
\renewcommand{\paragraph}[1]{\medskip\noindent\textbf{{#1}}}
\begin{document}
\pagenumbering{arabic}
\pagestyle{plain}
\twocolumn[
\icmltitle{Provable Approximations for Constrained $\ell_p$ Regression}



\icmlsetsymbol{equal}{*}

\begin{icmlauthorlist}
\icmlauthor{Ibrahim Jubran}{uni}
\icmlauthor{David Cohn}{uni}
\icmlauthor{Dan Feldman}{uni}
\end{icmlauthorlist}

\icmlaffiliation{uni}{Robotics and Big Data Lab, Department of Computer Science, University of Haifa, Haifa, Israel}

\icmlcorrespondingauthor{Ibrahim Jubran}{ibrahim.jub@gmail.com}
\icmlcorrespondingauthor{David Cohn}{david.cohn.82@gmail.com}

\icmlkeywords{Machine Learning, Non-Convex Optimization, Linear Regression, Approximation Algorithms}

\vskip 0.3in
]



\printAffiliationsAndNotice{}  

\begin{abstract}
The $\ell_p$ linear regression problem is to minimize $f(x)=||Ax-b||_p$ over $x\in\mathbb{R}^d$, where $A\in\mathbb{R}^{n\times d}$, $b\in \mathbb{R}^n$, and $p>0$.
To avoid overfitting and bound $||x||_2$, the \emph{constrained $\ell_p$ regression} minimizes $f(x)$ over every \emph{unit vector} $x\in\mathbb{R}^d$.
This makes the problem non-convex even for the simplest case $d=p=2$.
Instead, ridge regression is used to minimize the Lagrange form $f(x)+\lambda ||x||_2$ over $x\in\mathbb{R}^d$, which yields a convex problem in the price of calibrating the regularization parameter $\lambda>0$.

We provide the first provable constant factor approximation algorithm that solves the constrained $\ell_p$ regression directly, for every constant $p,d\geq 1$.
Using core-sets, its running time is $O(n \log n)$ including extensions for streaming and distributed (big) data.
In polynomial time, it can handle outliers, $p\in (0,1)$ and minimize $f(x)$ over every $x$ and permutation of rows in $A$.

Experimental results are also provided, including open source and comparison to existing software.
\end{abstract}

\section{Introduction} \label{sec:probState}
One of the fundamental problems in machine learning is $\ell_2$ linear regression, where the goal is to fit a hyperplane that minimizes the sum of squared vertical distances to a set of $n\gg d$ input $(d+1)$-dimensional points (samples, vectors, training data). Formally, the input is an $n\times d$ matrix $A = (a_1\mid \cdots\mid a_n)^T$ and a vector $b=(b_1,\cdots,b_n)^T$ in $\REAL^n$ that contains the $n$ labels (heights, or last dimension) of the points. The goal is to minimize the sum $\sum_{i=1}^n(a_i^Tx-b_i)^2$ over every $d$-dimensional vector $x$ of coefficients,
\begin{equation}\label{mainEq1}
\min_{x\in\REAL^d}\norm{Ax-b}_2.
\end{equation}

One disadvantage of these techniques is that overfitting may occur~\cite{buhlmann2011statistics}. For example, if the entries in $b$ are relatively small, then $x=(0,\cdots,0)$ may give an approximated but numerically unstable solution. Moreover,~\eqref{mainEq1} is not robust to outliers, in the sense that e.g. adding a row whose entries are relatively very large would completely corrupt the desired vector $x$.

This motivates the addition of a constraint $c>0$ on the norm of $x$, where $c$ is a constant that may depend on the scale of the input $A$ and $b$. Without loss of generality, we can assume $\norm{x}_2=c=1$, otherwise we divide entries ofP $b$ accordingly. The result is \emph{constrained $\ell_2$ regression problem},
\begin{equation}\label{mainEq2}
\min_{x\in \REAL^d:\norm{x}_2=1} \norm{Ax-b}_2.
\end{equation}
The special case $b=(0,\cdots,0)$ can be solved in $O(nd^2)$ time, where the optimum is the smallest singular value of $A$ and $x$ is the largest singular vector of $x$~\cite{golub1970singular}.

A generalization of~\eqref{mainEq2} for a given constant $p>0$ would be
\begin{equation}\label{mainEq3}
\min_{x\in \REAL^d, \norm{x}_2=1} \norm{Ax-b}_p,
\end{equation}
where $\norm{v}_p=\left(\sum_{i=1}^d |v_i|^p\right)^{1/p}$ for $v = (v_1,\cdots,v_d)\in\REAL^d$. Note that for $p<1$ we obtain a non-standard norm which is a non-convex function over $v\in R^d$.

Optimization problem~\eqref{mainEq3} for $p=1$ can be defined geometrically as follows.
Compute a point $x'$ on the unit sphere that minimizes the weighted sum of distances over $n$ given hyperplanes and $n$ multiplicative weights.
Here, the $i$th hyperplane is defined by its normal (unit vector) $a_i/\norm{a_i}$, its distance from the origin $b_i/\norm{a_i}$, and its weight $\norm{a_i}$. The weighted distance between $x'$ and the $i$th hyperplane is defined as $\norm{a_i}\cdot |\frac{a_i^T}{\norm{a_i}}x' -\frac{b_i}{\norm{a_i}}| = |a_i^Tx-b_i|$.

In the context of machine learning, in linear regression we wish to fit a hyperplane whose unit normal is $x'$, that minimizes the sum of squared vertical distances between the hyperplane at point $a_i$ (predicted value) and $b_i$ (the actual value), over every $i\in [n]$. In low-rank approximation (such as SVD / PCA) we wish to fit a hyperplane that \emph{passes through the origin} and whose unit normal is $x'$, that minimizes the sum of squared Euclidean distances between the data points and the hyperplane.
Our problem is a mixture of these two problems: compute a hyperplane that passes through the origin (as in low-rank approximation) and minimizes sum of squared vertical distances (as in linear regression).

Further generalization of~\eqref{mainEq3} suggests handling data with outliers. For example when one of the rows of $A$ is very noisy, or if an entry of $b$ is unknown. Let $k < n$ be the number of such outliers in our data. In this case, we wish to ignore the largest $k$ distances (fitting errors), i.e., consider only the closest $s=n-k$ points to $x$. Formally,
\begin{equation} \label{mainEq5}
\min_{x\in \REAL^d:\norm{x}_2=1}\norm{\smallest(Ax-b,n-k)}_p,
\end{equation}
where $\smallest(v,s) \in \REAL^s$ is a vector that consists of the smallest $s$ entries in $v\in\REAL^n$, where $s\in[0,n]$ is an integer.

In some cases, our set of observations is unordered, i.e., we do not know which observation in $b$ matches each point in $A$.
For example, when half of the points should be assigned to class $b_1=..=b_{n/2}=0$ and half of the points to class $b_{n/2+1}..=b_n=1$. Here, $\M:\br{1,\cdots,n}\to\br{1,\cdots,n}$ denotes a bijective function, called a \emph{matching function}, and $b_{\M} = (b_{\M(1)},\cdots,b_{\M(n)})^T$ denotes the permutation of the entries in $b$ with respect to $\M$. In this case, we need to compute
\begin{equation}\label{mainEq4}
\min_{x,\M} \norm{Ax-b_{\M}}_p,
\end{equation}
where the minimum is over every unit vector $x$ and matching function $\M:[n]\to [n]$.


\section{Related Work} \label{sec:related}
Regression problems are fundamental in statistical data analysis and have numerous applications
in applied mathematics, data mining, and machine learning; see references in~\cite{friedman2001elements,chatterjee2015regression}. Computing the simple (unconstrained) Linear regression in~\eqref{mainEq1} for the case $p=2$ was known already in the beginning of the previous century~\cite{pearson1905general}. Since the $\ell_p$ norm is a convex function, for $p=1$ it can be solved using linear programming, and in general for $p\geq 1$ using convex optimization techniques in time $(nd)^{O(1)}$ or using recent coreset (data summarization) technique~\cite{dasgupta2009sampling} in near-linear time.

To avoid overfitting and noise, there is a need to bound the norm of the solution $x$, which yields problem~\eqref{mainEq2} when $p=2$.
The constraint $\norm{x}=1$ in~\eqref{mainEq2} can be replaced by adding a Lagrange multiplier~\cite{rockafellar1993lagrange} $\lambda>0$ to obtain,
\begin{equation}\label{lag1}
\min_{\lambda\in\REAL,x\in\REAL^d} \norm{Ax-b}_2+\lambda\norm{x}_2-\lambda.
\end{equation}

Unfortunately,~\eqref{mainEq2} and~\eqref{lag1} are non-convex problems in quadratic programming as explained in~\cite{park2017general}, which are also NP-hard if $d=n$, so there is no hope for running time that is polynomial in $d$; see Conclusion section. It was proved in~\cite{jubran2018minimizing} that the problem is non-convex even if every input point (row) $a_i$ is on the unit circle and $d=2$.
Similarly, when we are allowed to ignore $k$ outlier as defined in~\eqref{mainEq5}, or if we use $M$-estimators, the problem is no longer convex.

Instead, a common leeway is to ``guess" the value of $\lambda$ in~\eqref{lag1}, i.e., turn it into an input parameter that is calibrated by the user and is called \emph{regularization term}~\cite{zou2005regularization} to obtain a relaxed convex version of the problem,
\[
\min_{x\in\REAL^d} \norm{Ax-b}_2+\lambda\norm{x}_2.
\]
This problem is the \emph{ridge regression} which is also called \emph{Tikhonov regularization} in statistics~\cite{hoerl1970ridge}, \emph{weight decay} in machine learning~\cite{krogh1992simple}, and \emph{constrained linear inversion method}~\cite{twomey1975comparison} in optimization. Many heuristics were suggested to calibrate $\lambda$ automatically in order to remove it such as automatic plug-in estimation, cross-validation, information
criteria optimization, or Markov chain Monte Carlo (MCMC)~\cite{kohavi1995study,gilks1995markov} but no provable approximations for the constrained $\ell_p$ regression~\eqref{mainEq2} are known; see~\cite{karabatsos2018marginal,karabatsos2014fast,cule2012semi} and references therein.

Another reminiscent approach is LASSO (least absolute shrinkage and selection operator)~\cite{tibshirani1996regression}, which replaces the non-convex constraint $\norm{x}_2=1$ in~\eqref{mainEq2} with its convex $\ell_1$ inequality $\norm{x}_1\leq t$ to obtain $\min_{x\in \REAL^d:\norm{x}_1\leq t} \norm{Ax-b}_2$ for some given parameter $t>0$.

LASSO is most common technique in regression analysis~\cite{zou2005regularization} for e.g. variable selection and compressed sensing~\cite{angelosante2009compressed} to obtain sparse solutions. As explained in~\cite{tibshirani1996regression,tibshirani1997lasso} these optimization problems can be easily extended to a wide variety of statistical models including generalized linear models, generalized estimating equations, proportional hazards models, and M-estimators. 

Alternatively, the $\ell_2$-norm in~\eqref{mainEq1} may be replaced by the $\ell_p$-norm for $p\geq 1$ to obtain the (non-constrained) \emph{$\ell_p$ regression}  \begin{equation}\label{mainEq33}
\min_{x\in \REAL^d} \norm{Ax-b}_p,
\end{equation}
which is convex for the case $p\geq 1$. Using $p\in(0,1]$ in~\eqref{mainEq33} is especially useful for handling outliers~\cite{ding2017l1} which arise in real-world data. However, for the (non-standard) $\ell_p$-norm, where $p<1$,~\eqref{mainEq33} is non-convex.

Adding the constraint $\norm{x}=1$ in~\eqref{mainEq33} yields the constrained $\ell_p$ regression in~\eqref{mainEq3}. Only recently, a pair of breakthrough results were suggested for solving~\eqref{mainEq3} if $p\neq 2$. \cite{park2017general} suggested a solution to the constraint $\ell_p$ regression in~\eqref{mainEq3} for the case $p=1$. They suggest to convert the constraint $\norm{x}=1$ into two inequality constraints $\norm{x} \leq 1$ and $-\norm{x} \leq 1$. The other result~\cite{jubran2018minimizing} suggested a provable constant approximation for the constrained $\ell_p$ regression problem, in time $O(n\log n)$ for every constant $p\geq1$. However, the result holds only for $d=2$, and the case $d>2$ as in our
 paper was left as an open problem. In fact, our main algorithm solves the problem recursively where in the base case $d=2$ we use the result from~\cite{jubran2018minimizing}.

To our knowledge, no existing provable approximation algorithms are known for handling outliers as in~\eqref{mainEq5}, unknown matching as in~\eqref{mainEq4}, or for~\eqref{mainEq3} for the case $p\in (0,1)$ and $d\geq 3$.

\paragraph{Coreset }for $\ell_p$ regression in this paper is a small weighted subset of the input that approximates $\norm{Ax-b}$ for every $x\in\REAL^d$, up to a multiplicative factor of $1\pm \eps$. Solving the constrained $\ell_p$ regression on such coreset would thus yield an approximation solution to the original (large) data. Such coresets of size independent of $n$ were suggested in~\cite{dasgupta2009sampling}. In Theorem~\ref{theorem:coreset} we obtain a little smaller coreset by combining~\cite{dasgupta2009sampling} and the framework from~\cite{FL11,braverman2016new}. Such coresets can also be maintained for streaming and distributed Big Data in time that is near-logarithmic in $n$ per point as explained e.g. in~\cite{feldman2011scalable,lucic2017training}. Applying our main result on this coreset, thus implies its streaming and distributed versions.
We note that this scenario is rare and opposite to the common case: in most coreset related papers, a solution for the problem that takes polynomial time $n^{O(1)}$ exists and the challenge is to reduces its running time to linear in $n$, by applying it on a coreset of size that is independent, or at least sub-linear in $n$. In our case, the coreset exists but not a polynomial time algorithm to apply on the coreset.

\section{Paper Overview}
The rest of the paper is organized as follows. We state our main contributions in Section~\ref{sec:contrib} and some preliminaries and notations in Section~\ref{sec:prelim}. Section~\ref{sec:lp_regression} suggests an approximation algorithm for solving~\eqref{mainEq3}, Section~\ref{sec:lp_regressionGeneral} generalizes this solution of~\eqref{mainEq3} to a wider range of functions, Section~\ref{sec:lp_regressionNoMatching} handles the minimization problem in~\eqref{mainEq4}, Section~\ref{sec:coreset} introduces a coreset for the constrained $\ell_p$ regression, Section~\ref{sec:ER} presents our experimental results and Section~\ref{sec:conclude} concludes our work.

\section{Our Contribution} \label{sec:contrib}
Some of the proofs have been placed in the appendix to make the reading of the paper more clear.

\paragraph{Constrained $\ell_p$ regression. }We provide the first polynomial time algorithms that approximates, \emph{with provable guarantees}, the functions in~\eqref{mainEq3}, \eqref{mainEq5} and~\eqref{mainEq4} up to some constant factor that depends only on $d$ and some error parameter $\varepsilon \in (0,1)$. The factor of approximation is $(1+\varepsilon)4^{d-1},4^{d-1}$ and $4^{d-1}$, respectively for~\eqref{mainEq3}, \eqref{mainEq5} and~\eqref{mainEq4}. The running time is $O(n\log{n}), n^{O(d)}$ and $n^{O(d)}$, respectively for~\eqref{mainEq3}, \eqref{mainEq5} and~\eqref{mainEq4}; see Table~\ref{table:ourContrib}.

%
%

\paragraph{Coresets. }Our main algorithm takes time $n^{O(d)}$ and is easily generalized for many objective functions via Observation~\ref{obs:distToCost}, Theorem~\ref{costAxb} and Theorem~\ref{costAxb_noMatch}. It implies the results in the last three rows of Table~\ref{table:ourContrib}.
For the case that $p\geq 1$ and we wish to minimize $\norm{Ax-b}_p$ over every unit vector $x$, we can apply our algorithm on the coreset for $\ell_p$ regression as explained in the previous section. This reduces the running time to near-linear in $n$, and enables parallel computation over distribution and streaming data by applying it on the small coreset that is maintained on the main server. This explains the running time and approximation factor for the first three lines of Table~\ref{table:ourContrib}.

\paragraph{Our experimental results }show that the suggested algorithms perform better, in both accuracy and computation time, compared to the few state of the art methods that can handle this non-convex problem; See Section~\ref{sec:ER}.

Table~\ref{table:ourContrib} summarizes the main contributions of this paper.

\begin{table*}[h!]
\begin{adjustbox}{width=1\textwidth}
\footnotesize
\footnotesize\tabcolsep=0.11cm \begin{tabular}{|c|c|c|c|c|}
\hline
\makecell{Function Name} & \makecell{Objective\\Function} & \makecell{Computation\\Time} & \makecell{Approximation\\Factor} & \makecell{Related\\Theorem}\\
\hline
\makecell{Constrained $\ell_p$ regression} & \eqref{mainEq3} & $O(n\log{n})$ & $(1+\varepsilon)\cdot4^{d-1}$ & \ref{costAxb} and \ref{theorem:coreset}\\
\hline
\makecell{Constrained $\ell_p^z$ regression} & $\displaystyle\min_{x\in \sphere^{d-1}} \norm{Ax-b}_p^z$ & $O(n\log{n})$ & $(1+\varepsilon)\cdot4^{s\cdot (d-1)}$ & \ref{costAxb} and \ref{theorem:coreset}\\
\hline
\makecell{Constrained $\ell_p$ regression\\with M-estimators} & $\displaystyle\min_{x\in\sphere^{d-1}}\sum_{i=1}^n \min\br{|a_i^Tx-b_i|,T}$ & $O(n\log{n})$ & $(1+\varepsilon)\cdot4^{d-1}$ & \ref{costAxb} and \ref{theorem:coreset}\\
\hline
\makecell{Constrained $\ell_p$\\regression with outliers} & \eqref{mainEq5} & $n^{O(d)}$ & $4^{d-1}$ & \ref{costAxb}\\
\hline
\makecell{Constrained $\ell_p$ regression\\with unknown matching} & \eqref{mainEq4} & $n^{O(d)}$ & $4^{d-1}$ $\star$ & \ref{costAxb_noMatch}\\
\hline
\end{tabular}
\end{adjustbox}
\caption{\textbf{Main results of this paper.} Let $d,p,z\in (0,\infty)$ be constants. We assume that $A = (a_1 \mid \cdots \mid a_n)^T \in \REAL^{n\times d}$ is a non-zero matrix of $n \geq d-1 \geq 1$ rows, and that $b = (b_1,\cdots,b_n)^T \in \REAL^n$. Let $T>0$, $\varepsilon \in (0,1)$ and $s = z$ if $z>1$ and $s=1$ otherwise.
The approximation factor is relative to the minimal value of the objective function over every unit vector $x\in\sphere^{d-1}$.
Rows marked with a $\star$ have that the minimum of the objective function is computed both over every $x\in\sphere^{d-1}$ and matching function $\M\in \perms(n)$.}
\label{table:ourContrib}
\end{table*}

\section{Preliminaries} \label{sec:prelim}
In this section we first give notation and main definitions that are required for the rest of the paper.

\paragraph{Notation.} \label{ch:notation}
Let $\REAL^{n\times d}$ be the set of $n\times d$ real matrices. In this paper, every vector is a column vector, unless stated otherwise, that is $\REAL^{d} = \REAL^{d\times 1}$. We denote by $\norm{p}=\norm{p}_2=\sqrt{p_1^2+\ldots+p_d^2}$ the length of a point $p=(p_1,\cdots,p_d) \in \REAL^d$, by $\dist(p,\pi) = \min_{x \in \pi} \norm{p-x}_2$ the Euclidean distance from $p$ to a subspace $\pi$ of $\REAL^d$.
We denote $[n]=\br{1,\cdots,n}$ for every integer $n \geq 1$.
For a bijection function $\M:[n]\to [n]$ (matching function) and a set $Y = \br{(a_1,b_1),\cdots,(a_n,b_n)} \subseteq \REAL^{d}\times\REAL$ of $n$ pairs, we define $Y_{\M} = \br{(a_1,b_{\M(1)}),\cdots,(a_n,b_{\M(n)})}$.
For every $i\in [d]$, we denote by $e_i \in \REAL^d$ the $i$th standard vector, i.e., the $i$th column of the $d\times d$  identity matrix. For $d\geq 2$, $\sphere^{d-1} = \br{x \in \REAL^d \mid \norm{x}=1}$ is the set of all unit vectors in $\REAL^d$. We denote $\sphere = \sphere^1$ for simplicity.
For every $x\in\REAL$, we define $\sign(x) = \begin{cases} 1 & \text{if } x\geq 0 \\ -1 & \text{otherwise} \end{cases}$.
The set $\perms(n)$ denotes the union over every matching function $\M:[n]\to[n]$.
We remind the reader that $\argmin_{x\in X} f(x)$ is the set (and not a scalar) that contains all the values of $x$ that minimize $f(x)$ over some set $X$.
We denote by $\vec{0}(d) = (0,\cdots,o)^T \in \REAL^d$.

\paragraph{Definitions. }We first give a brief geometric illustration of the later definitions.
A hyperplane $h$ in $\REAL^d$ that has distance $\hat{b}>0$ from the origin can be defined by its orthogonal (normal) unit vector $\hat{a}$ so that $h=\br{x\in\REAL^d\mid \hat{a}}^Tx-\hat{b}=0|$. More generally, the distance from $x\in\REAL^d$ to $h$ is $|\hat{a}^Tx-\hat{b}|$. In Definition~\ref{defOptAxb} below the input is a set $h_1,\cdots,h_m$ of $m$ such hyperplanes that are defined by the matrix $A = (a_1 \mid \cdot \mid a_m)^T \in\REAL^{m\times d}$ and vector $b = (b_1,\cdots,b_m)^T\in\REAL^m$, such that $\frac{a_i}{\norm{a_i}}$is the unit normal to the $i$th hyperplane, and $\frac{b_i}{\norm{a_i}}$ is its distance from the origin.

In what follows we define a (possibly infinite) set of \emph{unit vectors} $\opt(A,b)$, which are the unit vectors that are as close as possible to the hyperplane $h_m$, among all unit vectors that lie on the intersection of $h_1,\cdots,h_{m-1}$. If $m=1$, we define this set to be the set of closest vectors on the unit sphere to the hyperplane $h_1$.
Observe that for every point $x\in \REAL^d$, $x \in h_i \iff a_i^Tx=b_i$; See Figure~\ref{Alg2DCase} and~\ref{AlgDCase} for a geometric illustration of the following definition.
\begin{definition} \label{defOptAxb}
Let $m,d\geq 1$ be a pair of integers, $A = (a_1 \mid \cdots\mid a_m)^T \in \REAL^{m\times d}$ and $b = (b_1,\cdots,b_m) \in [0,\infty)^m$. We define the set
\[
\begin{split}
&\opt(A,b):=\\
&\begin{cases}
  \displaystyle \argmin_{x\in\sphere^{d-1}} |a_m^Tx-b_m|, & \mbox{if } m=1 \\
  \displaystyle \argmin_{\substack{x\in\sphere^{d-1}: \\(a_1 \mid \cdots\mid a_{m-1})^Tx=(b_1,\cdots,b_{m-1})^T}} |a_m^Tx-b_m|, & \mbox{otherwise}.
\end{cases}
\end{split}
\]
\end{definition}

\begin{figure}
\begin{subfigure}[h]{0.49\textwidth}
		\centering
		\includegraphics[width=0.9\textwidth]{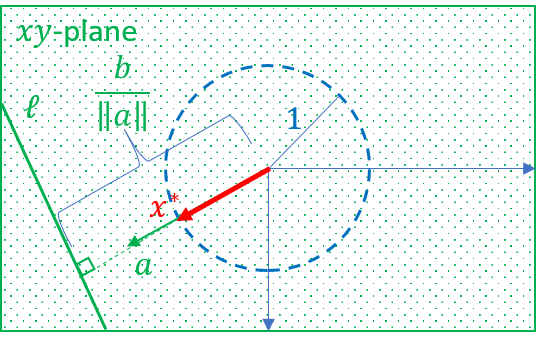} \label{Alg2DCase1}
    \caption{\it $b \geq \norm{a}$. Hence, the unit circle has only $1$ point $x^*$ (red) with minimal distance to $\ell$.}
	\end{subfigure}
	\begin{subfigure}[h]{0.49\textwidth}
		\centering\centering		
		\includegraphics[width=0.9\textwidth]{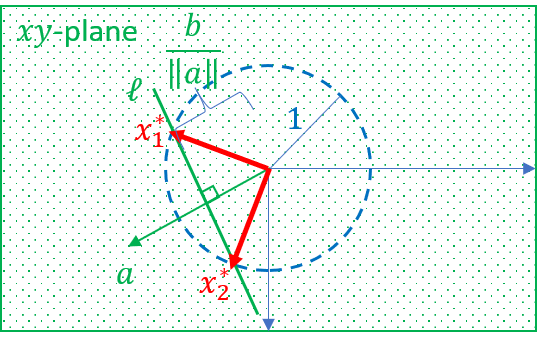} \label{Alg2DCase2}
		\caption{\it $b < \norm{a}$. Hence, the unit circle has exactly $2$ intersection points $x_1^*$ and $x_2$ (red) with $\ell$.}
	\end{subfigure}
	\caption{\it For $d=2$, $a\in \REAL^2$ and $b\in [0,\infty)$, the set $\ell = \br{p\in \REAL^2 \mid \frac{a^T}{\norm{a}}p=\frac{b}{\norm{a}}}$ is a line whose normal is $\frac{a}{\norm{a}}$ and is of distance $\frac{b}{\norm{a}}$ from the origin (Green line). The set $\opt(a,b)$, which contains unit vectors of minimal distance to $\ell$ are shown in red.}
	\label{Alg2DCase}
\end{figure}
\begin{figure}[t]
    \begin{subfigure}[h]{0.23\textwidth}
		\centering
		\includegraphics[width=0.8\textwidth]{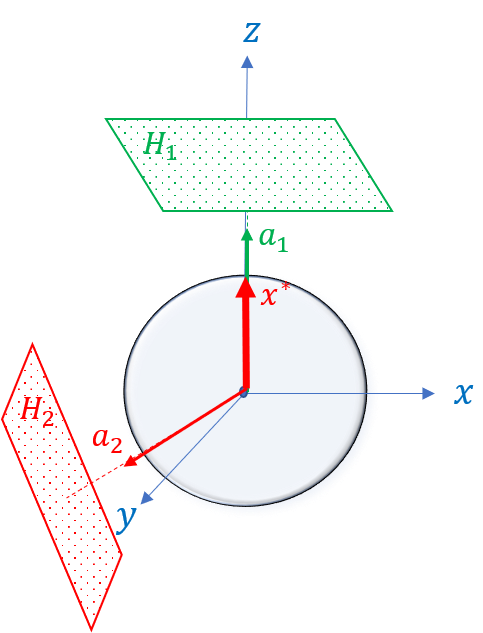} \label{AlgDCase1}
        \caption{\scriptsize $b_1 \geq \norm{a_1}$. Hence, $\sphere^{d-1}$ has $1$ point $x^*$ (red) of minimal distance to $H_1$. Thus, $\opt(a_1,b_1) = \br{x^*}$.}
	\end{subfigure}
	\begin{subfigure}[h]{0.23\textwidth}
		\centering\centering		
		\includegraphics[width=0.8\textwidth]{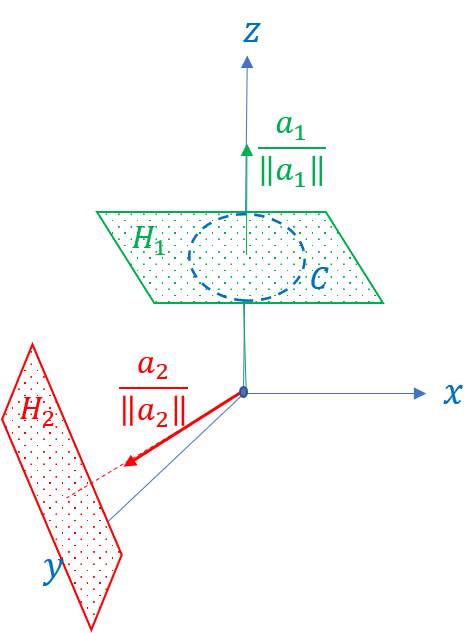} \label{AlgDCase2_3}
		\caption{\scriptsize $b_1 < \norm{a_1}$. Hence, $\opt(a_1,b_1) = C = H_1\cap S^{d-1}$, $|\opt(a_1,b_1)| = \infty$, and $\opt((a_1\mid a_2),(b_1,b_2)) = \argmin_{x\in \opt(a_1,b_1)} \dist(x,H_2) = \argmin_{x\in C} \dist(x,H_2)$.}
	\end{subfigure}
    \hfill
    \begin{subfigure}[h]{0.23\textwidth}
		\centering
		\includegraphics[width=0.9\textwidth]{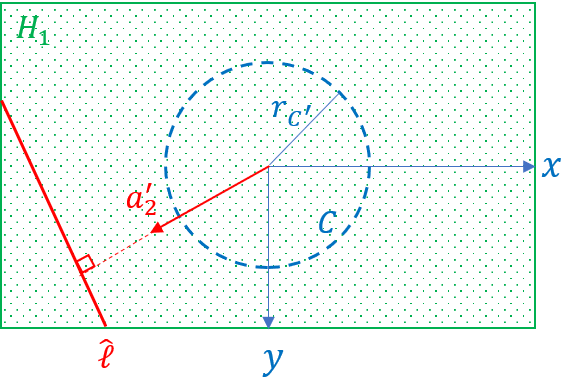} \label{AlgDCase2_4}
    \caption{\scriptsize A top view of $H_1$. $\hat{\ell}$ (red) is the intersection line between $H_1$ and $H_2$. $r_{C} \leq 1$ denotes the radius of the circle $C$. The normal $a_2'$ of $\hat{\ell}$ in the plane $H_1$ is simply the projection of $a_2$ onto the $xy$-plane.}
	\end{subfigure}
	\begin{subfigure}[h]{0.23\textwidth}
		\centering\centering		
		\includegraphics[width=0.9\textwidth]{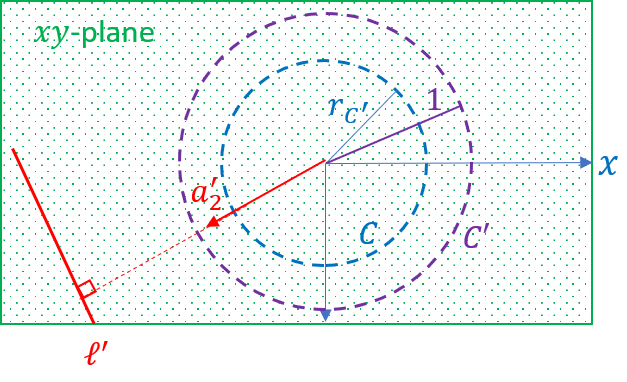} \label{AlgDCase2_5}
		\caption{\scriptsize Projecting $C'$ and $\hat{\ell}$ onto the $xy$-plane, and scaling by $1/r_{C'}$ to obtain a unit circle $C'$ (purple). $\ell'$ is the projected and scaled $\hat{\ell}$. $\opt((a_1\mid a_2),(b_1,b_2)) = \argmin_{x\in C'} \dist(x,\ell') = \br{a_2'}$, where the last equality is similar to the case $d=2$; See Figure~\ref{Alg2DCase}.}
	\end{subfigure}
	\caption{\scriptsize Let $d=3$, $a_1,a_2\in \REAL^d$ and $b_1,b_2\in [0,\infty)$ such that $b_2 \geq \norm{a_2}$. Assume that $a_1$ spans the $z$-axis. Let $H_i = \br{p\in \REAL^d \mid \frac{a_i^T}{\norm{a_i}}p=\frac{b_i}{\norm{a_i}}}$ for $i \in \br{1,2}$ be two hyperplane (Green and red planes).}
	\label{AlgDCase}
\end{figure}

In what follows, for every pair of vectors $v=(v_1,\cdots,v_n)$ and $u=(u_1,\cdots,u_n)$ in $\REAL^n$ we denote $v\leq u$ if $v_i \leq u_i$ for every $i\in [n]$. The function $f:\REAL^n \to [0,\infty)$ is \emph{non-decreasing} if $f(v)\leq f(u)$ for every $v\leq u$. For a set $I$ in $\REAL^d$, and a scalar $c\in\REAL$ we denote $I/c=\br{x/c\mid x\in I}$.

The following definition is a generalization of Definition 2.1 in~\cite{feldman2012data} from $n=1$ to $n>1$ dimensions, and from $\REAL$ to $I\subseteq\REAL^n$. Intuitively, an $r$-log-Lipschitz function is a function whose derivative may be large but cannot increase too rapidly (in a rate that depends on r).
\begin{definition}[Log-Lipschitz function] \label{def:lip}
Let $r>0$ and let $n\geq 1$ be an integer. Let $I$ be a subset of $\REAL^n$, and $h:I\to[0,\infty)$ be a non-decreasing function.
Then $h(x)$ is \emph{$r$-log-Lipschitz} over $x\in I$, if for every $c \geq 1$ and $x\in \displaystyle I \cap \frac{I}{c}$, we have $h(c x)\leq c^r h(x).$ The parameter $r$ is called the \emph{log-Lipschitz constant}.
\end{definition}

The following definition implies that we can partition a function $g$ which is not a log-Lipschitz function into a constant number of log-Lipschitz functions; see Figure~\ref{figLip} for an illustrative example.
\begin{definition}[Piecewise log-Lipschitz~\cite{jubran2018minimizing}\label{def:PieceLip}]
Let $g:X\to[0,\infty)$ be a continuous function over a set $X$, and
let $(X,dist)$ be a metric space, i.e. $\dist:X^2\to[0,\infty)$ is a distance function. Let $r> 0$. The function $g$ is \emph{piecewise $r$-log-Lipschitz} if there is a partition of $X$ into $m$ disjoint subsets $X_1,\cdots, X_m$
such that for every $i\in[m]$:
\vspace{-0.5cm}
\begin{enumerate}[noitemsep]
\renewcommand{\labelenumi}{(\roman{enumi})}
\item $g$ has a unique infimum $x_i$ at $X_i$, i.e., $\br{x_i}= \argmin_{x\in X_i}g(x)$.
\item $h_i:[0,\max_{x\in X_i} \dist(x,x_i)]\to[0,\infty)$ is an $r$-log-Lipschitz function; see Definition~\ref{def:lip}.
\item $g(x)=h_i(\dist(x_i,x))$ for every $x\in X_i$.
\end{enumerate}
\vspace{-0.4cm}
The set of minima is denoted by $M(g)=\br{x_1,\cdots,x_m}$.
\end{definition}

\begin{figure} [ht]
\centering
\includegraphics[width=0.25\textwidth]{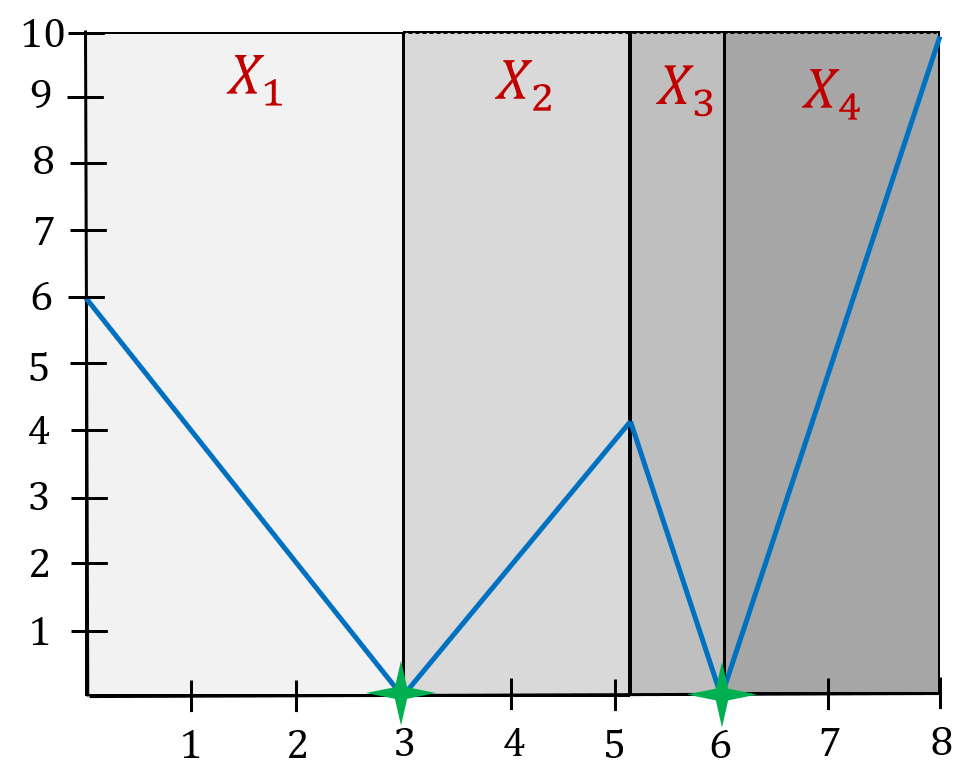}
\caption{\textbf{Piecewise log-Lipschitz function example. }\\
A function $g(x) = \min \br{2\cdot |x-3|, 5\cdot |x-6|}$ (blue graph) over the set $X = \REAL$. $X$ can be partitioned into $4$ subsets $X_1,\cdots,X_4$, where each subset has a unique infimum $x_1=3,x_2=3,x_3=6$ and $x_4=6$ respectively (green stars). There exist $4$ $1$-log-Lipschitz functions $h_1(x)=h_2(x)=2x$ and $h_3(x)=h_4(x)=5x$, such that $g(x) = h_i(|x_i-x|)$ for every $x\in X_i$. The figure is taken from~\cite{jubran2018minimizing}.}
\label{figLip}
\end{figure}

\section{Regression with a Given Matching} \label{sec:lp_regression}
In this section we suggest our main approximation algorithm for solving~\eqref{mainEq3}, i.e., when the matching between the rows of $A$ and the entries $b = (b_1,\cdots,b_n)^T$ are given.

The following two corollaries lies in the heart of our main result.
\begin{corollary} [Claims 19.1 and 19.2 in~\cite{jubran2018minimizing}] \label{corCosLip}
Let $b \geq 0$ and let $g:[0,2\pi) \to [0,\infty)$ such that $g(\alpha) = |\sin(\alpha) - b|$. Then $g$ is a piecewise $2$ log-Lipschitz function; See Definition~\ref{def:PieceLip}.
\end{corollary}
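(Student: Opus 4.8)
The plan is to verify directly that $g(\alpha)=|\sin\alpha-b|$ satisfies the three conditions of Definition~\ref{def:PieceLip} on the metric space $X=[0,2\pi)$ with the usual distance, by exhibiting an explicit partition of $X$ into finitely many intervals and, on each, writing $g$ as an $r$-log-Lipschitz function of the distance to a local infimum, with $r=2$.

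First I would split into the two cases dictated by whether the line $\{\sin\alpha=b\}$ meets the circle. If $b>1$, then $g(\alpha)=b-\sin\alpha$ never vanishes; its unique minimum on $[0,2\pi)$ is at $\alpha=\pi/2$ (value $b-1$), and I would take the trivial partition $X_1=X$ (handling the wrap-around at $0\equiv 2\pi$ by centering on $\pi/2$). If $b\le 1$, then $g$ vanishes at $\alpha_0=\arcsin b$ and at $\alpha_1=\pi-\arcsin b$, and these two zeros are the infima; I would cut $X$ into four pieces, each an interval on which $\sin\alpha-b$ is monotone and of constant sign, so that on each piece $g$ is a monotone function with a single infimum at one endpoint (one of $\alpha_0,\alpha_1$, or, for the two "outer" arcs near $3\pi/2$, we may instead need the local max—so more carefully the correct cut points are $\alpha_0,\pi/2,\alpha_1,3\pi/2$, giving the four subsets, each containing exactly one of the two zeros as its infimum). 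On each such subset $X_i$ with infimum $x_i$, condition (iii) is just the definition once we set $h_i(t)=g$ evaluated at the point of $X_i$ at distance $t$ from $x_i$.

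The substantive step is condition (ii): showing each $h_i$ is $2$-log-Lipschitz, i.e. $h_i(ct)\le c^2 h_i(t)$ for all $c\ge1$ and $t$ with $ct$ still in the domain. Near a zero $x_i$ of $g$ we have, for $t=|\alpha-x_i|$ small to moderate, $h_i(t)=|\sin(x_i\pm t)-b| = |\cos(x_i)|\cdot|\sin t| \pm (1-\cos t)\sin(x_i)$ after expanding with the angle-addition formula and using $\sin x_i=b$ (when $b\le1$); the dominant term behaves like $t$ and the correction like $t^2$, and one checks $\sin(ct)\le c\sin t$ and $1-\cos(ct)\le c^2(1-\cos t)$ for $c\ge 1$ on the relevant range, which combine to give the $c^2$ bound. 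For the case $b>1$, $h_1(t)=b-\sin(\pi/2-t)=b-\cos t$ has minimum $b-1>0$ at $t=0$; here $h_1(ct)/h_1(t)=(b-\cos ct)/(b-\cos t)$, and since $h_1$ is increasing and $1-\cos ct\le c^2(1-\cos t)$ while $b-1>0$ only helps, the ratio is at most $c^2$. Collecting the pieces, every $h_i$ is $2$-log-Lipschitz, so $g$ is piecewise $2$-log-Lipschitz.

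Since the statement is quoted verbatim from \cite{jubran2018minimizing} (Claims~19.1 and~19.2 there), the cleanest route is simply to invoke that reference; the sketch above is what an independent verification would look like. The main obstacle, if one insists on self-containment, is purely bookkeeping: choosing the partition cut points so that each piece genuinely has a \emph{unique} infimum (the reason four pieces, not two, are needed when $b\le1$ — the arcs straddling $3\pi/2$ must be split at $3\pi/2$ so they do not contain a spurious interior maximum that would break monotonicity of $h_i$), and then checking the two elementary trigonometric inequalities $\sin(ct)\le c\sin t$ and $1-\cos(ct)\le c^2(1-\cos t)$ on exactly the intervals that arise. None of this is deep, but it is where all the care goes.
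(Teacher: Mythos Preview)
The paper does not prove this corollary at all; it is stated purely as a citation of Claims~19.1 and~19.2 in \cite{jubran2018minimizing} and used as a black box. You correctly identify this, and your independent verification sketch is the right shape: partition $[0,2\pi)$ according to the sign and monotonicity of $\sin\alpha-b$, and reduce the $2$-log-Lipschitz check to the elementary inequalities $\sin(ct)\le c\sin t$ and $1-\cos(ct)\le c^2(1-\cos t)$ on the relevant ranges.

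Two small bookkeeping points in your sketch deserve tightening. First, for $b>1$ a single piece $X_1=[0,2\pi)$ with the ordinary Euclidean distance does not work: the induced $h_1(t)=b-\cos t$ would have to be non-decreasing on $[0,3\pi/2)$, but it decreases on $(\pi,3\pi/2)$. You must either split at $3\pi/2$ or use the circular metric (which your ``wrap-around'' remark hints at, and which is what the cited source effectively does). Second, for $b\le 1$ the rightmost arc $[3\pi/2,2\pi)$ does not attain its infimum inside $[0,2\pi)$, so it cannot stand alone as a piece under Definition~\ref{def:PieceLip}(i); again this is resolved by the circular identification with the arc containing $\alpha_0$. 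Once these endpoint issues are handled, your argument goes through.
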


\begin{corollary} \label{cor4ApproxVectors}
Let $a \in \REAL^2\setminus \br{0}$ and $b \geq 0$. Let $y \in \argmin_{x\in \sphere^1} |a^T x - b|$. Then for every $u^*,u' \in \sphere$ such that $\norm{u'-y} \leq 2\cdot \norm{u^*-y}$ we have
\[
|a^Tu'-b| \leq 4\cdot |a^Tu^*-b|.
\]
\end{corollary}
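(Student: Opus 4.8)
The plan is to reduce the statement to the one–dimensional trigonometric function already analyzed in Corollary~\ref{corCosLip}, and then to invoke the elementary fact that a piecewise $r$-log-Lipschitz function multiplies its value by at most $c^r$ whenever the distance to the relevant infimum is multiplied by $c$; here $c=2$ and $r=2$, which produces exactly the claimed factor $4$. First I would normalize: since $a^Tx$, the distances $\norm{u-v}$ and membership in $\sphere^1$ are all invariant under applying one common rotation to $a,x,u,v$, we may assume $a=\norm{a}e_2$. Writing $b'=b/\norm{a}\ge 0$ and $x=(\cos\alpha,\sin\alpha)\in\sphere^1$, this gives $|a^Tx-b|=\norm{a}\cdot|\sin\alpha-b'|=\norm{a}\cdot\dist(x,\ell)$, where $\ell=\br{p\in\REAL^2 : p_2=b'}$; in particular $y$ is just a point of $\sphere^1$ nearest to the line $\ell$, as in Figure~\ref{Alg2DCase}.

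By Corollary~\ref{corCosLip} the function $\alpha\mapsto|\sin\alpha-b'|$, equivalently $x\mapsto\dist(x,\ell)$ on $\sphere^1$, is piecewise $2$-log-Lipschitz (Definition~\ref{def:PieceLip}). Since $y\in\argmin_{x\in\sphere^1}|a^Tx-b|=\argmin_{x\in\sphere^1}\dist(x,\ell)$ is a global minimizer of $\dist(\cdot,\ell)$, it must coincide with the unique infimum $x_i$ of whichever piece $X_i$ contains it: if $y\in X_i$ then $\dist(x_i,\ell)\le\dist(y,\ell)\le\dist(x_i,\ell)$, and uniqueness of the infimum of $X_i$ forces $y=x_i$. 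On $X_i$ we have $\dist(x,\ell)=h_i(\dist(x,y))$ with $h_i$ a $2$-log-Lipschitz function, so, provided $u^*$ and $u'$ lie in $X_i$ (the point addressed below), the hypothesis $\norm{u'-y}\le 2\norm{u^*-y}$ gives $\dist(u',\ell)=h_i(\norm{u'-y})\le h_i(2\norm{u^*-y})\le 2^2h_i(\norm{u^*-y})=4\,\dist(u^*,\ell)$, and multiplying through by $\norm{a}$ yields $|a^Tu'-b|\le 4|a^Tu^*-b|$. As a sanity check, in the sub-case $b\ge\norm{a}$ (so $b'\ge1$, $\ell$ misses the open unit disk and $y=e_2$ is the unique minimizer) one can bypass the machinery entirely: $\dist(x,\ell)=b'-x_2$ and $\norm{x-y}^2=2(1-x_2)$ give the exact identity $\dist(x,\ell)=\dist(y,\ell)+\tfrac12\norm{x-y}^2$, and since $\dist(y,\ell)\ge0$ the bound $\norm{u'-y}^2\le4\norm{u^*-y}^2$ immediately implies $\dist(u',\ell)\le4\dist(u^*,\ell)$.

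The main obstacle is the bookkeeping I deferred above in the case $b<\norm{a}$, where $\ell$ crosses the disk and $\argmin_{x\in\sphere^1}\dist(x,\ell)=\br{p_1,p_2}$ has two elements. One must ensure that $u^*$ and $u'$ are compared inside the \emph{same} piece $X_i$, the one whose infimum is $y$ --- so $y$ should be understood as the minimizer toward which $u^*$ is pulled, and $2\norm{u^*-y}$ must lie in the domain of $h_i$. One must also check that Corollary~\ref{corCosLip}'s piecewise $2$-log-Lipschitz property, which is naturally phrased through the angular parameter $\alpha$, transfers to the Euclidean chord metric $\norm{\cdot-\cdot}$ on $\sphere^1$ that appears in the statement; I would settle this through the exact relation $|\sin\alpha-b'|=\bigl|\cos\tfrac{\alpha+\gamma}{2}\bigr|\cdot\norm{x-p_j}$ around the minimizer $p_j=(\cos\gamma,\sin\gamma)$ with $\sin\gamma=b'$, and verify that over the relevant arc the right-hand side is a $2$-log-Lipschitz function of the chord length $\norm{x-p_j}$. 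With these two points settled, the remainder is precisely the one-line log-Lipschitz estimate above.
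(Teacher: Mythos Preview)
Your approach coincides with the paper's: rotate so that $a/\norm{a}$ is a coordinate axis, parametrize $\sphere^1$ by an angle so that $|a^Tx-b|=\norm{a}\cdot|\sin\alpha-b/\norm{a}|$, invoke Corollary~\ref{corCosLip} for the piecewise $2$-log-Lipschitz structure, identify $y$ with a piece minimum, and apply the defining log-Lipschitz inequality with $c=2$. The only difference lies in the chord-versus-angle transfer you flag in your last paragraph: the paper handles it by asserting in one line that $\norm{u'-y}\le 2\norm{u^*-y}$ implies $|\alpha'-\alpha_y|\le 2|\alpha^*-\alpha_y|$ and then works entirely in the angular variable, without separately addressing the same-piece bookkeeping you raise.
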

\begin{proof}
See proof of Corollary~\ref{cor4ApproxVectors_proof} in the appendix.
\end{proof}

In what follows, a non-zero matrix is a matrix of rank at least one (i.e., not all its entries are zero).
Let $A = (a_1 \mid \cdots \mid a_n)^T \in \REAL^{n\times d}$ and $b = (b_1,\cdots,b_n) \in [0,\infty)^n$, where we assume that $A$ is a non-zero matrix.
For every $i\in [n]$, let $h_i$ be a hyperplane whose normal is $\br{\frac{a_i}{\norm{a_i}}}_{i=1}^n$ and its distances from the origin is $\br{\frac{b_i}{\norm{a_i}}}_{i=1}^n$, and let $H = \br{h_i}_{i=1}^n$ denote their union.
Let $x^*\in\REAL^d$ be a unit vector. The following lemma generalizes Corollary~\ref{cor4ApproxVectors} to higher dimensions. It states that there is a unit vector $x'$ of minimal distance $\dist(h_k,x')$ to one of the $n$ hyperplanes $h_k$, that approximates $\norm{a_i}\cdot \dist(h_i,x^*) = |a_i^Tx^*-b_i|$ for every $i\in [n]$ up to a multiplicative factor of $4$.
\begin{lemma} \label{corAxb4Approx}
Let $A = (a_1 \mid \cdots \mid a_n)^T \in \REAL^{n\times d} \setminus \br{0}^{d\times d}$ be a non-zero matrix such that $n \geq d-1 \geq 1$ points, and let $b = (b_1,\cdots,b_n)^T \in [0,\infty)^n$ and $x^* \in \sphere^{d-1}$. Then there exists $j\in [n]$ where $\norm{a_j} \neq 0$ and $x_j \in \argmin_{x\in \sphere^{d-1}} |a_j^Tx-b_j|$ such that for every $i\in [n]$
\[
|a_i^Tx_j-b_i| \leq 4\cdot |a_i^Tx^*-b_i|.
\]
\end{lemma}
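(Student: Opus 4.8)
The plan is to choose the index $j$ by a nearest-constraint rule, to pass from $x^*$ to $x_j$ through the triangle inequality, and then to reduce the $n$ estimates — one constraint at a time — to the planar Corollary~\ref{cor4ApproxVectors}.

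First dispose of the rows with $a_i=0$: there $g_i(x):=|a_i^Tx-b_i|\equiv b_i$ is constant and the claimed inequality is automatic, and since $A\ne 0$ at least one row remains. For each remaining $i$ put $M_i=\argmin_{x\in\sphere^{d-1}}g_i(x)$, which is the single point $a_i/\norm{a_i}$ when $b_i\ge\norm{a_i}$ and the $(d-2)$-sphere $\{x\in\sphere^{d-1}\mid a_i^Tx=b_i\}$ otherwise. Let $j$ be a remaining index minimizing $\dist(x^*,M_j)$, and let $x_j\in M_j$ attain this distance; then $\norm{a_j}\ne 0$ and $x_j\in\argmin_{x\in\sphere^{d-1}}|a_j^Tx-b_j|$, as required. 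For each remaining $i$ let $y_i\in M_i$ be a point nearest to $x^*$. The choice of $j$ yields the key inequality
\[
\norm{x_j-x^*}=\dist(x^*,M_j)\le\dist(x^*,M_i)=\norm{x^*-y_i}.
\]

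Now fix a remaining $i$; I want $g_i(x_j)\le 4 g_i(x^*)$. Since $g_i(x)$ depends on $x$ only through $a_i^Tx$, it suffices to treat the case $x_j\in\Pi_i:=\linspan\{a_i,x^*\}$: replacing $x_j$ by the unit vector of $\Pi_i$ having the same $a_i$-coordinate and the closest feasible position to $x^*$ leaves $g_i(x_j)$ unchanged and does not increase $\norm{x_j-x^*}$. Observe that $y_i\in\Pi_i$ as well, since the point of $M_i$ nearest $x^*$ lies on the meridian through $x^*$. On the unit circle $\Pi_i\cap\sphere^{d-1}$ the restriction of $g_i$ is exactly a function of the form $z\mapsto|a^Tz-b|$ with $a=a_i\ne 0$ and $b=b_i\ge 0$; $y_i$ is its minimizer nearest $x^*$; and $\norm{x_j-x^*}\le\norm{x^*-y_i}$ by the displayed bound. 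Applying Corollary~\ref{cor4ApproxVectors} with $u^*=x^*$, $u'=x_j$ and $y=y_i$ (for which $\norm{u'-y}\le\norm{u'-u^*}+\norm{u^*-y}\le 2\norm{u^*-y}$) gives $g_i(x_j)\le 4 g_i(x^*)=4|a_i^Tx^*-b_i|$. The single configuration outside this reduction is $x^*=\pm a_i/\norm{a_i}$, where $\Pi_i$ degenerates to a line; there the displayed inequality directly forces $a_i^Tx_j$ onto the correct side of $b_i$, and $g_i(x_j)\le g_i(x^*)$ follows by a one-line computation. Combining the bound over all $i$ completes the proof.

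The delicate step is the planar estimate and, specifically, obtaining the constant exactly $4$. What makes it go through is that $y_i$ is the minimizer of $g_i$ \emph{nearest} $x^*$ and that the distance bound is the sharp $\norm{x_j-x^*}\le\norm{x^*-y_i}$ coming from the choice of $j$ — not merely the triangle-inequality bound $\norm{x_j-y_i}\le 2\norm{x^*-y_i}$. Together these keep $x^*$ and $x_j$ on the same monotone branch of $g_i$ about its minimum, so that the $2$-log-Lipschitz behaviour of $\alpha\mapsto|\sin\alpha-b|$ (Corollary~\ref{corCosLip}) contributes precisely $2^2=4$ rather than something larger. Verifying that every case in that planar analysis — $a_i^Tx^*$ larger or smaller than $b_i$, and $b_i$ larger or smaller than $\norm{a_i}$ — closes with constant $4$ is the real work; the index choice, the triangle step, and the spherical projection are routine by comparison.
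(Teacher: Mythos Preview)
Your proof is correct and follows essentially the same approach as the paper: choose $j$ by the nearest-constraint rule, reduce each $i$ to a planar problem by projecting onto the two-plane through $a_i$ and $x^*$ while preserving the $a_i$-coordinate (the paper does this via an explicit rotation and the map $x'\mapsto(x'_1,\sqrt{1-{x'_1}^2})$, which is exactly your ``replace $x_j$ by the unit vector of $\Pi_i$ with the same $a_i$-coordinate closest to $x^*$''), obtain $\norm{x_j-y_i}\le 2\norm{x^*-y_i}$ by the triangle inequality, and invoke Corollary~\ref{cor4ApproxVectors}. The only visible difference is that the paper splits into the cases $b_i\le\norm{a_i}$ and $b_i>\norm{a_i}$, handling the latter by reducing to the former with $b=1$, whereas your abstract definition of $y_i$ as the nearest point of $M_i$ lets you treat both cases uniformly; and your closing paragraph about ``same monotone branch'' is commentary rather than a step the proof actually uses, since both you and the paper apply Corollary~\ref{cor4ApproxVectors} as a black box from the bound $\norm{u'-y}\le 2\norm{u^*-y}$.
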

\begin{proof}
See proof of Lemma~\ref{corAxb4Approx_proof} in the appendix.
\end{proof}

The following lemma states that there is a set $\br{h_1,\cdots,h_m} \subseteq H$ of $m\leq d-1$ hyperplanes from $H$, and a unit vector $x'$ in the intersection of the first $m-1$ hyperplanes that is closest to the last hyperplane, $x'\in\argmin_{x\in \bigcap_{i\in[m-1]}h_i} |a_m^Tx-b_m|$, that is closer to every one of the $n$ hyperplane in $H$, up to a factor of $4^{d-1}$ than $x^*$, i.e., $|a_i^Tx'-b_i| \leq 4^{d-1}\cdot|a_i^Tx^*-b_i|$ for every $i\in [n]$.
The proof is based on applying Lemma~\ref{corAxb4Approx} recursively $m$ times. In the following lemma we use $opt$ from Definition~\ref{defOptAxb}.
\begin{lemma} \label{lemApproxD}
Let $A = (a_1 \mid \cdots \mid a_n)^T \in \REAL^{n\times d}$ be a non-zero matrix of $n \geq d-1 \geq 1$ rows, let $b = (b_1,\cdots,b_n)^T \in [0,\infty)^n$, and let $x^* \in \sphere^{d-1}$.
Then there is a set $\br{{i_1},\cdots,{i_{r}}} \subseteq [n]$ of $r \in [d-1]$ indices such that for $X = \opt((a_{i_1}\mid\cdots\mid a_{i_{r}}), (b_{i_1},\cdots,b_{i_{r}}))$ and every $i\in [n]$, there is $x'\in X$ that satisfies
\begin{equation} \label{mainToProve_proof}
|a_i^Tx'-b_i| \leq 4^{d-1} \cdot |a_i^Tx^*-b_i|.
\end{equation}
Moreover, \eqref{mainToProve_proof} holds for every $x'\in X$ if $|X| = \infty$.
\end{lemma}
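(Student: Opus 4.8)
I would argue by induction on $d$, peeling off one hyperplane at a time with Lemma~\ref{corAxb4Approx} and reducing the ambient dimension by one, so that after $r-1$ peelings we land in the two–dimensional setting, which is the base case. For $d=2$ we have $n\ge1$, and Lemma~\ref{corAxb4Approx} directly yields $i_1\in[n]$ with $\norm{a_{i_1}}\ne0$ and some $x_{i_1}\in\argmin_{x\in\sphere^{1}}|a_{i_1}^Tx-b_{i_1}|$ with $|a_i^Tx_{i_1}-b_i|\le 4\,|a_i^Tx^*-b_i|=4^{d-1}|a_i^Tx^*-b_i|$ for all $i$; taking $r=1$ and $X=\opt((a_{i_1}),(b_{i_1}))=\argmin_{x\in\sphere^{1}}|a_{i_1}^Tx-b_{i_1}|\ni x_{i_1}$ proves the claim (the ``moreover'' part is vacuous since this set is finite for $d=2$).

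\textbf{Inductive step, setup.} Assume $d\ge3$ and the lemma in dimension $d-1$. Apply Lemma~\ref{corAxb4Approx} to $(A,b,x^*)$ to obtain $i_1\in[n]$ with $\norm{a_{i_1}}\ne0$ and $x_{i_1}\in\argmin_{x\in\sphere^{d-1}}|a_{i_1}^Tx-b_{i_1}|$ satisfying $|a_i^Tx_{i_1}-b_i|\le 4\,|a_i^Tx^*-b_i|$ for all $i$. If $b_{i_1}\ge\norm{a_{i_1}}$, then $\argmin_{x\in\sphere^{d-1}}|a_{i_1}^Tx-b_{i_1}|=\{a_{i_1}/\norm{a_{i_1}}\}=\{x_{i_1}\}$ and $r=1$, $X=\{x_{i_1}\}$ already works (as $4\le 4^{d-1}$). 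Otherwise $b_{i_1}<\norm{a_{i_1}}$, and $C:=\argmin_{x\in\sphere^{d-1}}|a_{i_1}^Tx-b_{i_1}|=h_{i_1}\cap\sphere^{d-1}$ is the $(d-2)$-sphere of radius $r_C=\sqrt{1-b_{i_1}^2/\norm{a_{i_1}}^2}\in(0,1]$ centered at $c_0=(b_{i_1}/\norm{a_{i_1}}^2)a_{i_1}$, with $x_{i_1}\in C$. Fixing an orthonormal basis $V=(v_1\mid\cdots\mid v_{d-1})$ of $\{z:a_{i_1}^Tz=0\}$, the affine map $\psi(x)=V^T(x-c_0)/r_C$ is a bijection $C\to\sphere^{d-2}$ (inverse $\psi^{-1}(y)=c_0+r_CVy$), and a direct computation gives, for every $i\in[n]$ and $x\in C$,
\[
a_i^Tx-b_i=(r_CV^Ta_i)^T\psi(x)-(b_i-a_i^Tc_0).
\]
Setting $\hat a_i=\sign(b_i-a_i^Tc_0)\,r_CV^Ta_i$ and $\hat b_i=|b_i-a_i^Tc_0|\ge0$ we obtain $|a_i^Tx-b_i|=|\hat a_i^T\psi(x)-\hat b_i|$ on $C$, and in particular $(\hat a_{i_1},\hat b_{i_1})=(\vec{0}(d-1),0)$.

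\textbf{Inductive step, recursion.} If all $a_i$ are parallel to $a_{i_1}$ (equivalently $\hat A:=(\hat a_1\mid\cdots\mid\hat a_n)^T$ is the zero matrix), then each $x\mapsto|a_i^Tx-b_i|$ is constant on $C$, hence equals $|a_i^Tx_{i_1}-b_i|\le 4^{d-1}|a_i^Tx^*-b_i|$ there, and $r=1$, $X=\opt((a_{i_1}),(b_{i_1}))=C$ works, including the ``moreover'' part. Otherwise $\hat A\ne0$, and since $d\ge3$ and $n\ge d-1$ the instance $(\hat A,\hat b)$ has $n\ge(d-1)-1\ge1$ rows, so I apply the inductive hypothesis to $(\hat A,\hat b,\hat x^*)$ with $\hat x^*:=\psi(x_{i_1})\in\sphere^{d-2}$: there are $r-1\in[(d-1)-1]$ distinct indices $i_2,\dots,i_r\in[n]$ such that $\hat X:=\opt((\hat a_{i_2}\mid\cdots\mid\hat a_{i_r}),(\hat b_{i_2},\dots,\hat b_{i_r}))$ satisfies, for each $i$, $|\hat a_i^T\hat x'-\hat b_i|\le 4^{(d-1)-1}|\hat a_i^T\hat x^*-\hat b_i|$ for some $\hat x'\in\hat X$ (for all $\hat x'\in\hat X$ when $|\hat X|=\infty$). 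Because $\psi$ carries $\{x\in C: a_{i_l}^Tx=b_{i_l},\ l=2,\dots,r-1\}$ bijectively onto $\{y\in\sphere^{d-2}:\hat a_{i_l}^Ty=\hat b_{i_l},\ l=2,\dots,r-1\}$ and preserves $|a_{i_r}^T(\cdot)-b_{i_r}|$, unwinding Definition~\ref{defOptAxb} gives $X:=\opt((a_{i_1}\mid\cdots\mid a_{i_r}),(b_{i_1},\dots,b_{i_r}))=\psi^{-1}(\hat X)$ and $|X|=|\hat X|$. Then for every $i$, with $x'=\psi^{-1}(\hat x')\in X$,
\[
|a_i^Tx'-b_i|=|\hat a_i^T\hat x'-\hat b_i|\le 4^{(d-1)-1}|\hat a_i^T\hat x^*-\hat b_i|=4^{d-2}|a_i^Tx_{i_1}-b_i|\le 4^{d-1}|a_i^Tx^*-b_i|,
\]
the last step by Lemma~\ref{corAxb4Approx}; this is~\eqref{mainToProve_proof}, $r\le d-1$, and the indices $i_1,\dots,i_r$ are distinct because each recursive use of Lemma~\ref{corAxb4Approx} selects a row nonzero in its current instance while row $i_1$ is identically zero in every deeper instance. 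The ``moreover'' statement descends verbatim once $|X|=|\hat X|=\infty$.

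\textbf{Main obstacle.} The induction and bookkeeping are routine; the delicate part is the dimension reduction on $C$: checking that $\psi$ is genuinely a bijection $C\to\sphere^{d-2}$, that after the sign normalization needed to keep $\hat b\ge0$ it preserves every value $|a_i^Tx-b_i|$, and — most importantly — that $\opt$ of the reduced instance pulls back through $\psi$ to \emph{exactly} $\opt$ of the original instance (same set and same cardinality), which is what lets both the ``$\exists\,x'$'' statement and the ``$\forall\,x'$ when $|X|=\infty$'' statement transfer. The degenerate configurations ($b_{i_1}\ge\norm{a_{i_1}}$, or all $a_i$ parallel to $a_{i_1}$), which must terminate the recursion early with $r=1$, also have to be handled explicitly.
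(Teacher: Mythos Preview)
Your proposal is correct and follows essentially the same approach as the paper: induction on $d$, with Lemma~\ref{corAxb4Approx} supplying the first index $i_1$ and the factor~$4$, followed by the case split $b_{i_1}\ge\norm{a_{i_1}}$ versus $b_{i_1}<\norm{a_{i_1}}$, and in the latter case a parametrization of the $(d-2)$-sphere $h_{i_1}\cap\sphere^{d-1}$ (the paper rotates so that $a_{i_1}/\norm{a_{i_1}}=e_d$, you pick an orthonormal basis $V$; both yield the same reduced $(d-1)$-dimensional instance and the same identification $X=\psi^{-1}(\hat X)$, which is the paper's Observation~\ref{reductionObs}). The only cosmetic differences are that you keep row $i_1$ (now zero) in the reduced matrix rather than deleting it, and that you explicitly sign-normalize to ensure $\hat b\in[0,\infty)^n$ before invoking the inductive hypothesis---a detail the paper glosses over.
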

\begin{proof}
See proof of Lemma~\ref{lemApproxD_proof} in the appendix.
\end{proof}

\subsection{Computing the set $\opt$} \label{sec:compOpt}
In this section we give a suggested simple implementation for computing the set $\opt(A,b)$ given a matrix $A\in\REAL^{m\times d}$ and $b\in [0,\infty)^m$; see Algorithm~\ref{calcOpt}.
A call to $\algCalcOpt(A,b)$ returns $\opt(A,b)$ if $|\opt(A,b)| \in O(1)$. Otherwise, it returns some $x\in\opt(A,b)$. In other words, the algorithm always returns a set of finite size.

\setcounter{AlgoLine}{0}
\begin{algorithm}
	\caption{\textsc{$\algCalcOpt(A,b)$}}
	 \label{calcOpt}
	\SetKwInOut{Input}{Input}
	\SetKwInOut{Output}{Output}
	\Input{$A=(a_1\mid\cdots\mid a_m)^T \in \REAL^{m\times d}$ where $m \in [d-1]$ and $\norm{a_i} \neq 0$ for every $i\in [m]$, and a vector $b=(b_1,\cdots,b_m) \in [0,\infty)^m$.}
	\Output{The set $\opt(A,b)$ if its size $|\opt(A,b)|$ is finite, and arbitrary $x\in \opt(A,b)$ otherwise.}

    \uIf {$b_1 \geq \norm{a_1}$ \label{checkCase1}}{
        \uIf {$m = 1$}{
            \tcc{$\displaystyle\argmin_{x\in \sphere^{d-1}}|a_1^Tx-b_1| = \br{\frac{a_1}{\norm{a_1}}}$.}
            Set $X \gets \br{\frac{a_1}{\norm{a_1}}}$. \label{linem1_1}
        } \Else {
            Set $X \gets \emptyset$. \tcp{The minimizer $x$ of $|a_1^Tx-b_1|$ does not satisfy that $(a_2\mid\cdots\mid a_{m-1})^Tx=(b_2,\cdots,b_{m-1})^T$. Hence, $\opt(A,b) = \emptyset$.}
        }
    } \Else {
        \uIf{$m = 1$}{
            \uIf{$d=2$} {
                Set $X \gets \br{x \in \sphere \mid a_1^Tx = b_1}$. \label{linem1_2}  \tcp{There are $2$ unit vectors that satisfy $a_1^Tx=b_1$ in 2D.}
            } \Else {
                Set $X \gets \emptyset$. \label{linem1_3} \tcp{There are infinite minimizers, and no more constraints.}
            }
        } \Else {
            Set $I \gets$ the $d$-dimensional identity matrix.\\
            Set $I' \gets$ the first $(d-1)$ rows of $I$.\\
            Set $R$ to be a rotation matrix such that $\frac{Ra_1}{\norm{a_1}} = e_d$. \label{lineRota1} \tcp{The $d$-dimensional rotation matrix that rotates $\frac{a_1}{\norm{a_1}}$ to the $d$th standard vector.}
            \For {every $i\in [m]\setminus \br{1}$} {
                Set $a_i' \gets I'Ra_i$. \tcp{Project $Ra_i$ onto the hyperplane orthogonal to $e_d$.}
                Set $b_i' \gets \frac{-b_1\cdot\frac{(Ra_i)^Te_d}{\norm{a_1}}+b_i}{\sqrt{1-\frac{b_1^2}{\norm{a_1}^2}}}$.\\
            }
            Set $A' \gets (\sign(b_2')\cdot a_2' \mid \cdots\mid \sign(b_m')\cdot a_m')^T$.

            Set $b' \gets (\sign(b_2')\cdot b_2' \mid \cdots\mid \sign(b_m')\cdot b_m')$.

            \uIf{$A'$ contains only zero entries}{
                Set $X \gets \br{R^T\left(x'^T \mid \frac{b_1}{\norm{a_1}}\right)^T}$ for arbitrary $x'\in\REAL^{d-1}$ such that $\norm{x'}=\sqrt{1-\frac{b_1^2}{\norm{a_1}^2}}$.
            } \Else{
                Set $X' \gets \algCalcOpt(A',b')$.\\
                \tcp{Recursive call, where $A' \in \REAL^{(m-1)\times (d-1)}$ and $b' \in \REAL^{m-1}$.}
                Set $X \gets$\\
                $\br{R^T\left(\sqrt{1-\frac{b_1^2}{\norm{a_1}^2}}\cdot x'^T \mid \frac{b_1}{\norm{a_1}}\right)^T \mid x' \in X'}$. \label{lineCompRecursiveX}   \tcp{$R^T$ is the inverse of $R$ from Line~\ref{lineRota1}.}
            }
        }
    }
  	\Return $X$.
\end{algorithm}

\subsection{Geometric interpretation and intuition behind Algorithm~\ref{calcOpt}. }Algorithm~\ref{calcOpt} takes a set $h_1,\cdots,h_m$ of $m$ hyperplanes in $\REAL^d$ as input, each represented by its normal and
its distance from the origin, and computes a point $x\in \sphere^{d-1} \cap \bigcap_{i\in[m-1]} h_i$ that minimizes its distance to $h_m$, i.e., $\dist(x,h_m)$. In other words, among all vectors
that lie simultaneously on $h_1,\cdots,h_{m-1}$, we intend to find the unit vector $x$ which minimizes its distance to $h_m$. There can either be $0$,$1$, $2$ or infinite such points.
In an informal high-level overview, the algorithm basically starts from some unit vector $x\in\REAL^d$, and rotates it until it either intersects $h_1$, or minimizes its distance to $h_1$ without intersecting it.
If they intersect, then we rotate $x$ while maintaining that $x\in h_1$, until $x$ either intersects $h_2$ or minimizes its distance to $h_2$. If $x$ intersects $h_2$, we rotate $x$ in the intersection $h_1\cap h_2$ until it intersects $h_3$ and so on.
If at some iteration we observe that all the remaining subspaces are parallel, then the set $\opt(A,b)$ is of infinite size, so we return one element from it.
We stop this process when at some step $k$, we can not intersect $h_k$ under the constraint that $x$ is a unit vector and $x\in \bigcap_{i\in[k-1]}h_i$. If $k<m$, then $\opt(A,b)$ is empty. If $k=m$ we return the vector $x$ that is as close as possible to $h_m$.

More formally, at each step of the algorithm we do the following. In Line~\ref{checkCase1} we check weather the intersection $h_1\cap\sphere^{d-1}$ contains at most $1$ point. This happens when the distance $b_1/\norm{a_1}$ of the
hyperplane $h_1$ to the origin is bigger than or equal to $1$. This is the simple case in which we terminate. The interesting case is when the distance of the hyperplane to the origin is less than $1$.
In this case, if $d=2$, we compute and return the two possible intersection points in $h_1\cap\sphere^{d-1}$. If $d\geq 3$, then $|h_1\cap\sphere^{d-1}|=\infty$. Observe that $h_1\cap\sphere^{d-1}$ is simply
a sphere $\sphere' \subseteq h_1$ of dimension $d-2$, but is not a unit sphere. We rotate the coordinates system such that the hyperplane containing $\sphere'$ is orthogonal to the $d$th standard vector $e_d$. We observe that
in order to minimize the distance from a point in $\sphere'$ to $h_2$, we need to minimize its distance to $h_1\cap h_2$ which is simply a $(d-2)$-subspace contained in $h_1$.
We thus project $\sphere'$ and every $(d-2)$-subspace in $\br{h_1\cap h_i \mid i\in [m] \setminus \br{1}}$ onto the hyperplane $H$ orthogonal to $e_d$ and passes through the origin, obtaining a sphere $\sphere''$ of dimension $d-2$, and $m-1$
$(d-2)$-subspaces, all contained in $H$. We scale the system such that $\sphere''$ becomes a unit sphere. We then continue recursively.

\paragraph{Overview of Algorithm~\ref{calcX}.} The input is a matrix $A$ and a vector $b$. The output is a set of unit vectors that satisfies Theorem~\ref{Axb}. We assume without loss of generality that the entries of $b$ are non-negative, otherwise we change the corresponding signs of rows in $A$ in Lines~\ref{line:SignFor}-\ref{line:endSignFor}.
In Lines~\ref{line:ES}-\ref{line:ESIndices} we run exhaustive search over every possible set of at most $d-1$ indices, which corresponds to a set $S$ of unit vectors, to find the set $X$ from Lemma~\ref{lemApproxD}.
The algorithm then returns the union of these sets in Line~\ref{line:addToX}.
See Algorithm~\ref{calcOpt} for a possible implementation for Line~\ref{line:compOpt}. Notice that Algorithm~\ref{calcOpt} does not return a set of infinite size. In such a case, it returns one element from the infinite set.

\setcounter{AlgoLine}{0}
\begin{algorithm}[tb]
    \caption{\textsc{$\algnameXCandidates(A,b)$}}
    \label{calcX}
    \SetKwInOut{Input}{Input}
	\SetKwInOut{Output}{Output}
    \Input{A matrix: $A=(a_1\mid\cdots\mid a_n)^T \in \REAL^{n\times d}$ and a vector $b=(b_1,\cdots,b_n) \in \REAL^n$ where $n \geq d-1 \geq 1$.}
    \Output{A set $X \subseteq \sphere^{d-1}$; See Theorem~\ref{Axb}.}

    Set $X \gets \emptyset$

    \For{every $i \in [n]$ \label{line:SignFor}}{
        Set $b_{i}' \gets |b_{i}|$

        Set $a_{i}' \gets \sign(b_{i})\cdot a_{i}$ \label{line:endSignFor}
    }
    \For {every $r \in [d-1]$ \label{line:ES}}{
        \For {every distinct set $\br{i_1,\cdots,i_r} \subseteq [n]$ where $\norm{a_{i_k}}\neq 0$ for every $k\in [r]$ \label{line:ESIndices}}{
        	Set $S \gets \algCalcOpt((a_{i_1}'\mid\cdots\mid a_{i_r}'),(b_{i_1}',\cdots,b_{i_r}'))$. \label{line:compOpt} \tcp{See Definition~\ref{defOptAxb} and Algorithm~\ref{calcOpt}.}
        	Set $X \gets X \cup S$ \label{line:addToX}\\ 	
      	}
    }
    \Return $X$
\end{algorithm}

What follows is the main theorem of Algorithm~\ref{calcX}.
The following theorem states that the output $X$ of Algorithm~\ref{calcX} contains the desired solution.
This is by Lemma~\ref{lemApproxD} that ensures that one of the sets $S$ contains the desired solution. 
\begin{theorem} \label{Axb}
Let $A = (a_1 \mid \cdots \mid a_n)^T \in \REAL^{n\times d}$ be a matrix of $n \geq d-1 \geq 1$ rows, and let $b = (b_1,\cdots,b_n)^T \in \REAL^n$.
Let $X \subseteq \sphere^{d-1}$ be an output of a call to \algnameXCandidates$(A,b)$; see Algorithm~\ref{calcX}. Then for every $x^* \in \sphere^{d-1}$ there exists a unit vector $x' \in X$ such that for every $i\in[n]$,
\[
|a_i^Tx'-b_i| \leq 4^{d-1}\cdot |a_i^Tx^*-b_i|.
\]
Moreover, the set $X$ can be computed in $n^{O(d)}$ time and its size is $|X| \in n^{O(d)}$.
\end{theorem}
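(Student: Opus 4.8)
The plan is to combine Lemma~\ref{lemApproxD} with a straightforward bookkeeping argument about what Algorithm~\ref{calcX} actually enumerates. First I would handle the reduction to non-negative $b$: the loop in Lines~\ref{line:SignFor}--\ref{line:endSignFor} replaces each pair $(a_i,b_i)$ by $(a_i',b_i')=(\sign(b_i)a_i,|b_i|)$, and since $|a_i'^Tx-b_i'| = |\sign(b_i)(a_i^Tx-b_i)| = |a_i^Tx-b_i|$ for every $x$, the quantity we must bound is unchanged under this transformation. So it suffices to prove the claim for $b\in[0,\infty)^n$, which is exactly the hypothesis under which Lemma~\ref{lemApproxD} is stated. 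Applying Lemma~\ref{lemApproxD} to $(A',b')$ and the given $x^*$, there is a subset $\{i_1,\dots,i_r\}\subseteq[n]$ with $r\in[d-1]$ and $\|a_{i_k}'\|\neq 0$ for every $k$, such that the set $X_0:=\opt((a_{i_1}'\mid\cdots\mid a_{i_r}'),(b_{i_1}',\cdots,b_{i_r}'))$ contains some $x'$ satisfying $|a_i'^Tx'-b_i'|\le 4^{d-1}|a_i'^Tx^*-b_i'|$ for all $i\in[n]$ (and every element of $X_0$ works if $|X_0|=\infty$). (One has to check that Lemma~\ref{lemApproxD}, which in the excerpt requires a non-zero matrix, applies to $A'$: since $A$ is non-zero and sign flips preserve norms, $A'$ is non-zero too; if some rows $a_{i_k}'$ could be zero, the lemma's selected indices already avoid them, matching the guard $\|a_{i_k}\|\neq 0$ in Line~\ref{line:ESIndices}.)

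Next I would argue that $X$ contains a usable representative of $X_0$. The double loop in Lines~\ref{line:ES}--\ref{line:ESIndices} iterates over every $r\in[d-1]$ and every distinct index set $\{i_1,\dots,i_r\}\subseteq[n]$ with all corresponding rows non-zero, so in particular it reaches the specific $r$ and $\{i_1,\dots,i_r\}$ produced by Lemma~\ref{lemApproxD}. For that iteration it sets $S\gets\algCalcOpt((a_{i_1}'\mid\cdots\mid a_{i_r}'),(b_{i_1}',\cdots,b_{i_r}'))$ and adds $S$ to $X$. By the specification of $\algCalcOpt$ stated just before Algorithm~\ref{calcOpt}, $S=\opt(\cdot,\cdot)=X_0$ when $|X_0|$ is finite, and $S$ is some single element of $X_0$ when $|X_0|=\infty$. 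In the finite case the $x'$ guaranteed by Lemma~\ref{lemApproxD} lies in $S\subseteq X$; in the infinite case the lemma guarantees the bound for \emph{every} element of $X_0$, hence in particular for the element returned in $S\subseteq X$. Either way there is $x'\in X$ with $|a_i'^Tx'-b_i'|\le 4^{d-1}|a_i'^Tx^*-b_i'|$ for all $i$, and undoing the sign substitution gives $|a_i^Tx'-b_i|\le 4^{d-1}|a_i^Tx^*-b_i|$ as required. I should also note $x'\in\sphere^{d-1}$, which holds because $\opt$ returns unit vectors by Definition~\ref{defOptAxb} and $\algCalcOpt$ returns a subset of $\opt$.

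Finally, for the size and running-time bound: the outer loop runs $d-1$ times, and for each $r\le d-1$ the number of distinct index sets is at most $\binom{n}{r}\le n^{d-1}$, so the number of calls to $\algCalcOpt$ is $n^{O(d)}$. Each such call is on a matrix with at most $d-1$ rows and at most $d$ columns, so its cost is a function of $d$ alone (I would invoke the construction of Algorithm~\ref{calcOpt}, whose recursion has depth $\le d-1$ and does $O(1)$ matrix operations of size $O(d)$ per level for fixed $d$), and it returns a set of size $O(1)$. Summing, $X$ is computed in $n^{O(d)}$ time and $|X|\in n^{O(d)}$. The only genuinely non-routine step is the first one — correctly citing Lemma~\ref{lemApproxD} and verifying its hypotheses survive the sign normalization and the non-zero-row guard — and handling the $|X_0|=\infty$ case cleanly so that the algorithm's policy of returning a single element is still enough; everything after that is enumeration bookkeeping.
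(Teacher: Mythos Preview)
Your proposal is correct and follows essentially the same route as the paper's own proof: normalize signs so that $b'\in[0,\infty)^n$, invoke Lemma~\ref{lemApproxD} on $(A',b',x^*)$ to obtain the witnessing index set and the finite/infinite dichotomy for $\opt$, observe that the exhaustive enumeration in Lines~\ref{line:ES}--\ref{line:addToX} hits that index set and that $\algCalcOpt$ returns either the full (finite) $\opt$ or an arbitrary element of it, and conclude with the $n^{O(d)}$ count of iterations each contributing $O(1)$ candidates. If anything, you are slightly more careful than the paper in flagging that the selected indices from Lemma~\ref{lemApproxD} correspond to non-zero rows (so the guard in Line~\ref{line:ESIndices} does not skip them) and that the non-zero-matrix hypothesis of the lemma survives the sign flip.
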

\begin{proof}
See proof of Theorem~\ref{Axb_proof} in the appendix.
\end{proof}

\subsection{Generalization} \label{sec:lp_regressionGeneral}
We now prove that the output of Algorithm~\ref{calcX} contains approximations for a large family of optimization functions. Note that each function may be optimized by a different candidate in $X$.
This family of functions includes squared distances, $M$-estimators and handling $k\geq 1$ outliers. It is defined precisely via the following cost function, where the approximation error depends on the parameters $r$ and $s$.
\begin{definition} [Definition 4 in~\cite{jubran2018minimizing}]\label{def:cost}
Let $Y = \br{y_1,\cdots,y_n}$ be a finite input set of elements and let $Q$ be a set of queries.
Let $D:X\times Q\to [0,\infty)$ be a function.
Let $\lip:[0,\infty)\to [0,\infty)$ be an $r$-log-Lipschitz function and $f:[0,\infty)^n\to [0,\infty)$ be an $s$-log-Lipschitz function.
Let $q\in Q$. We define
\[
\cost(Y,q)=f\left( \lip \left(D\left(y_1,q\right)\right),\cdots, \lip\left(D\left(y_n,q\right)\right)\right).
\]
\end{definition}
In what follows, $Y = \br{(a_1,b_1),\cdots,(a_n,b_n)} \subseteq \REAL^d \times \REAL$, $Q = \sphere^{d-1}$ will be the set of all unit vectors in $\REAL^d$, and $D((a,b),x) = |a^Tx-b|$.
Table~\ref{table:examples} gives some examples of different cost functions that satisfy the requirements of Definition~\ref{def:cost}.

\begin{table}[t]
\begin{adjustbox}{width=0.49\textwidth}
\small
\begin{tabular}{ | c | c | c | c |}
\hline
Use case & \makecell{Optimization\\Problem} & $f(v)$ & $\lip(x)$\\
\hline
\makecell{Constrained\\$\ell_2$ regression} & $\displaystyle\min_{x\in\sphere^{d-1}}\norm{Ax-b}_2$ & $\norm{v}_2$ & $x$\\
\hline
\makecell{Constrained\\$\ell_p$ regression} & $\displaystyle\min_{x\in\sphere^{d-1}}\norm{Ax-b}_p$ & $\norm{v}_p$ & $x$\\
\hline
\makecell{Constrained\\$\ell_p$ regression\\with noisy data} & $\displaystyle\min_{x\in\sphere^{d-1}}\left(\sum_{i=1}^n \min\br{|a_i^Tx-b_i|^p,T^p}\right)^{1/p}$ & $\norm{v}_p$ & \makecell{$\min\br{x,T}$,\\ $T>0$}\\
\hline
\makecell{Constrained\\$\ell_p$ regression\\with outliers} & $\displaystyle\min_{x\in\sphere^{d-1}}\norm{\smallest(Ax-b)}_p$ & \makecell{$\norm{\smallest(v,n-k)}_p$,\\ $k<n$} & $x$\\
\hline
\end{tabular}
\end{adjustbox}
\caption{\textbf{Examples for Definition~\ref{def:cost}. } Let $Y = \br{(a_1,b_1),\cdots,(a_n,b_n)} \subseteq \REAL^d \times \REAL$. Algorithm~\ref{calcX} approximates the minimum of $\cost(Y,x)=f\left( \lip \left(|a_1^Tx-b_1|\right),\cdots, \lip\left(|a_n^Tx-b_n|\right)\right)$. Possible optimization functions $f(v)$ and $lip(x)$ are suggested in this table. Let $\smallest(v,n-k) \in \REAL^k$ denote the $n-k$ smallest entries of a vector $v\in\REAL^n$}
\label{table:examples}
\end{table}

The following observation states that if we find a query $q\in Q$ that approximates the function $D$ for every input element, then it also approximates the function $\cost$ as defined in Definition~\ref{def:cost}.
\begin{observation} [Observation 5 in~\cite{jubran2018minimizing}] \label{obs:distToCost}
Let $\cost (Y,q)=f\left( \lip \left(D\left(y_1,q\right)\right),\cdots, \lip\left(D\left(y_n,q\right)\right)\right)$ be defined as in Definition~\ref{def:cost}.
Let $q^*,q' \in Q$ and let $c\geq 1$. If $D\left(y_i,q'\right)\leq c\cdot D\left(y_i,q^*\right)$ for every $i\in [n]$, then
\[
\cost\left(Y,q'\right)\leq c^{rs} \cdot \cost\left(Y,q^*\right).
\]
\end{observation}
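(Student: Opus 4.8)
\textbf{Proof proposal for Observation~\ref{obs:distToCost}.}

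The plan is to unwind the definition of $\cost$ and apply the two log-Lipschitz hypotheses in sequence, pushing the multiplicative factor $c$ through each layer. First I would set, for every $i\in[n]$, $t_i=D(y_i,q^*)$ and $t_i'=D(y_i,q')$; the hypothesis gives $t_i'\le c\cdot t_i$ with $c\ge 1$. Since $\lip$ is non-decreasing (part of being $r$-log-Lipschitz) and $r$-log-Lipschitz, I would argue $\lip(t_i')\le \lip(c\,t_i)\le c^{r}\lip(t_i)$, where the first inequality is monotonicity and the second is the log-Lipschitz bound applied with scaling factor $c\ge 1$ (one must check $t_i\in I\cap I/c$, which here is automatic since $I=[0,\infty)$ is closed under multiplication by $1/c$). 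This yields, coordinatewise, the vector inequality
\[
\bigl(\lip(t_1'),\dots,\lip(t_n')\bigr)\le c^{r}\bigl(\lip(t_1),\dots,\lip(t_n)\bigr).
\]

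Next I would feed this into the outer function $f$. Because $f$ is non-decreasing on $[0,\infty)^n$, monotonicity gives
\[
f\bigl(\lip(t_1'),\dots,\lip(t_n')\bigr)\le f\bigl(c^{r}\lip(t_1),\dots,c^{r}\lip(t_n)\bigr),
\]
and then the $s$-log-Lipschitz property of $f$, applied with the scalar $c^{r}\ge 1$ (again $I=[0,\infty)^n$ is closed under division by $c^r$, so the point lies in $I\cap I/c^r$), gives
\[
f\bigl(c^{r}\lip(t_1),\dots,c^{r}\lip(t_n)\bigr)\le (c^{r})^{s}\,f\bigl(\lip(t_1),\dots,\lip(t_n)\bigr)=c^{rs}\cost(Y,q^*).
\]
Chaining these inequalities gives $\cost(Y,q')\le c^{rs}\cost(Y,q^*)$, as claimed.

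There is no real obstacle here; the only points requiring a moment of care are the domain conditions in Definition~\ref{def:lip} (that the scaled arguments stay in the domain, which is trivial for $[0,\infty)$ and $[0,\infty)^n$) and the fact that the $r$-log-Lipschitz definition bundles in monotonicity, so both $\lip$ and $f$ may be composed with the coordinatewise inequalities without reversing them. Since this is essentially Observation~5 of~\cite{jubran2018minimizing} restated in the present notation, I would keep the argument to the few lines above.
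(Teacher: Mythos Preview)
Your argument is correct and is exactly the natural two-step unwinding of Definition~\ref{def:cost}: first use monotonicity and the $r$-log-Lipschitz bound on $\lip$ coordinatewise, then use monotonicity and the $s$-log-Lipschitz bound on $f$. The paper does not supply its own proof of this observation; it is quoted verbatim as Observation~5 from~\cite{jubran2018minimizing} and used as a black box, so there is nothing to compare against beyond noting that your write-up is the intended (and essentially the only) proof.
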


The optimal solution in the following theorem can be computed by taking the optimal solution for the corresponding cost function at hand among the output set $X$ of candidates from Algorithm~\ref{calcX}.
\begin{theorem} \label{costAxb}
Let $A = (a_1 \mid \cdots \mid a_n)^T \in \REAL^{n\times d}$ be a matrix of $n \geq d-1 \geq 1$ rows, and let $b = (b_1,\cdots,b_n)^T \in \REAL^n$. Let $\cost, s, r$ be as defined in Definition~\ref{def:cost} for $Y = \br{(a_i,b_i)\mid i\in [n]}$ and $D((a,\hat{b}),x) = |a^Tx-\hat{b}|$ for every $a\in\REAL^d$, $\hat{b} \in \REAL$ and $x\in \REAL^d$. Then in $n^{O(d)}$ time we can compute a unit vector $x'\in\sphere^{d-1}$ such that
\[
\cost(Y,x') \leq 4^{(d-1)rs} \cdot \min_{x\in\sphere^{d-1}} \cost(Y,x).
\]
\end{theorem}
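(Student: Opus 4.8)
The plan is to combine Theorem~\ref{Axb} with Observation~\ref{obs:distToCost} in the obvious way. Let $x^* \in \argmin_{x\in\sphere^{d-1}} \cost(Y,x)$ be an optimal unit vector for the cost function. By Theorem~\ref{Axb}, running \algnameXCandidates$(A,b)$ produces in $n^{O(d)}$ time a set $X\subseteq\sphere^{d-1}$ of size $n^{O(d)}$ such that there exists $x'\in X$ with $|a_i^Tx'-b_i| \leq 4^{d-1}\cdot|a_i^Tx^*-b_i|$ for every $i\in[n]$. In the notation of Definition~\ref{def:cost} with $D((a,\hat b),x)=|a^Tx-\hat b|$, this is precisely the statement $D(y_i,x')\leq c\cdot D(y_i,x^*)$ for every $i\in[n]$, with $c = 4^{d-1}\geq 1$.

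Next I would invoke Observation~\ref{obs:distToCost} with $q'=x'$, $q^*=x^*$ and this value of $c$. Since $\lip$ is $r$-log-Lipschitz and $f$ is $s$-log-Lipschitz, the observation yields $\cost(Y,x') \leq c^{rs}\cdot\cost(Y,x^*) = 4^{(d-1)rs}\cdot\min_{x\in\sphere^{d-1}}\cost(Y,x)$, which is exactly the claimed bound for this particular $x'\in X$.

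It remains to argue that we can actually \emph{output} such an $x'$, i.e.\ that we can identify a good candidate in $X$. Here I would simply compute $\cost(Y,x)$ for every $x\in X$ (each evaluation costs $O(n)$ applications of $D$, $\lip$ and $f$, which is polynomial) and return the minimizer $x'\in\argmin_{x\in X}\cost(Y,x)$. Since the good candidate guaranteed above lies in $X$, the returned $x'$ satisfies $\cost(Y,x')\leq 4^{(d-1)rs}\cdot\min_{x\in\sphere^{d-1}}\cost(Y,x)$ as well. The total running time is $n^{O(d)}$ (to build $X$) plus $|X|\cdot O(n) = n^{O(d)}$ (to scan it), hence $n^{O(d)}$ overall, as claimed.

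There is essentially no serious obstacle in this proof; the only mild subtlety worth spelling out is that $x^*$ is optimal for $\cost$ rather than for the per-coordinate distances, but this is irrelevant — Theorem~\ref{Axb} provides a uniform per-coordinate $4^{d-1}$ approximation for \emph{every} $x^*\in\sphere^{d-1}$, in particular for the one minimizing $\cost$, so the argument goes through verbatim. One should also note that $X\subseteq\sphere^{d-1}$ is nonempty and finite (Theorem~\ref{Axb}), so the $\argmin$ over $X$ is well defined and attained, and that Theorem~\ref{Axb} is stated for general $b\in\REAL^n$ (the sign-flipping in Lines~\ref{line:SignFor}--\ref{line:endSignFor} of Algorithm~\ref{calcX} reduces to the nonnegative case), so no assumption $b\geq 0$ is needed here.
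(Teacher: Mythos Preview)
Your proposal is correct and follows essentially the same approach as the paper: take $x^*$ to be a minimizer of $\cost$, invoke Theorem~\ref{Axb} to get a per-coordinate $4^{d-1}$ approximation, then apply Observation~\ref{obs:distToCost} with $c=4^{d-1}$. You are in fact slightly more explicit than the paper, which glosses over the step of scanning $X$ to select the actual output vector; your remark that one returns $\argmin_{x\in X}\cost(Y,x)$ and that this still satisfies the bound is the right way to complete the argument.
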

\begin{proof}
See proof of Theorem~\ref{costAxb_proof} in the appendix.
\end{proof}

\section{Handling Unknown Matching} \label{sec:lp_regressionNoMatching}
In this section, we tackle the constrained $\ell_p$ regression problem with unknown matching between the rows of $A$ and entries of $b$. That is, where the minimum of~\eqref{mainEq3} is not only over every nit vector $x$, but also over every permutation of the entries of $b$ (or rows in $A$). Thus, the problem now at hand is to solve~\eqref{mainEq4}.
The main result is derived from the following simple observation. In Lemma~\ref{lemApproxD} we proved there is a set of $r\leq d-1$ \textbf{matching} subsets $\hat{A}$ and $\hat{b}$ of $A$ and $b$ respectively, such that $\opt(\hat{A},\hat{b})$ contains the desired unit vector $x'$. This result still holds when the matching is unknown, though we dont know whose the subset $\hat{b}$ in $b$ that corresponds to $\hat{A}$. Hence, for every such $\hat{A}$, all we have to do is apply another exhaustive search over subset of $b$ in order to find the subset $\hat{b}$ that best matches $\hat{A}$.

Algorithm~\ref{Alg_NoMatching} uses the following definition of optimal matching between an input set of pairs $Y$, a query $q$, and a cost function.
\begin{definition} [\textbf{Optimal matching}] \label{def:optimalMatching}
Let $\perms(n)$ denote the union over every matching function (bijection function) $\M:[n]\to[n]$ and let $Q$ be a set. For every $i\in [n]$, let $y_i = (a_i,b_i)$ be a pair of elements, and let $Y = \br{y_1,\cdots,y_n}$ be their union.
Consider a function $\cost$ as defined in Definition~\ref{def:cost} for $f(v) = \norm{v}_1$ and let $q \in Q$.
We denote by $\hat{\M}(Y,q,\cost)$ the matching function that minimizes $\cost(Y_{\M},q)$ over every permutation $\M \in \perms(n)$. Formally,
\[
\hat{\M}(Y,q,\cost) \in \argmin_{\M \in \perms(n)} \cost(Y_{\M},q).
\]
\end{definition}

\vspace*{-0.6cm}
\paragraph{Overview of Algorithm~\ref{Alg_NoMatching}.} In Lines~\ref{line:ES2}-\ref{line:addToX2}, we iterate over every pair of sets $I=\br{i_1,\cdots,i_r}$ and $\ell=\br{\ell_1,\cdots,\ell_r}$ of $r$ indices, for $r\leq d-1$. We assume that $a_{i_k}$ matches $b_{\ell_k}$ for every $k\in [r]$. We change the signs of $a_{i_k}$ and $b_{\ell_k}$ as in Algorithm~\ref{calcX}. We make a call to Algorithm~\ref{calcOpt} to compute and store in $S$ the set $\opt(a_{i_1}'\mid\cdots\mid a_{i_r}'),(b_{\ell_1}',\cdots,b_{\ell_r}')$. Notice that we do not use infinite sets in Algorithm~\ref{calcOpt}. We then add $S$ to $X$;
In Lines~\ref{line:SetY}-\ref{line:compOptMatch} we compute for every $x\in X$ the matching $\hat{\M}(Y,x,\cost)$ that minimizes the given cost function $\cost$, and store the pair in $S$. In Line~\ref{line:compBestPair} we return the pair in $S$ whose cost is minimal.

\setcounter{AlgoLine}{0}
\begin{algorithm}[tb]
    \caption{\textsc{$\matchingalgname(A,b, \cost)$}}
    \label{Alg_NoMatching}
    \SetKwInOut{Input}{Input}
	\SetKwInOut{Output}{Output}
    \Input{$A = (a_1 \mid \cdots \mid a_n)^T \in \REAL^{n\times d}$, $b = (b_1,\cdots,b_n)^T \in \REAL^n$, and cost as in Theorem~\ref{costAxb_noMatch}.}
    \Output{A pair $(\tilde{x},\tilde{\M})$ of a unit vector and a matching funtion; See Theorem~\ref{costAxb_noMatch}.}

    Set $X \gets \emptyset$

    \For {every $r \in [d-1]$ \label{line:ES2}}{
        \For {every distinct sets $\br{i_1,\cdots,i_r},\br{\ell_1,\cdots,\ell_r} \subseteq [n]$ where $\norm{a_{i_k}}\neq 0$ for every $k\in [r]$}{
            \For {every $k \in [r]$ \label{line:SignFor2}}{
                Set $b_{\ell_k}' \gets |b_{\ell_k}|$

                Set $a_{i_k}' \gets \sign(b_{\ell_k})\cdot a_{i_k}$ \label{line:endSignFor2}
          	}
        	Set $S \gets \algCalcOpt((a_{i_1}'\mid\cdots\mid a_{i_r}'),(b_{\ell_1}',\cdots,b_{\ell_r}'))$. \label{line:compOpt2} \tcp{See Definition~\ref{defOptAxb} and Algorithm~\ref{calcOpt}.}
        	Set $X \gets X \cup S$ \label{line:addToX2}	
      	}
    }
    Set $Y \gets \br{(a_1,b_1),\cdots,(a_n,b_n)}$ \label{line:SetY}. \tcp{An arbitrary matching}
Set $S \gets \br{\left(x,\hat{\M}(Y,x,\cost)\right) \mid x\in X}$. \label{line:compOptMatch} \tcp{See Definition~\ref{def:optimalMatching}.}
Set $(\tilde{x},\tilde{\M}) \gets \displaystyle  \argmin_{(x,\M)\in S} \cost\left(Y_{\M},x\right)$

\Return $(\tilde{x},\tilde{\M})$ \label{line:compBestPair}
\end{algorithm}

The following theorem states that Algorithm~\ref{Alg_NoMatching} indeed returns an approximation to the optimal cost, even when the matching between rows of $A$ and $b$ is not given.
\begin{theorem} \label{costAxb_noMatch}
Let $A = (a_1 \mid \cdots \mid a_n)^T \in \REAL^{n\times d}$ be a matrix containing $n \geq d-1$ points in its rows, let $b = (b_1,\cdots,b_n)^T \in \REAL^n$, and let $Y = \br{(a_i,b_i)\mid i\in [n]}$. Let $D((a,\hat{b}),x) = |a^Tx-\hat{b}|$ for every $a\in\REAL^d$, $\hat{b} \in \REAL$ and $x\in \REAL^d$. Let $r$ be a scalar and $\cost$ be a function that satisfy Definition~\ref{def:cost} for $D$ and $f(v) = \norm{v}_1$. Let $(\tilde{x},\tilde{\M})$ be a pair of unit vector and permutation (matching function) which is the output of a call to $\matchingalgname(A,b,\cost)$; see Algorithm~\ref{Alg_NoMatching}. Then it holds that
\[
\cost(Y_{\tilde{\M}},\tilde{x}) \leq 4^{(d-1)r} \cdot \min_{x,\M} \cost(Y_{\M},x),
\]
where the minimum is over every unit vector $x\in \sphere^{d-1}$ and $\M\in\perms(n)$. Moreover, $(\tilde{x},\tilde{\M})$ is computed in $n^{O(d)}$ time.
\end{theorem}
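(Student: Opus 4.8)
The plan is to reduce the unknown‑matching problem to the known‑matching Theorem~\ref{Axb}. Since Lemma~\ref{lemApproxD} (hence Theorem~\ref{Axb}) only ever singles out $r\le d-1$ rows of $A$, it suffices to additionally ``guess'' which $r$ entries of $b$ those rows are matched to by an optimal matching; the subroutine $\hat{\M}$ then re‑assigns the remaining $n-r$ pairs optimally on its own. Let $(x^*,\M^*)$ achieve the minimum in the statement, i.e. $\cost(Y_{\M^*},x^*)=\min_{x,\M}\cost(Y_{\M},x)=:\cost^*$, the minimum being over $x\in\sphere^{d-1}$ and $\M\in\perms(n)$.

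First I would apply Theorem~\ref{Axb} to the matrix $A$ and the (general) vector $b_{\M^*}\in\REAL^n$: the output $X_0$ of \algnameXCandidates$(A,b_{\M^*})$ contains some $x'$ with $|a_i^Tx'-b_{\M^*(i)}|\le 4^{d-1}\cdot|a_i^Tx^*-b_{\M^*(i)}|$ for every $i\in[n]$. I then claim $x'\in X$, where $X$ is the candidate set built in Lines~\ref{line:ES2}--\ref{line:addToX2} of Algorithm~\ref{Alg_NoMatching}. The point is that for any $r$‑tuple $(i_1,\dots,i_r)$ of indices with $\norm{a_{i_k}}\neq 0$, \algnameXCandidates$(A,b_{\M^*})$ adds to $X_0$ exactly the set $\algCalcOpt\big((\sign(b_{\M^*(i_1)})a_{i_1}\mid\cdots\mid\sign(b_{\M^*(i_r)})a_{i_r}),(|b_{\M^*(i_1)}|,\dots,|b_{\M^*(i_r)}|)\big)$, and this is precisely what Algorithm~\ref{Alg_NoMatching} computes in the iteration with row‑tuple $(i_1,\dots,i_r)$ and label‑tuple $(\ell_1,\dots,\ell_r)=(\M^*(i_1),\dots,\M^*(i_r))$: the sign flip in Lines~\ref{line:SignFor2}--\ref{line:endSignFor2}, being indexed by $\ell_k=\M^*(i_k)$, yields the same signed rows, and the $b'$‑values passed in Line~\ref{line:compOpt2} are the same $|b_{\M^*(i_k)}|$; hence this set enters $X$ in Line~\ref{line:addToX2}. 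Therefore $X_0\subseteq X$ and $x'\in X$. (This uses that the loops range over \emph{ordered} tuples — needed since $\opt$ in Definition~\ref{defOptAxb} treats its last column as the objective and the earlier ones as equality constraints — and that the arbitrary representative $\algCalcOpt$ returns for an infinite $\opt$‑set can be taken consistently across the two runs.)

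Then I would carry $x'$ through the cost. Since $f=\norm{\cdot}_1$ is $1$‑log‑Lipschitz (so $s=1$ in Definition~\ref{def:cost}), Observation~\ref{obs:distToCost} applied to $Y_{\M^*}$ with $q^*=x^*$, $q'=x'$ and $c=4^{d-1}$ gives $\cost(Y_{\M^*},x')\le 4^{(d-1)r}\cdot\cost(Y_{\M^*},x^*)=4^{(d-1)r}\cdot\cost^*$. By Definition~\ref{def:optimalMatching}, $\hat{\M}(Y,x',\cost)$ minimizes $\cost(Y_{\M},x')$ over all $\M\in\perms(n)$, and $\M^*\in\perms(n)$, so $\cost\big(Y_{\hat{\M}(Y,x',\cost)},x'\big)\le\cost(Y_{\M^*},x')\le 4^{(d-1)r}\cdot\cost^*$. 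Finally $(x',\hat{\M}(Y,x',\cost))$ lies in the set $S$ formed in Line~\ref{line:compOptMatch}, and Line~\ref{line:compBestPair} returns the minimum‑cost pair of $S$, hence $\cost(Y_{\tilde{\M}},\tilde{x})\le 4^{(d-1)r}\cdot\cost^*$, which is the asserted inequality.

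For the running time: the nested loops run over $r\le d-1$ and over (ordered) pairs of $r$‑tuples of indices in $[n]$, i.e. $n^{O(d)}$ iterations; each $\algCalcOpt$ call costs $\mathrm{poly}(d)$ and returns $O(1)$ vectors, so $|X|=n^{O(d)}$. For every $x\in X$, computing $\hat{\M}(Y,x,\cost)$ is a minimum‑weight perfect bipartite matching with cost matrix $(\lip(|a_i^Tx-b_j|))_{i,j}$, solvable by the Hungarian algorithm in $O(n^3)$ time; doing this for all of $X$ and then taking the $\argmin$ over the $|S|=|X|$ precomputed costs is still $n^{O(d)}$. The step I expect to be the main obstacle is the second one, establishing $X_0\subseteq X$: one must match the sign convention exactly (the flip in Lines~\ref{line:SignFor2}--\ref{line:endSignFor2} is deliberately indexed by the label $\ell_k$, not the row $i_k$, so as to agree with running \algnameXCandidates\ on $b_{\M^*}$), and one must take the search over ordered tuples, which only inflates the iteration count by $((d-1)!)^2=O(1)$. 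The remaining steps are black‑box applications of Theorem~\ref{Axb} and Observation~\ref{obs:distToCost} from the known‑matching case.
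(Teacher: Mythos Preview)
Your proposal is correct and follows essentially the same route as the paper: fix an optimal $(x^*,\M^*)$, invoke the known-matching structural result on $(A,b_{\M^*})$ to get a witness $x'$ with the pointwise $4^{d-1}$ bound, observe that the double loop over row-tuples and label-tuples in Algorithm~\ref{Alg_NoMatching} hits the iteration $(i_1,\dots,i_r),(\M^*(i_1),\dots,\M^*(i_r))$ and therefore places $x'$ in $X$, then apply Observation~\ref{obs:distToCost}, the optimality of $\hat\M$, and the final $\argmin$. The only cosmetic difference is that the paper appeals directly to Lemma~\ref{lemApproxD} rather than to its wrapper Theorem~\ref{Axb}; your remarks about ordered tuples and about the arbitrary representative returned by $\algCalcOpt$ when $|\opt|=\infty$ are exactly the subtleties the paper handles (the latter via the ``every $x'\in X$'' clause of Lemma~\ref{lemApproxD}).
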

\begin{proof}
See proof of Theorem~\ref{costAxb_noMatch_proof} in the appendix.
\end{proof}

\section{Coreset for Linear Regression} \label{sec:coreset}
In this section we provide a coreset for the constrained $\ell_p$ regression, which is a small improvement and generalization of previous results~\cite{jubran2018minimizing,dasgupta2009sampling,varadarajan2012sensitivity}

\subsection{Improvements via coreset}
An $\varepsilon$-coreset, which is a compression scheme for the data, is suggested in Theorem~\ref{theorem:coreset}. Streaming and distribution in near-logarithmic update time and space, including support for deletion of points in near-logarithmic time in $n$ (using linear space) is also supported by our algorithm when the matching between the rows of the matrix $A$ and the entries of $b$ is given; see Section~\ref{sec:lp_regression}. This is due to the fact that our coresets are composable, i.e., can be merged and reduced over time and computed independently on different subsets. This is now a standard technique, we refer the reader to ~\cite{bentley1978decomposable,agarwal2004approximating,feldman2011scalable,DBLP:journals/jmlr/LucicFKF17} for further details.

\begin{theorem} \label{theorem:coreset}
Let $d \geq 2$ be a constant integer. Let $A = (a_1 \mid \cdots \mid a_n)^T \in \REAL^{n\times d}$ be a matrix containing $n \geq d-1$ points in its rows, let $b = (b_1,\cdots,b_n)^T \in \REAL^n$, and let $w = (w_1,\cdots,w_n) \in [0,\infty)^n$.
Let $\varepsilon, \delta \in (0,1)$ and let $z\in [1,\infty)$.
Then in $O(n \log{n})$ time we can compute a weights vector $u = (u_1,\cdots,u_n)\in[0,\infty)^n$ that satisfies the following pair of properties.
\renewcommand{\labelenumi}{(\roman{enumi})}
\begin{enumerate}[noitemsep]
\item With probability at least $1-\delta$, for every $x\in \sphere^{d-1}$ it holds that
\[
\begin{split}
&(1-\varepsilon) \cdot \sum_{i\in [n]} w_i \cdot  |a_i^Tx-b_i|^z  \leq \sum_{i\in [n]}u_i\cdot |a_i^Tx-b_i|^z\\
\leq &(1+\varepsilon) \cdot \sum_{i\in [n]} w_i \cdot |a_i^Tx-b_i|^z.
\end{split}
\]
\item The weights vector $u$ has $O\left(\frac{\log{\frac{1}{\delta}}}{\varepsilon^2}\right)$ non-zero entries.
\end{enumerate}
\end{theorem}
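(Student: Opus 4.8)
The plan is to build the coreset in two stages, combining an $\ell_z$ sensitivity bound with the standard importance-sampling framework. The first stage is to reduce the constrained problem to a query over a low-dimensional query set: although $x$ ranges over the unit sphere $\sphere^{d-1}$, the functions $x\mapsto |a_i^Tx-b_i|^z$ are exactly the functions appearing in (unconstrained) $\ell_z$ regression with the augmented matrix $(A\mid -b)$ and query vector $(x^T\mid 1)^T\in\REAL^{d+1}$. Hence I would invoke the $\ell_z$ sensitivity / total-sensitivity bound of~\cite{dasgupta2009sampling} (equivalently the Lewis-weight based bounds of~\cite{varadarajan2012sensitivity}): the sensitivity $\sigma_i = \sup_{x} \frac{w_i|a_i^Tx-b_i|^z}{\sum_j w_j|a_j^Tx-b_j|^z}$ satisfies $\sum_i \sigma_i = O(d^{\max\{1,z/2\}}) = O(1)$ for constant $d,z$, and each $\sigma_i$ (or an $O(1)$-approximation $s_i \ge \sigma_i$) can be computed in $O(nd^{O(1)}) = O(n)$ time (well-conditioned basis / Lewis weight computation). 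Note the supremum over $x\in\REAL^{d+1}$ only makes the sensitivities larger, so the bound is valid a fortiori over $x\in\sphere^{d-1}$.

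The second stage is the Feldman–Langberg / Braverman–Feldman–Lang framework~\cite{FL11,braverman2016new}: sample $m = O\!\left(\frac{t}{\varepsilon^2}\left(\log t + \log\frac1\delta\right)\right)$ rows i.i.d.\ with probability $p_i = s_i/\sum_j s_j$, and reweight a sampled row $i$ by $u_i = w_i/(m\,p_i)$ (and set $u_i=0$ for rows never sampled). By the sensitivity guarantee, the VC/pseudo-dimension of the induced range space of the functions $\{x\mapsto w_i|a_i^Tx-b_i|^z\}_{i}$ over $x\in\sphere^{d-1}$ is $O(d) = O(1)$ — this follows because each such function is a composition of a fixed degree-$z$-type map with an affine function of $x$, so the relevant classifiers are sign patterns of polynomially many bounded-degree polynomials in $d+1$ variables. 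Plugging $t = \sum_i s_i = O(1)$ into the framework gives $m = O\!\left(\frac{\log(1/\delta)}{\varepsilon^2}\right)$ nonzero weights, which is property (ii), and the $(1\pm\varepsilon)$ multiplicative approximation uniformly over $x\in\sphere^{d-1}$ with probability $\ge 1-\delta$, which is property (i). Running time is $O(n)$ for the sensitivities plus $O(n)$ for sampling, plus the $O(n\log n)$ cost hidden in the well-conditioned basis / Lewis weight subroutine, giving the claimed $O(n\log n)$ bound.

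I would then note the mild technical points: the construction is stated for the given weight vector $w$ (so it is really a coreset-from-weighted-input statement, consistent with composability for streaming/distributed use); the $|a_i^Tx-b_i|^z$ terms are nonnegative so no cancellation issues arise; and when $z=1$ the sensitivity bound is $O(d)$ while for $z\in(1,\infty)$ one uses $\ell_z$ Lewis weights, in both cases $O(1)$ for constant $d,z$. Finally, applying the earlier approximation algorithm (Theorem~\ref{costAxb}) to the weighted coreset composes the $(1\pm\varepsilon)$ factor with the $4^{(d-1)rs}$ factor, yielding the near-linear-time approximation advertised in Table~\ref{table:ourContrib}.

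The main obstacle I anticipate is bounding the dimension of the range space induced on the \emph{constrained} domain $x\in\sphere^{d-1}$ rather than on $\REAL^{d}$: one must argue that restricting to the sphere does not blow up the pseudo-dimension (it does not — restriction to a subset only shrinks a range space), and that the framework's sampling bound applies with the total sensitivity computed over the same (restricted) query set. Everything else — the sensitivity summation bound, the reweighting identity $\mathbb{E}[u_i\,\text{(term)}] = w_i\,\text{(term)}$, and the union-bound/VC argument for uniform concentration — is by now standard and can be cited rather than reproven.
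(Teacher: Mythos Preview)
Your proposal is correct and follows essentially the same route as the paper's proof: both lift the problem to the augmented matrix $(A\mid -b)$ with query $(x^T\mid 1)^T$, bound the sensitivities and their sum via the well-conditioned basis results of~\cite{dasgupta2009sampling,varadarajan2012sensitivity} (which is $O(1)$ for constant $d,z$), observe that restricting the query set to $\sphere^{d-1}$ can only decrease sensitivities and the VC/pseudo-dimension, and then plug into the sampling framework of~\cite{FL11,braverman2016new} to obtain the stated sample size and running time. The only cosmetic differences are that the paper absorbs the input weights $w$ into the matrix via a diagonal scaling $W=\mathrm{diag}(w_i^{1/z})$ before computing sensitivities, and it cites~\cite{anthony2009neural} for the $O(d)$ VC-dimension bound rather than arguing it directly.
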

\begin{proof}
See proof of Theorem~\ref{theorem:coreset_proof} in the appendix.
\end{proof}

\section{Experimental Results} \label{sec:ER}
To demonstrate the correctness and robustness of our algorithms, we implemented them in Matlab and compared them to some commercial software for non-convex optimization. A discussion is provided after each test, and an overall discussion is provided in Section~\ref{sec:overallDiscussion}. Open code is provided~\cite{openCode}.

\textbf{Hardware.} All the following tests were conducted using Matlab $R2019a$ and Maple $2018$ on a Lenovo W541 laptop with an Intel i7-4710MQ CPU @ 2.50GHZ and 8GB RAM.

Let $N(\mu,\sigma)$ denote a Gaussian distribution with mean $\mu$ and standard deviation $\sigma$ and let $U(r)$ denote a uniform distribution over $[0,r]$.

\begin{figure*}[t]
\centering
    \begin{subfigure}[h]{0.35\textwidth}
		\centering
		\includegraphics[width=\textwidth]{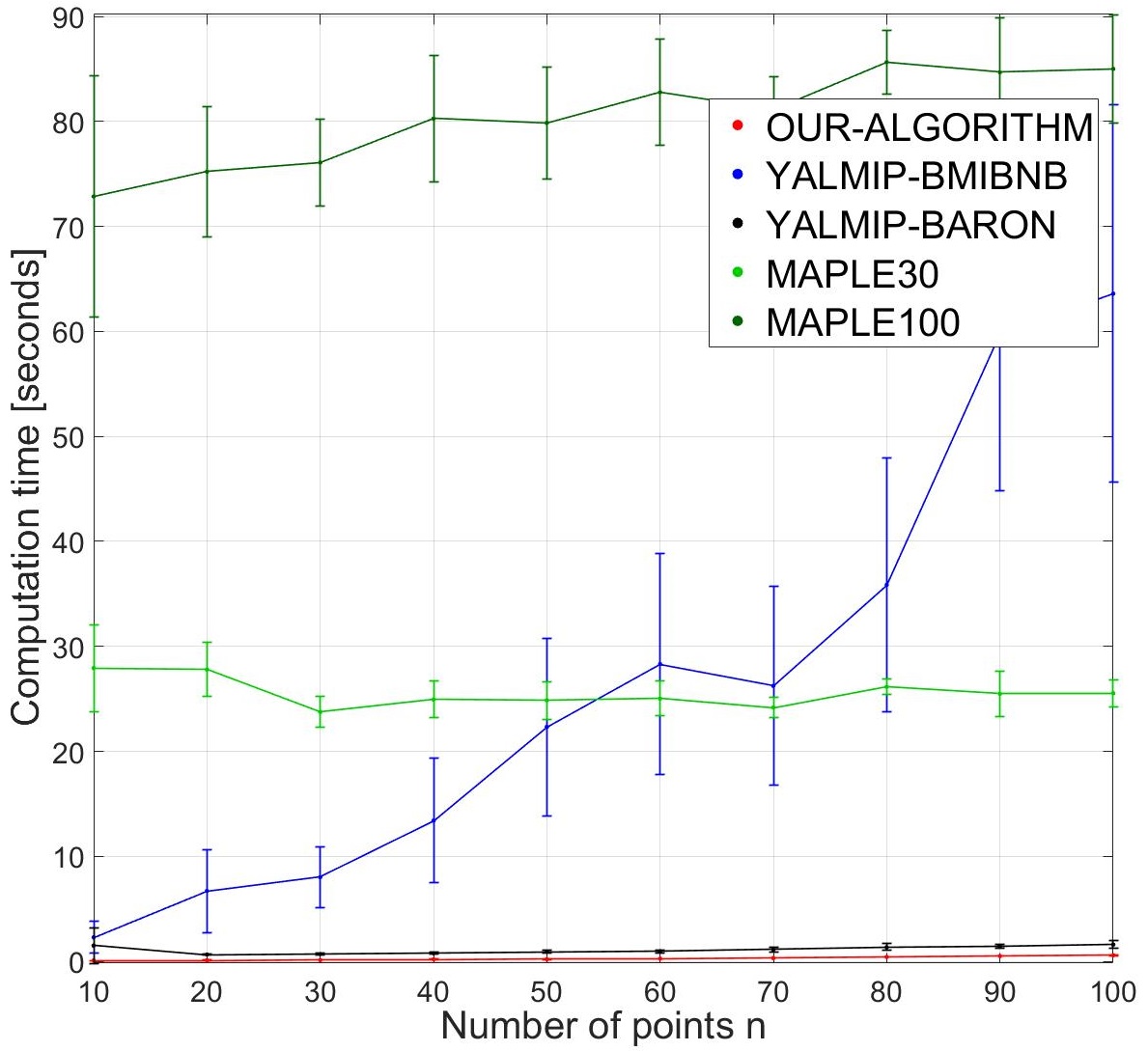} 
	\end{subfigure}
	\begin{subfigure}[h]{0.35\textwidth}
		\centering\centering		
		\includegraphics[width=\textwidth]{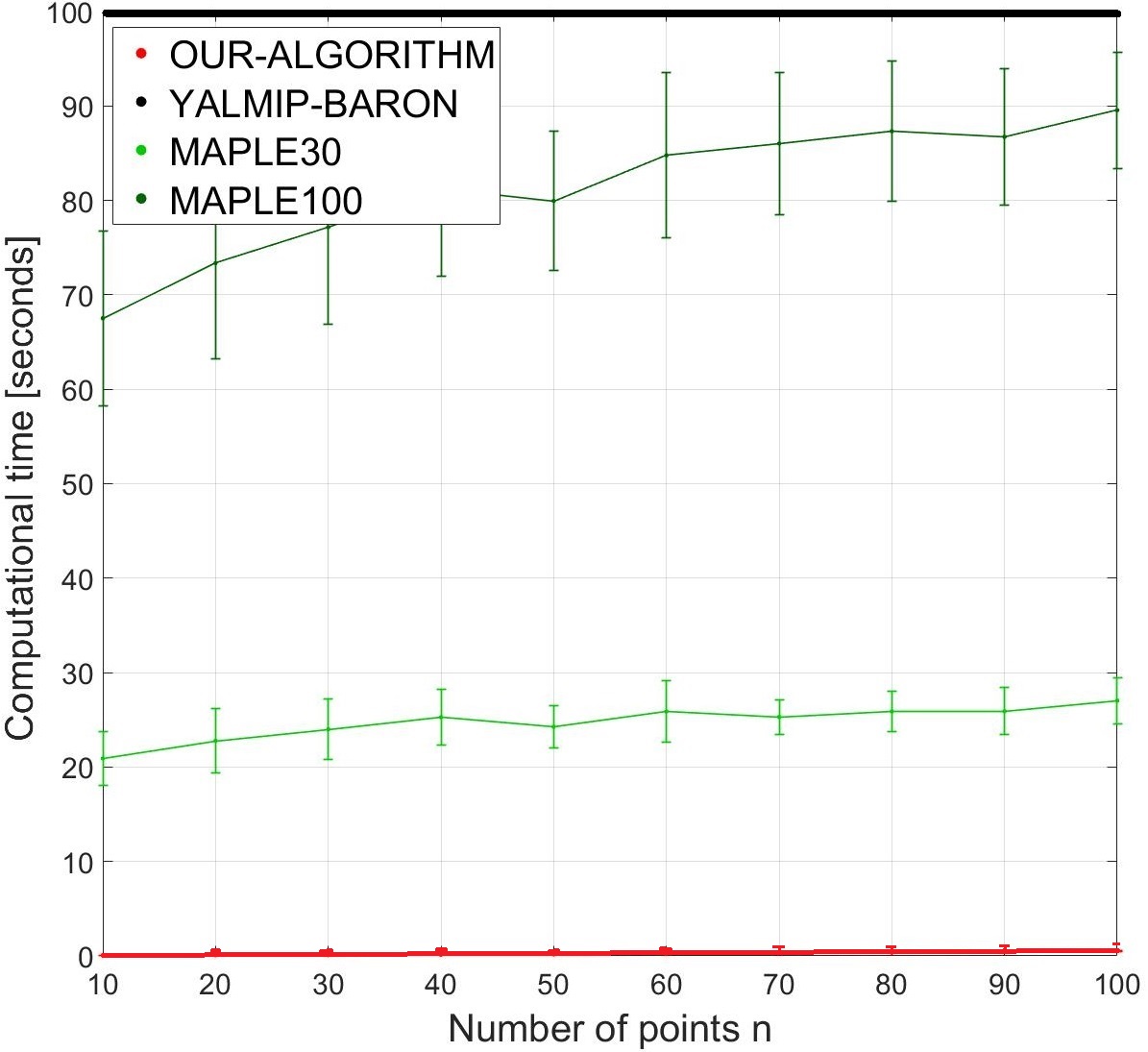} 
	\end{subfigure}
	\caption{Constrained $\ell_p$ regression; See details in Section~\ref{ERBasic}. Each test was repeated $15$ times. \textbf{(left)} Average time comparison and standard deviation for $p=1$ and $d=3$, \textbf{(right)} Average time comparison and standard deviation for $p=3.5$ and $d=3$. \Bnb{} did not support $p=3.5$. The value of the objective function in both cases was similar for all methods, hence we present only time comparison.}
	\label{ERfig:p1_p35}
\end{figure*}

\begin{figure*}[t]
  \centering
  \includegraphics[width=0.7\textwidth]{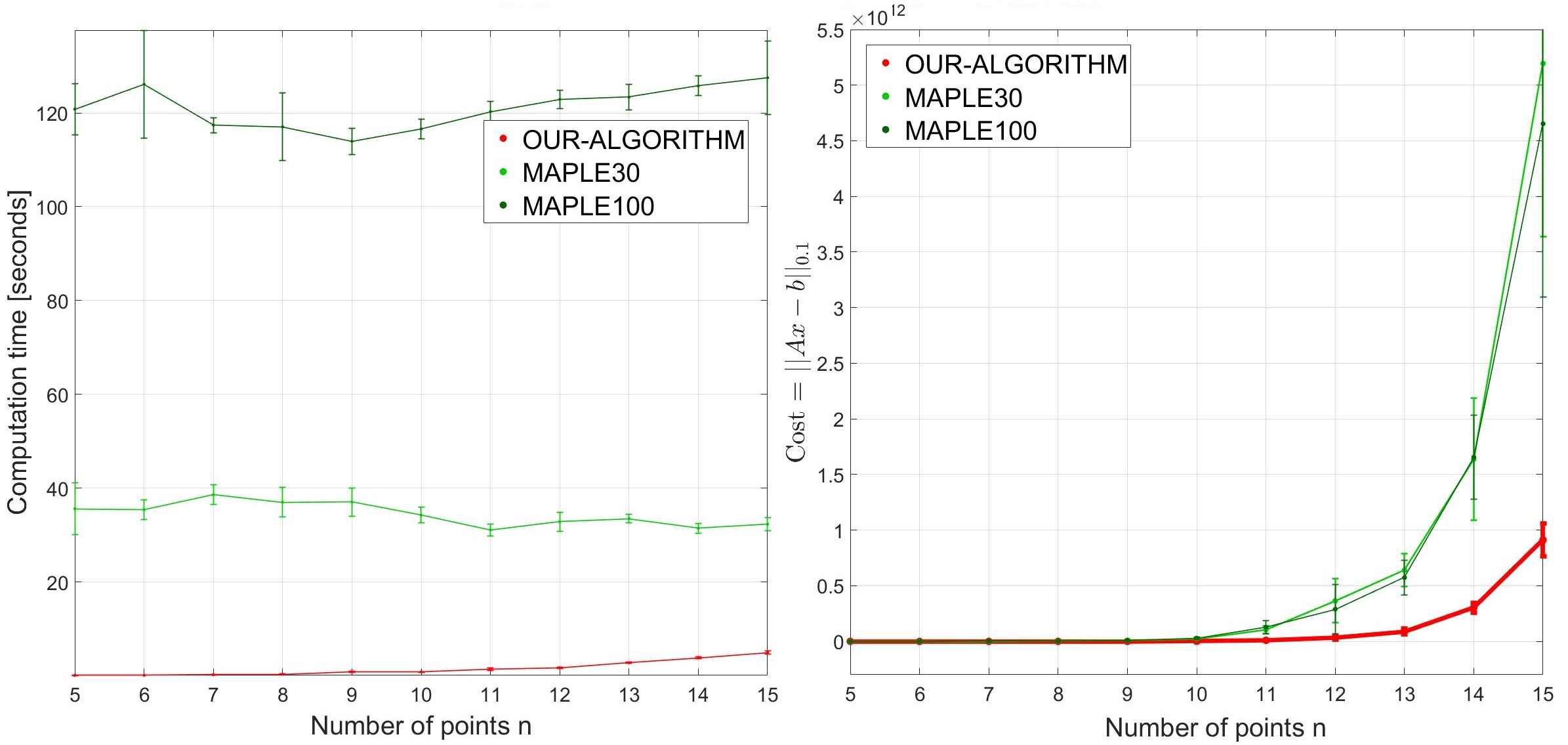}
  \caption{Constrained $\ell_p$ regression for $p=0.1$ and $d=5$; See details in Section~\ref{ERBasic}. The test was repeated $15$ times. \textbf{(left)} Average time and standard deviation comparison, \textbf{(right)} Average value of the objective function and standard deviation comparison. \Bnb{} and \Baron{} do not support $p\in(0,1)$. Also, it is worth mentioning that in many cases, \Maple{} and \MaplePro{} \textbf{did not} return a unit vector as output.}\label{ERfig:p01}
\end{figure*}

\begin{figure*}[t]
  \centering
  \includegraphics[width=0.7\textwidth]{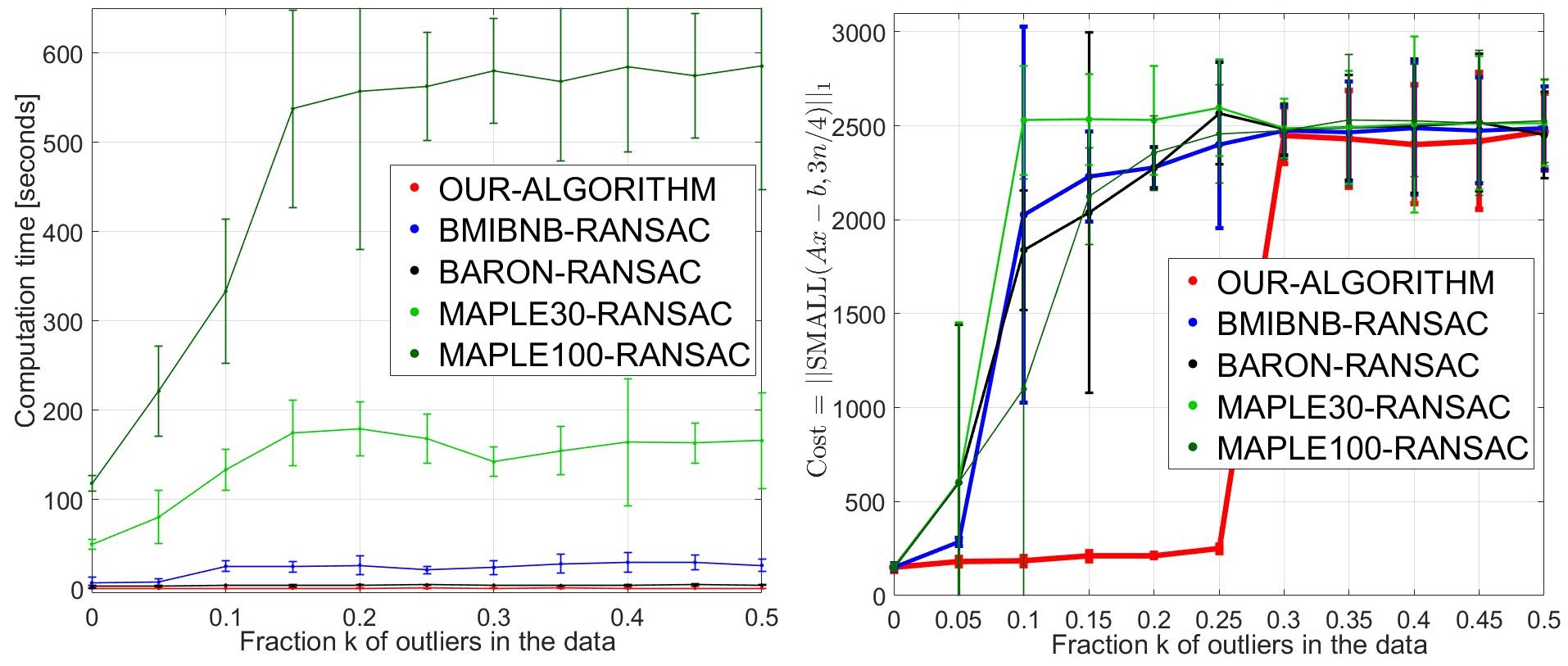}
  \caption{Constrained $\ell_p$ regression with outliers; See Section~\ref{ER:outliers} for details. The test was repeated $15$ times. \textbf{(left)} Average time and standard deviation comparison, \textbf{(right)} Average value of the objective function comparison, where the $x$-axis is the percentage $k$ of outliers in the data.}\label{ERfig:outliers}
\end{figure*}

\begin{figure*}[t]
  \centering
  \includegraphics[width=0.7\textwidth]{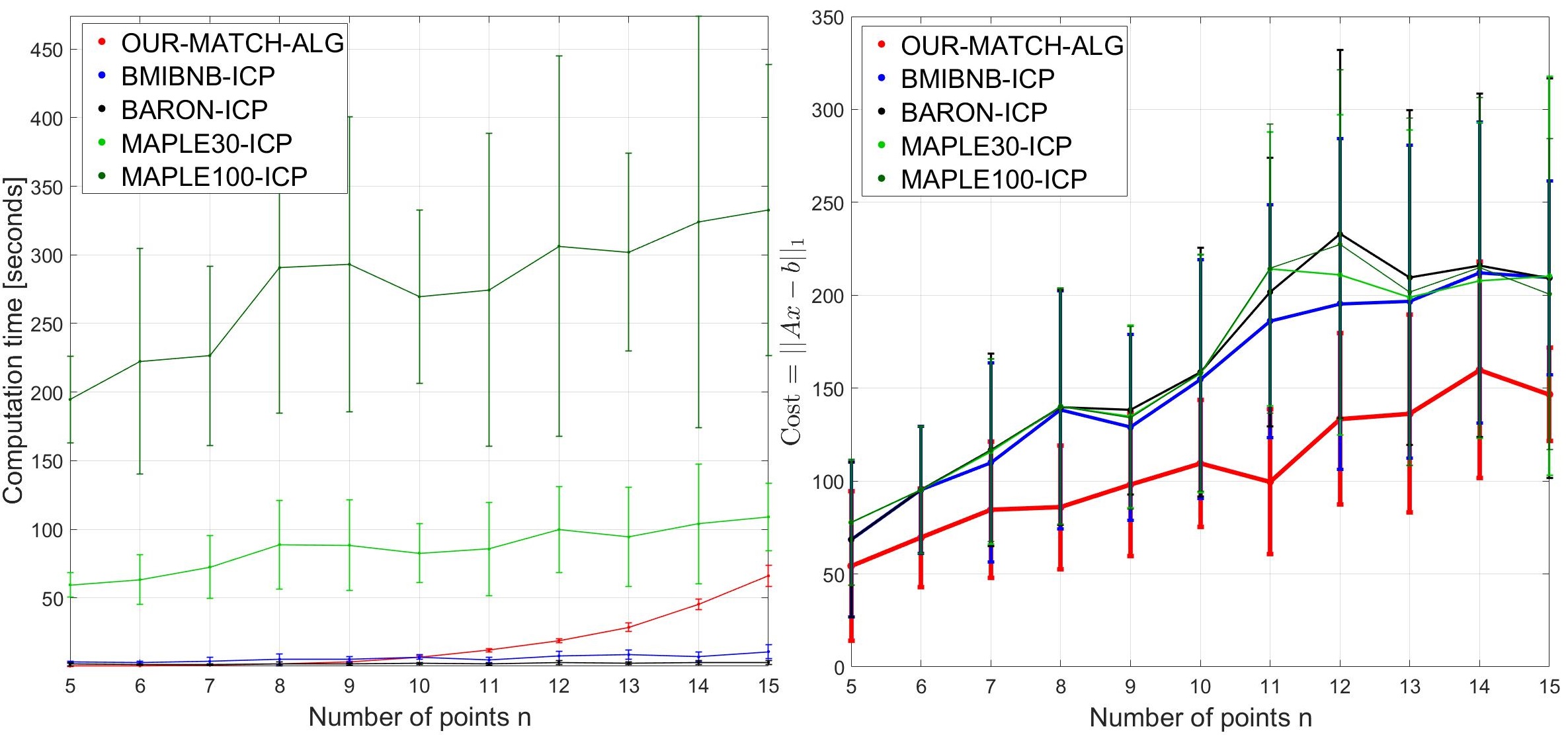}
  \caption{Constrained $\ell_p$ regression with unknown matching; See Section~\ref{ERNoMatching} for details. The test was repeated $15$ times. \textbf{(left)} Average time and standard deviation comparison, \textbf{(right)} Average value of the objective function and standard deviation comparison.}\label{ERfig:matching_withnoise}
\end{figure*}

\begin{figure*}[t]
  \centering
  \includegraphics[width=0.7\textwidth]{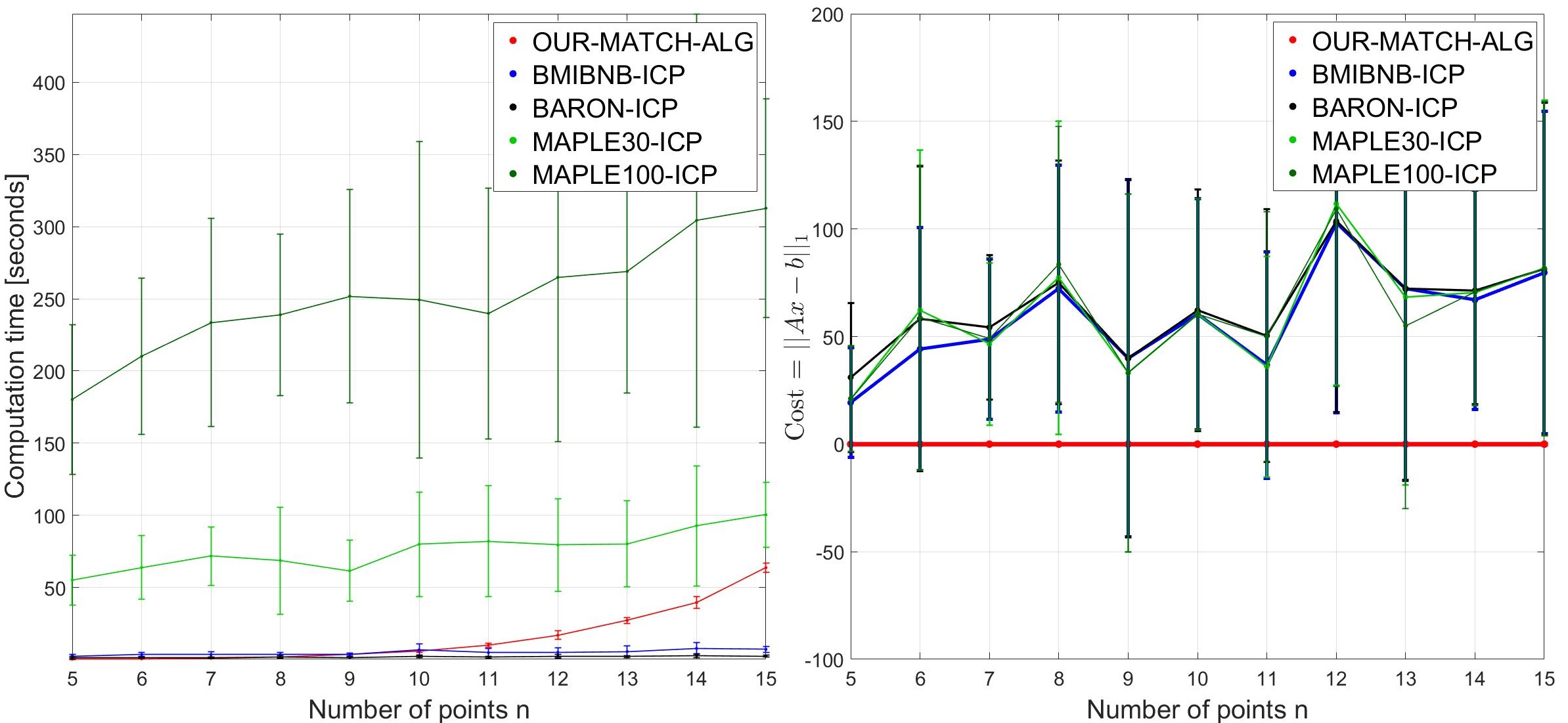}
  \caption{Constrained $\ell_p$ regression with unknown matching; See Section~\ref{ERNoMatching} for details. The test was repeated $15$ times. \textbf{(Left):} Average time and standard deviation comparison, \textbf{(Right):} Average value of the objective function comparison.}\label{ERfig:matching_nonoise}
\end{figure*}

\subsection{Constrained $\ell_p$ regression} \label{ERBasic}

\textbf{The dataset. }We conducted $3$ tests with different values of $p$ and $d$. The data for each test was a matrix $A\in\REAL^{n\times d}$ of $n$ points, and a vector $b\in\REAL^n$, where each entry of $A$ and $b$ was sampled from $U(200)$, i.e. uniformly at random from [0,200].

\textbf{Objective. }The test aims to solve problem~\eqref{mainEq3}.
The goal was to minimize the constrained $\ell_p$ regression function $f_p = \norm{Ax-b}_p$ for different values of $p\geq 1$.

\textbf{Algorithms. }We compare the following $5$ algorithms, which aim to minimize $f_p$ over every unit vector $x$.

\textbf{\OurAlg:} This algorithm computes $X = \algnameXCandidates(A,b)$, and returns $\min_{x\in X} f_p$.

\textbf{\Maple:} Maple~\cite{char2013maple} provides a function named $GlobalOptima$ from the $DirectSearch$ optimization package version 2~\cite{moiseev2011universal}. This function takes as input an objective function, a constraint, and an integer which specifies the number of initial ''simulated points``, where both the accuracy and the computation time increase with larger number of simulated points. The function aims to compute the global minima of non-linear multivariate functions under given constraints. We call this function from Matlab with the objective function $f_p$, the constraint $\norm{x}=1$, and the default number $(30)$ of initial simulated points.

\textbf{\MaplePro:} same as \Maple{} but with number=$100$ of initial simulated points.

\textbf{\Bnb:} The Yalmip library~\cite{lofberg2004yalmip} for Matlab provides a function named $optimize$, which takes as input an objective function to minimize, a constraint, and a solver program, and aims to optimize the given function under the given constraint using the given solver.
We run this function with the objective function $f_p$, the constraint $\norm{x}=1$, and the $bmibnb$ solver~\cite{narendra1977branch} which aims to solve non-convex problems.

\textbf{\Baron:} same as the previous algorithm, but with the $Baron$ solver~\cite{ts:05,sahinidis:baron:17.8.9} which is a widely used commercial software for global optimization.

\textbf{The results }are shown in Figures~\ref{ERfig:p1_p35} and~\ref{ERfig:p01}.

\textbf{Discussion. }Figure~\ref{ERfig:p1_p35} presents the computation time required for the suggested methods with $p\geq 1$. We conducted two tests: $p=1$ and $p=3.5$, both for increasing values of $n$.

As the graph for $p=1$ shows, \OurAlg{} managed to compute the output in a fraction of the time it took other methods to compute their output, except for \Baron{}, which was a close second. In this test, the cost (error) of all methods was similar.

The graph for $p=3.5$ presents similar results. However, this time \Baron{} took time which is more than $10$ folds larger than the other methods. Hence it is presented as a thick black line at the top of the graph. Also for this case the cost (error) of all methods was similar.

Figure~\ref{ERfig:p01} presents both the cost (error) and the computation time of the suggested methods for $p=0.1$ with increasing values of $n$. In this test, we were only able to run \Maple{} and \MaplePro{}, as the other methods either did not support $p<1$ or did not return an output in a reasonable amount of time.

As the time comparison graph shows, \OurAlg{} required time which is $2$ orders of magnitude smaller than the time it took \Maple{} and \MaplePro{}.

The costs graph shows that at $n=10$, the cost of \Maple{} and \MaplePro{} start to grow exponentially faster than the cost of \OurAlg. The gap between the cost of \OurAlg and the cost of the other methods continues to grow with $n$.

\subsection{Robustness to outliers} \label{ER:outliers}
The test in this subsection aims to solve Problem~\eqref{mainEq5}.

\textbf{The dataset. }We generated a matrix $A \in \REAL^{80\times 3}$ where each entry was sampled randomly from $U(200)$, and a unit vector $x\in\REAL^3$ was randomly generated. We then define $b = Ax$ . Gaussian noise drawn from $N(0,30)$ was then added to each of the entries of $A$ and $b$ respectively.
We then added noise sampled randomly from $U(20000)$ to $\floor{k\cdot A}$ entries of $A$ and their corresponding entries of $b$, where $k\in [0,1/2]$.

\paragraph{Objective. }This test aims to solve~\eqref{mainEq5}.
The goal of the experiment was to minimize the $f_O(A,b,x) = \sum_{i\in [n]} \norm{\smallest(Ax-b,3n/4)}_1$ over every unit vector $x$.

\vspace{-0.4cm}
\paragraph{Algorithms. }We compared the following $4$ algorithms, which aim to minimize $f_O(A,b,x)$ over every unit vector $x$.

\textbf{\OurAlg{}} above with objective function $f_O(A,b,x)$.

\textbf{\MapleRansac:} A random sample consensus (RANSAC) scheme~\cite{fischler1981random}, where at each iteration we sample uniformly at random corresponding subsets $\hat{A}$, and $\hat{b}$ of size $3$ from $A$ and $b$ respectively, called \Maple{} with objective function $\norm{\hat{A}x-\hat{b}}_1$ and constraint $\norm{x}=1$ to obtain some unit vector $\hat{x}\in \REAL^3$, computed the number of pairs $(a_i,b_i)$ that satisfy $|a_i^T\hat{x}-b_i| \leq 200$ (inlier points) and pick the unit vector with maximal number of inliers denoted by $A^*$ and $b^*$. Final call is made to \Maple{} with objective function $\norm{A^*x-b^*}_1$ and constrain $\norm{x}=1$ that returns its output.

\textbf{\BnbRansac:} As \MapleRansac{} but calls \Bnb{} instead of \Maple{}.

\textbf{\BaronRansac:} As \MapleRansac{} but calls \Baron{} instead of \Maple{}.

\textbf{The results }are shown in Figure~\ref{ERfig:outliers}.

\textbf{Discussion. }Figure~\ref{ERfig:outliers} presents both the cost (error) and the computation time of the suggested methods, while increasing the value of $k$ (the fraction of outliers in the data).

As the time comparison graph shows, \OurAlg{} managed to compute the output faster than the other methods.

The costs graph demonstrates the robustness in practice of our algorithm to outliers. The cost of \OurAlg{} was roughly constant as the fraction $k$ of outliers grew until $k=0.25$. At $k>0.25$, the cost of \OurAlg{} had a sharp increase. This is not surprising since the cost function sums over $75\%$ of the points with the smallest cost. In other words, when $k>0.25$, $75\%$ of the data with the smallest cost must contain points who are outliers.
The other methods showed an immediate increase in cost as soon as outliers were introduced to the data ($k>0$).

\subsection{Unknown correspondences between $A$ and $b$} \label{ERNoMatching}
The test in this subsection aims to solve~\eqref{mainEq4}.

\textbf{The dataset }is a matrix $A \in \REAL^{n\times 3}$ where each entry was sampled randomly from $U(200)$, and $Ax=b$ for some random unit vector $x\in\REAL^d$. The rows of $A$ have then been shuffled with a random permutation. We then added small Gaussian noise drawn from $N(0,10)$ to every entry of $A$ and $b$ respectively.
\vspace{-0.1cm}

\textbf{Objective. }Let $Y=\br{(a_i,b_i)}_{i=1}^n$. The goal of the experiment was to minimize $f_M(Y_{\M},x) = \sum_{i\in [n]} |a_i^Tx-b_{\M(i)}|$ over every unit vector $x\in\REAL^3$ and permutation $\M$.

\paragraph{Algorithms. }We compared the $5$ following algorithms.

\textbf{\OurMatchingAlg: }Run $\matchingalgname(A,b,f_M)$; See Algorithm~\ref{Alg_NoMatching}.

\textbf{\MapleICP: }An iterative closest point (ICP) scheme where we alternate between the following two steps until there is no sufficient change in the cost function: (1) Call \Maple{} with objective function $\norm{A'x-b}_1$ and constraint $\norm{x}=1$ to obtain a unit vector $x'$, where $A'=A$ at the first iteration, and (2) Compute the optimal permutation (a shuffling of A') that minimizes the differences between $A'x'$ and $b$, and apply the shuffling to $A'$. This is done using the Hungarian method~\cite{kuhn1955hungarian}.

\textbf{\MapleProICP: }Similar to \MapleICP{} but calls \MaplePro{} instead of \Maple{}.

\textbf{\BnbICP: }Similar to \MapleICP{} but calls \Bnb{} instead of \Maple{}.

\textbf{\BaronICP: }Similar to \MapleICP{} but calls \Baron{} instead of \Maple{}.

\textbf{The results }are shown in Figure~\ref{ERfig:matching_withnoise}.

\textbf{Discussion. }As the costs graph in Figure~\ref{ERfig:matching_withnoise} shows, \OurMatchingAlg{} solved~\eqref{mainEq5} with higher accuracy than all other methods.

\subsubsection{Unknown correspondences between $A$ and $b$, without noise. }
We conducted another test similar to the previous test, for the case where the correspondences are unknown, but without adding noise to the data. In other words, a matrix $A \in \REAL^{n\times 3}$ was generated, where each entry was sampled randomly from $U(200)$, and a corresponding vector $b\in \REAL^n$ was computed such that $Ax=b$ for some random unit vector $x\in\REAL^d$. The rows of $A$ have then been shuffled with a random permutation. We \textbf{did not} add noise to the data. The objective function and the algorithms tested are similar to the previous test.

\textbf{The results }are shown in Figure~\ref{ERfig:matching_nonoise}.

\textbf{Discussion. }As expected due to the constant factor approximation guaranteed of all our algorithms, the cost of \OurMatchingAlg{} was constant at $0$. Which means \OurMatchingAlg was able to recover the unit vector and the matching function correctly, while the other methods could not.

\subsection{Overall discussion} \label{sec:overallDiscussion}
\paragraph{Variance. }As all the figures in Section~\ref{sec:ER} show, the variance of our algorithms was smaller and more stable than the variance of the other methods. This happens since our algorithms are guaranteed to compute an approximated ("good") result. In other words, the output cost of our algorithms can not have a dignificant change, since it is guaranteed to be with in small range of the optimal objective function value. Hence the small variance. The running time of our algorithms is deterministic. Hence the computation time variance is small as well.

\paragraph{The constants behind our approximation factors. }
Our theorems guarantee that the output cost of our algorithms is always within range of the optimal value of the objective function used, up to a constant factor of $4^{d-1} \in O(4^{d})$. This bound on the approximation constant is a worst case analysis bound. In practice however, this constant is much smaller.

\paragraph{Decreasing the computation time of our algorithms.}
Observe that since our algorithms are embarrassingly parallel, using a computer with $M$ cores would have reduced the running time of our algorithms by a factor of $M$.

\section{Conclusion and Open Problems} \label{sec:conclude}
We proposed the first provable polynomial time approximation algorithms for the constrained $\ell_p$ regression problem, for any constant $p>0$ and $d\geq1$, including versions for handling outliers, and unknown order of rows in $A$. Using coresets, the running time is near linear in some cases. Experimental results show that our algorithms outperform existing commercial solvers. Open problems: (i) Running time that is polynomial in $d$ is hopeless since the problem is NP-hard, but additive approximations may be obtained via projection on random subspaces or PCA, (ii) $(1+\eps)$-approximations, (iii) near-linear time algorithms for the unknown matching case, and streaming version for these cases.

\bibliography{references}

\begin{thebibliography}{41}
\providecommand{\natexlab}[1]{#1}
\providecommand{\url}[1]{\texttt{#1}}
\expandafter\ifx\csname urlstyle\endcsname\relax
  \providecommand{\doi}[1]{doi: #1}\else
  \providecommand{\doi}{doi: \begingroup \urlstyle{rm}\Url}\fi

\bibitem[Agarwal et~al.(2004)Agarwal, Har-Peled, and
  Varadarajan]{agarwal2004approximating}
Agarwal, P.~K., Har-Peled, S., and Varadarajan, K.~R.
\newblock Approximating extent measures of points.
\newblock \emph{Journal of the ACM (JACM)}, 51\penalty0 (4):\penalty0 606--635,
  2004.

\bibitem[Angelosante et~al.(2009)Angelosante, Giannakis, and
  Grossi]{angelosante2009compressed}
Angelosante, D., Giannakis, G.~B., and Grossi, E.
\newblock Compressed sensing of time-varying signals.
\newblock In \emph{Digital Signal Processing, 2009 16th International
  Conference on}, pp.\  1--8. Citeseer, 2009.

\bibitem[Anthony \& Bartlett(2009)Anthony and Bartlett]{anthony2009neural}
Anthony, M. and Bartlett, P.~L.
\newblock \emph{Neural network learning: Theoretical foundations}.
\newblock Cambridge university press, 2009.

\bibitem[Bentley(1978)]{bentley1978decomposable}
Bentley, J.~L.
\newblock Decomposable searching problems.
\newblock Technical report, Carnegie-Mellon University Pittsburgh PA Department
  of Computer Science, 1978.

\bibitem[Braverman et~al.(2016)Braverman, Feldman, and Lang]{braverman2016new}
Braverman, V., Feldman, D., and Lang, H.
\newblock New frameworks for offline and streaming coreset constructions.
\newblock \emph{arXiv preprint arXiv:1612.00889}, 2016.

\bibitem[B{\"u}hlmann \& Van De~Geer(2011)B{\"u}hlmann and Van
  De~Geer]{buhlmann2011statistics}
B{\"u}hlmann, P. and Van De~Geer, S.
\newblock \emph{Statistics for high-dimensional data: methods, theory and
  applications}.
\newblock Springer Science \& Business Media, 2011.

\bibitem[Char et~al.(2013)Char, Geddes, Gonnet, Leong, Monagan, and
  Watt]{char2013maple}
Char, B.~W., Geddes, K.~O., Gonnet, G.~H., Leong, B.~L., Monagan, M.~B., and
  Watt, S.
\newblock \emph{Maple V library reference manual}.
\newblock Springer Science \& Business Media, 2013.

\bibitem[Chatterjee \& Hadi(2015)Chatterjee and Hadi]{chatterjee2015regression}
Chatterjee, S. and Hadi, A.~S.
\newblock \emph{Regression analysis by example}.
\newblock John Wiley \& Sons, 2015.

\bibitem[Cule \& De~Iorio(2012)Cule and De~Iorio]{cule2012semi}
Cule, E. and De~Iorio, M.
\newblock A semi-automatic method to guide the choice of ridge parameter in
  ridge regression.
\newblock \emph{arXiv preprint arXiv:1205.0686}, 2012.

\bibitem[Dasgupta et~al.(2009)Dasgupta, Drineas, Harb, Kumar, and
  Mahoney]{dasgupta2009sampling}
Dasgupta, A., Drineas, P., Harb, B., Kumar, R., and Mahoney, M.
\newblock Sampling algorithms and coresets for $\ell_p$ regression.
\newblock \emph{SIAM Journal on Computing}, 38\penalty0 (5):\penalty0
  2060--2078, 2009.

\bibitem[Ding \& Jiang(2017)Ding and Jiang]{ding2017l1}
Ding, C. and Jiang, B.
\newblock L1-norm error function robustness and outlier regularization.
\newblock \emph{arXiv preprint arXiv:1705.09954}, 2017.

\bibitem[Feldman \& Langberg(2011)Feldman and Langberg]{FL11}
Feldman, D. and Langberg, M.
\newblock A unified framework for approximating and clustering data.
\newblock In \emph{STOC}, pp.\  569--578, 2011.
\newblock See http://arxiv.org/abs/1106.1379 for fuller version.

\bibitem[Feldman \& Schulman(2012)Feldman and Schulman]{feldman2012data}
Feldman, D. and Schulman, L.
\newblock Data reduction for weighted and outlier-resistant clustering.
\newblock In \emph{SODA}, pp.\  1343--1354. SIAM, 2012.

\bibitem[Feldman et~al.(2011)Feldman, Faulkner, and
  Krause]{feldman2011scalable}
Feldman, D., Faulkner, M., and Krause, A.
\newblock Scalable training of mixture models via coresets.
\newblock In \emph{Advances in neural information processing systems}, pp.\
  2142--2150, 2011.

\bibitem[Fischler \& Bolles(1981)Fischler and Bolles]{fischler1981random}
Fischler, M.~A. and Bolles, R.~C.
\newblock Random sample consensus: a paradigm for model fitting with
  applications to image analysis and automated cartography.
\newblock \emph{Communications of the ACM}, 24\penalty0 (6):\penalty0 381--395,
  1981.

\bibitem[Friedman et~al.(2001)Friedman, Hastie, and
  Tibshirani]{friedman2001elements}
Friedman, J., Hastie, T., and Tibshirani, R.
\newblock \emph{The elements of statistical learning}, volume~1.
\newblock Springer series in statistics New York, NY, USA:, 2001.

\bibitem[Gilks et~al.(1995)Gilks, Richardson, and
  Spiegelhalter]{gilks1995markov}
Gilks, W.~R., Richardson, S., and Spiegelhalter, D.
\newblock \emph{Markov chain Monte Carlo in practice}.
\newblock Chapman and Hall/CRC, 1995.

\bibitem[Golub \& Reinsch(1970)Golub and Reinsch]{golub1970singular}
Golub, G.~H. and Reinsch, C.
\newblock Singular value decomposition and least squares solutions.
\newblock \emph{Numerische mathematik}, 14\penalty0 (5):\penalty0 403--420,
  1970.

\bibitem[Hoerl \& Kennard(1970)Hoerl and Kennard]{hoerl1970ridge}
Hoerl, A.~E. and Kennard, R.~W.
\newblock Ridge regression: Biased estimation for nonorthogonal problems.
\newblock \emph{Technometrics}, 12\penalty0 (1):\penalty0 55--67, 1970.

\bibitem[Jubran \& Feldman(2018)Jubran and Feldman]{jubran2018minimizing}
Jubran, I. and Feldman, D.
\newblock Minimizing sum of non-convex but piecewise log-lipschitz functions
  using coresets.
\newblock \emph{arXiv preprint arXiv:1807.08446}, 2018.

\bibitem[Jubran et~al.(2019)Jubran, Cohn, and Feldman]{openCode}
Jubran, I., Cohn, D., and Feldman, D.
\newblock Open source code for all the algorithms, 2019.
\newblock URL \url{https://sites.hevra.haifa.ac.il/rbd/?lang=en}.
\newblock the authors commit to publish upon acceptance of this paper.

\bibitem[Karabatsos(2014)]{karabatsos2014fast}
Karabatsos, G.
\newblock Fast marginal likelihood estimation of the ridge parameter (s) in
  ridge regression and generalized ridge regression for big data.
\newblock \emph{arXiv preprint arXiv:1409.2437}, 2014.

\bibitem[Karabatsos(2018)]{karabatsos2018marginal}
Karabatsos, G.
\newblock Marginal maximum likelihood estimation methods for the tuning
  parameters of ridge, power ridge, and generalized ridge regression.
\newblock \emph{Communications in Statistics-Simulation and Computation},
  47\penalty0 (6):\penalty0 1632--1651, 2018.

\bibitem[Kohavi et~al.(1995)]{kohavi1995study}
Kohavi, R. et~al.
\newblock A study of cross-validation and bootstrap for accuracy estimation and
  model selection.
\newblock In \emph{Ijcai}, volume~14, pp.\  1137--1145. Montreal, Canada, 1995.

\bibitem[Krogh \& Hertz(1992)Krogh and Hertz]{krogh1992simple}
Krogh, A. and Hertz, J.~A.
\newblock A simple weight decay can improve generalization.
\newblock In \emph{Advances in neural information processing systems}, pp.\
  950--957, 1992.

\bibitem[Kuhn(1955)]{kuhn1955hungarian}
Kuhn, H.~W.
\newblock The hungarian method for the assignment problem.
\newblock \emph{Naval research logistics quarterly}, 2\penalty0 (1-2):\penalty0
  83--97, 1955.

\bibitem[Lofberg(2004)]{lofberg2004yalmip}
Lofberg, J.
\newblock Yalmip: A toolbox for modeling and optimization in matlab.
\newblock In \emph{Computer Aided Control Systems Design, 2004 IEEE
  International Symposium on}, pp.\  284--289. IEEE, 2004.

\bibitem[Lucic et~al.(2017{\natexlab{a}})Lucic, Faulkner, Krause, and
  Feldman]{DBLP:journals/jmlr/LucicFKF17}
Lucic, M., Faulkner, M., Krause, A., and Feldman, D.
\newblock Training gaussian mixture models at scale via coresets.
\newblock \emph{Journal of Machine Learning Research}, 18:\penalty0
  160:1--160:25, 2017{\natexlab{a}}.
\newblock URL \url{http://jmlr.org/papers/v18/15-506.html}.

\bibitem[Lucic et~al.(2017{\natexlab{b}})Lucic, Faulkner, Krause, and
  Feldman]{lucic2017training}
Lucic, M., Faulkner, M., Krause, A., and Feldman, D.
\newblock Training gaussian mixture models at scale via coresets.
\newblock \emph{The Journal of Machine Learning Research}, 18\penalty0
  (1):\penalty0 5885--5909, 2017{\natexlab{b}}.

\bibitem[Moiseev(2011)]{moiseev2011universal}
Moiseev, S.
\newblock Universal derivative-free optimization method with quadratic
  convergence.
\newblock \emph{arXiv preprint arXiv:1102.1347}, 2011.

\bibitem[Narendra \& Fukunaga(1977)Narendra and Fukunaga]{narendra1977branch}
Narendra, P.~M. and Fukunaga, K.
\newblock A branch and bound algorithm for feature subset selection.
\newblock \emph{IEEE Transactions on computers}, \penalty0 (9):\penalty0
  917--922, 1977.

\bibitem[Park \& Boyd(2017)Park and Boyd]{park2017general}
Park, J. and Boyd, S.
\newblock General heuristics for nonconvex quadratically constrained quadratic
  programming.
\newblock \emph{arXiv preprint arXiv:1703.07870}, 2017.

\bibitem[Pearson(1905)]{pearson1905general}
Pearson, K.
\newblock \emph{On the general theory of skew correlation and non-linear
  regression}.
\newblock Number~14. Dulau and Company, 1905.

\bibitem[Rockafellar(1993)]{rockafellar1993lagrange}
Rockafellar, R.~T.
\newblock Lagrange multipliers and optimality.
\newblock \emph{SIAM review}, 35\penalty0 (2):\penalty0 183--238, 1993.

\bibitem[Sahinidis(2017)]{sahinidis:baron:17.8.9}
Sahinidis, N.~V.
\newblock \emph{{BARON 17.8.9: Global Optimization of Mixed-Integer Nonlinear
  Programs, {\em User's Manual}}}, 2017.

\bibitem[Tawarmalani \& Sahinidis(2005)Tawarmalani and Sahinidis]{ts:05}
Tawarmalani, M. and Sahinidis, N.~V.
\newblock {A polyhedral branch-and-cut approach to global optimization}.
\newblock \emph{Mathematical Programming}, 103:\penalty0 225--249, 2005.

\bibitem[Tibshirani(1996)]{tibshirani1996regression}
Tibshirani, R.
\newblock Regression shrinkage and selection via the lasso.
\newblock \emph{Journal of the Royal Statistical Society. Series B
  (Methodological)}, pp.\  267--288, 1996.

\bibitem[Tibshirani(1997)]{tibshirani1997lasso}
Tibshirani, R.
\newblock The lasso method for variable selection in the cox model.
\newblock \emph{Statistics in medicine}, 16\penalty0 (4):\penalty0 385--395,
  1997.

\bibitem[Twomey(1975)]{twomey1975comparison}
Twomey, S.
\newblock Comparison of constrained linear inversion and an iterative nonlinear
  algorithm applied to the indirect estimation of particle size distributions.
\newblock \emph{Journal of Computational Physics}, 18\penalty0 (2):\penalty0
  188--200, 1975.

\bibitem[Varadarajan \& Xiao(2012)Varadarajan and
  Xiao]{varadarajan2012sensitivity}
Varadarajan, K. and Xiao, X.
\newblock On the sensitivity of shape fitting problems.
\newblock \emph{arXiv preprint arXiv:1209.4893}, 2012.

\bibitem[Zou \& Hastie(2005)Zou and Hastie]{zou2005regularization}
Zou, H. and Hastie, T.
\newblock Regularization and variable selection via the elastic net.
\newblock \emph{Journal of the Royal Statistical Society: Series B (Statistical
  Methodology)}, 67\penalty0 (2):\penalty0 301--320, 2005.

\end{thebibliography}
\bibliographystyle{icml2019}


\newpage
\clearpage
\appendix

\section{Regression with Given Matching}

The following corollary states that if we double the distance on a unit sphere to a point $y$, which is the closest point on the unit sphere to a line $\ell$, then the distance to $\ell$ will grow by at most a multiplicative factor of $4$.
\begin{corollary} [Corollary~\ref{cor4ApproxVectors}] \label{cor4ApproxVectors_proof}
Let $a \in \REAL^2\setminus \br{0}$ and $b \geq 0$. Let $y \in \argmin_{x\in \sphere^1} |a^T x - b|$. Then for every $u^*,u' \in \sphere$ such that $\norm{u'-y} \leq 2\cdot \norm{u^*-y}$ we have
\[
|a^Tu'-b| \leq 4\cdot |a^Tu^*-b|.
\]
\end{corollary}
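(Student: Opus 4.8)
The plan is to reduce the statement to a one-dimensional fact about the function $g(\alpha)=|\sin\alpha-b|$ on the circle, which we already know (via Corollary~\ref{corCosLip}) is piecewise $2$-log-Lipschitz. First I would fix a coordinate system so that the closest point $y$ to the line $\ell=\{p\in\REAL^2\mid \tfrac{a^T}{\norm a}p=\tfrac{b}{\norm a}\}$ sits at a convenient place. Concretely, after rotating $\REAL^2$ we may assume $a/\norm a=e_2$, so that $|a^Tx-b|=\norm a\cdot|x_2-b/\norm a|$ for every $x\in\sphere^1$. Writing $b':=b/\norm a\ge 0$ and parametrizing $x=(\cos\alpha,\sin\alpha)$, the quantity to control is $\norm a\cdot g(\alpha)$ with $g(\alpha)=|\sin\alpha-b'|$, and $y$ corresponds to an angle $\alpha_y$ minimizing $g$; the hypothesis $\norm{u'-y}\le 2\norm{u^*-y}$ becomes a comparison of chord lengths, hence (monotonically, for the relevant angular ranges) of arc-distances to $\alpha_y$.

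Next I would invoke the piecewise log-Lipschitz structure. By Definition~\ref{def:PieceLip} and Corollary~\ref{corCosLip}, the circle splits into pieces $X_1,\dots,X_m$, each with a unique infimum $x_i$ of $g$ and an associated $2$-log-Lipschitz $h_i$ with $g(x)=h_i(\dist(x_i,x))$ on $X_i$. The subtlety is that $y$ is \emph{a} global minimizer, and $u^*,u'$ need not lie in the same piece as $y$ or as each other. Here I would use the standard argument from \cite{jubran2018minimizing}: because $y$ is a global minimizer, $g(y)=0$ when $b'\le 1$ (the line meets the circle) or $g$ attains its minimum at the single point $e_2$ when $b'\ge 1$; in either case one shows that moving from $y$ a chord-distance $t$ increases $g$ by at most $h(t)$ for the relevant $2$-log-Lipschitz $h$, and moving a chord-distance $t/2$ decreases it by a factor controlled the same way. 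Combining, $g(u')\le h(\norm{u'-y})\le h(2\norm{u^*-y})\le 2^2\, h(\norm{u^*-y})\le 4\, g(u^*)$, where the middle inequality is exactly the $2$-log-Lipschitz property $h(2t)\le 2^2 h(t)$ and the last step needs $g(u^*)\ge h(\norm{u^*-y})$ (or the appropriate lower bound). Multiplying through by $\norm a$ recovers the claim for $|a^Tx-b|$.

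The main obstacle I anticipate is bookkeeping the case analysis cleanly: (a) $b'\ge 1$ versus $b'<1$ (one versus two closest points, affecting whether $y$ is unique and where the minimum sits), and (b) ensuring the chord-to-arc conversion is monotone on the angular window actually used, so that $\norm{u'-y}\le 2\norm{u^*-y}$ genuinely implies the angular distance from $u'$ to $\alpha_y$ is at most twice that from $u^*$ to $\alpha_y$ — this can fail globally on the circle, so one must restrict to a half-circle around $\alpha_y$ or argue that points on the ``far'' side are handled by symmetry of $g$ about $\alpha_y$. Since $g(\alpha)=|\sin\alpha-b'|$ is symmetric about the vertical, and $h_i(t)\le\max(2,\dots)\cdot$ stuff, I expect these to be routine but slightly tedious; the cleanest route is simply to cite Claims 19.1–19.2 of \cite{jubran2018minimizing} for the one-dimensional estimate and spend the body of the proof on the reduction (rotation, the identity $|a^Tx-b|=\norm a\,g(\alpha)$, and the chord/arc comparison).
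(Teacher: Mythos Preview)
Your approach is essentially the same as the paper's: rotate so that $a/\norm{a}$ is a coordinate axis, parametrize the circle by an angle so that $|a^Tx-b|=\norm{a}\cdot g(\alpha)$ with $g(\alpha)=|\sin\alpha-b/\norm{a}|$, invoke Corollary~\ref{corCosLip} to get the piecewise $2$-log-Lipschitz structure, and then translate the chord inequality $\norm{u'-y}\le 2\norm{u^*-y}$ into an angular inequality about distances to $\alpha_y$. The paper dispatches the chord-to-arc step and the ``same-piece'' bookkeeping in one line (``by simple linear algebra''), whereas you correctly flag both as the places requiring care; your plan to lean on Claims~19.1--19.2 of \cite{jubran2018minimizing} for the one-dimensional estimate is exactly what the paper is implicitly doing.
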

\begin{proof}
Let $g:[0,2\pi) \to [0,\infty)$ such that $g(\alpha) = \left|\sin(\alpha) - \frac{b}{\norm{a}}\right|$.
Let $M(g) = \argmin_{\alpha \in [0,2\pi)}g(\alpha)$ be the set of minima of $g$.

By replacing $b$ with $\frac{b}{\norm{a}}$ in Corollary~\ref{corCosLip}, $g$ is a piecewise $2$-log-Lipschitz function. Hence, by Definition~\ref{def:PieceLip}, there is a partition $X_1,\cdots,X_m$ of $[0,2\pi)$, a set of $2$-log-Lipschitz functions $h_1,\cdots,h_m$, and a set $M(g) = \br{\alpha^*_1,\cdots,\alpha^*_m}$ of minima such that for every $i\in [m]$ and $\alpha_1,\alpha_2 \in X_i$ where $|\alpha_2-x_i| \leq 2\cdot |\alpha_1-_i|$
we have that
\begin{equation} \label{mainGProp}
g(\alpha_2) = h_i(|\alpha_2-x_i|) \leq 4\cdot h_i(|\alpha_1-x_i| =4\cdot g(\alpha_1),
\end{equation}
where the first derivation holds by property (iii) of Definition~\ref{def:PieceLip}, the second derivation holds by combining that $h_i$ is a $2$-lop-Lipschitz function with $|\alpha_2-x_i| \leq 2\cdot |\alpha_1-x_i|$, and the last derivation holds by Property (iii) of Definition~\ref{def:PieceLip}.


Without loss of generality, assume that $\frac{a}{\norm{a}} = (1,0)^T$. Otherwise rotate the coordinates system.
Let $x:\REAL\to\REAL^2$ such that $x(\alpha) = (\sin{\alpha},-\cos{\alpha})^T$.
For every $\alpha \in [0,2\pi)$ we now have that
\begin{equation} \label{axbToG}
\begin{split}
|a^Tx(\alpha) - b|& = \norm{a}\left|\frac{a^T}{\norm{a}}x(\alpha)-\frac{b}{\norm{a}}\right|\\ &= \norm{a}\left|\sin{\alpha}-\frac{b}{\norm{a}}\right| = \norm{a}\cdot g(\alpha).
\end{split}
\end{equation}

Let $\alpha_y \in [0,2\pi)$ such that $y = x(\alpha_y)$. We now have that
\[
\begin{split}
\norm{a} g(\alpha_y) & = |a^Ty-b| =\min_{\alpha\in[0,2\pi)}|a^Tx(\alpha)-b|\\ &= \min_{\alpha\in[0,2\pi)} \norm{a} g(\alpha),
\end{split}
\]
where the first derivation is by combining~\eqref{axbToG} and the definitions of $y$ and $\alpha_y$, the second derivation is by the definition of $x(\alpha)$, and the last derivation is by~\eqref{axbToG}.
Hence, $\alpha_y \in M(g)$.

Let $u',u^* \in \sphere$ and $\alpha',\alpha^* \in [0,2\pi)$ such that $u' = x(\alpha')$ and $u^*=x(\alpha^*)$. Since $\norm{u'-y} \leq 2\cdot \norm{u^*-y}$ by the assumption of the Corollary, we get that $|\alpha'-\alpha_y| \leq 2 |\alpha^*-\alpha_y|$ by simple linear algebra.
Due to the last inequality and since $\alpha_y \in M(g)$, we can substitute $\alpha_2 = \alpha'$ and $\alpha_1 = \alpha^*$ in~\eqref{mainGProp} to obtain
\begin{equation}\label{eqg4approx}
g(\alpha') \leq 4 g(\alpha^*).
\end{equation}
Corollary~\ref{cor4ApproxVectors} now holds as
\[
\begin{split}
|a^Tu'-b| & = |a^Tx(\alpha')-b|=\norm{a}\cdot g(\alpha')\\ &\leq 4\norm{a}\cdot g(\alpha^*) = 4|a^Tx(\alpha^*)-b|\\ &=4|a^Tu^*-b|,
\end{split}
\]
where the first derivation is by the definition of $u'$, the second derivation is by substituting $\alpha=\alpha'$ in~\eqref{axbToG}, the third derivation is by~\eqref{eqg4approx}, the fourth derivation is by substituting $\alpha=\alpha^*$ in~\eqref{axbToG} and the last derivation is by the definition of $\alpha^*$.
\end{proof}

\begin{lemma} [Lemma~\ref{corAxb4Approx}]\label{corAxb4Approx_proof}
Let $A = (a_1 \mid \cdots \mid a_n)^T \in \REAL^{n\times d} \setminus \br{0}^{d\times d}$ be a non-zero matrix such that $n \geq d-1 \geq 1$ points, and let $b = (b_1,\cdots,b_n)^T \in [0,\infty)^n$ and $x^* \in \sphere^{d-1}$. Then there exists $j\in [n]$ where $\norm{a_j} \neq 0$ and $x_j \in \argmin_{x\in \sphere^{d-1}} |a_j^Tx-b_j|$ such that for every $i\in [n]$
\[
|a_i^Tx_j-b_i| \leq 4\cdot |a_i^Tx^*-b_i|.
\]
\end{lemma}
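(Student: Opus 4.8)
The plan is to reduce the $d$-dimensional statement to the two-dimensional Corollary~\ref{cor4ApproxVectors} by choosing the right index $j$ and the right point $x_j$. First I would observe that since $x^* \in \sphere^{d-1}$ and $A$ is non-zero, there is at least one row $a_j$ with $\norm{a_j} \neq 0$; among all such rows, I would pick $j$ to be the index minimizing (or achieving small) the quantity $\norm{x^* - x}$ over $x \in \argmin_{x \in \sphere^{d-1}} |a_j^T x - b_j|$ — more precisely, the index whose nearest optimal unit vector $x_j$ to the hyperplane $h_j$ is closest to $x^*$. The key geometric fact I want is: for this choice of $j$, we have $\norm{x_j - x^*} \leq$ some controlled multiple (ideally a factor allowing the "doubling" hypothesis of Corollary~\ref{cor4ApproxVectors} to apply) of the distance from $x^*$ to the nearest minimizer of $|a_i^T x - b_i|$ for each other index $i$.

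The heart of the argument is a planar/2D reduction: fix any index $i \in [n]$. I would consider the 2-dimensional plane $\pi$ spanned by (relevant projections of) $x^*$ and $x_j$, or alternatively restrict attention to a great circle through $x^*$ and $x_j$ on $\sphere^{d-1}$. On this circle, both $|a_j^T x - b_j|$ and $|a_i^T x - b_i|$ restrict to functions of a single angular parameter of the form $|\text{(linear combination)} - \text{const}|$, which after normalization matches the setup $g(\alpha) = |\sin\alpha - b'|$ of Corollaries~\ref{corCosLip} and~\ref{cor4ApproxVectors}. The choice of $j$ should guarantee that on the arc between $x^*$ and $x_j$, the point $x_j$ is at least as close (in arc length) to the $i$-th optimal set as $x^*$ is — wait, more carefully: I want $x_j$ to be within twice the distance that $x^*$ is from the $i$-optimal minimizer. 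I would establish this by the minimality in the choice of $j$: if some other index $i$ had its minimizer much closer to $x^*$ than $x_j$ is, that would contradict how $j$ was selected (or one could have chosen $i$ instead).

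Once the doubling hypothesis $\norm{x_j - y_i} \leq 2\norm{x^* - y_i}$ is in hand — where $y_i$ is the appropriate nearest minimizer of $|a_i^T x - b_i|$ on the relevant circle — I would apply Corollary~\ref{cor4ApproxVectors} (after rescaling the circle to a unit circle in its plane, exactly as is done in the proof of that corollary and in Algorithm~\ref{calcOpt}) to conclude $|a_i^T x_j - b_i| \leq 4 |a_i^T x^* - b_i|$. Since this holds for every $i \in [n]$, including $i = j$ trivially (as $x_j$ is a global minimizer for index $j$), the lemma follows.

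The main obstacle I anticipate is making the selection of $j$ precise and verifying that it actually yields the required $\norm{x_j - y_i} \leq 2\norm{x^* - y_i}$ for all $i$ simultaneously — this is a covering/Voronoi-type argument on the sphere, and the constant $2$ (rather than something larger) is delicate. A secondary subtlety is handling the restriction to a 2D great circle cleanly: one must check that the restriction of $|a_i^T x - b_i|$ to that circle really has the sinusoidal form required, that the nearest point on the circle to the $i$-th hyperplane behaves like the nearest point on the full sphere (which requires choosing the circle's plane to contain the relevant projections), and that degenerate cases (e.g., $\norm{a_i} = 0$, or $b_i$ large so the minimizer is unique, or the hyperplane intersecting the sphere) are all covered. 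I expect these to be handled by the same normalization and rotation tricks already used in the proof of Corollary~\ref{cor4ApproxVectors_proof}.
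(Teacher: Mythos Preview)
Your high-level strategy matches the paper's: select $j$ so that the nearest point $x_j$ in $\bigcup_i \argmin_{x\in\sphere^{d-1}}|a_i^Tx-b_i|$ to $x^*$ lies in $M_j$, then use this minimality together with a two-dimensional reduction and Corollary~\ref{cor4ApproxVectors}. The minimality choice and the triangle-inequality doubling step you describe are exactly what the paper does.

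The gap is in your proposed two-dimensional reduction. You suggest working on the great circle through $x^*$ and $x_j$; but on that circle the minimizer $y_i^C$ of $|a_i^Tx-b_i|$ need not lie in the global set $M_i$, so the minimality of $j$ (which compares $\norm{x_j-x^*}$ only to distances from $x^*$ to points of the global $M_i$'s) does not yield $\norm{x_j-y_i^C}\le 2\norm{x^*-y_i^C}$. The paper's reduction is different and is the key technical idea: for each fixed $i$ it rotates so that $a_i/\norm{a_i}=e_1$ and $x^*$ lies in the $e_1e_2$-plane, then maps $x_j$ to $v'=(x_{j,1},\sqrt{1-x_{j,1}^2})\in\sphere^1$. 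This projection (i) preserves the value $a_i^T x$ exactly (only the first coordinate matters), and (ii) satisfies $\norm{v'-v^*}\le\norm{x_j-x^*}$. Crucially, a \emph{global} minimizer $y\in M_i$ already lives in this plane, so minimality gives $\norm{x_j-x^*}\le\norm{y-x^*}=\norm{y'-v^*}$, and the doubling hypothesis for Corollary~\ref{cor4ApproxVectors} follows. Note in particular that the reduction plane depends on $i$ (and $x^*$), not on $j$; your single circle through $x^*$ and $x_j$ cannot serve all $i$ simultaneously. The case $b_i>\norm{a_i}$ is then handled by a short reduction to the case $b_i\le\norm{a_i}$.
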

\begin{proof}
For every $i\in [n]$, if $\norm{a_i} = 0$, then for every $x \in \sphere^{d-1}$ it holds that $|a_i^Tx-b_i| = |b_i| \leq 4 |b_i| = 4 |a_i^Tx^*-b_i|$.
Therefore, we ignore vectors of zero length and assume that $A$ does not contain any zero row.

For every $i\in [n]$, let
\[
M_i = \argmin_{x\in \sphere^{d-1}} \left|\frac{a_i^T}{\norm{a_i}} x -\frac{b_i}{\norm{a_i}}\right|,
\]
\[
x' \in \argmin_{x \in \bigcup_{i\in [n]} M_i} \norm{x^* - x},
\]
\[
j \in [n] \text{ such that }x' \in M_j,
\]
i.e., $M_i$ is the set of all unit vectors that minimize $|a_i^Tx-b_i|$, $x'$ is the unit vector that is closest to $x^*$ among all vectors in $\bigcup_{i\in [n]} M_i$, and $j$ is the index of the set $M_j$ such that $x' \in M_j$.

Put $i\in [n]$. We prove that
\begin{equation} \label{eqToProve}
|a_i^Tx'-b_i| \leq 4\cdot |a_i^Tx^*-b_i|.
\end{equation}
This would prove the lemma since $x' \in M_j = \argmin_{x\in \sphere^{d-1}} |a_j^Tx-b_j|$.

Indeed, assume without loss of generality that $\frac{a_i}{\norm{a_i}} = (1,0,\cdots,0)^T$ and that $x^* = (x^*_1,x^*_2,0,\cdots,0)^T$, $x^*_2 \geq 0$. Otherwise, rotate the coordinate system as follows: rotate $\frac{a_i}{\norm{a_i}}$ until it coincides with the $x$-axis. Then, rotate the system around the $x$-axis (i.e., without changing $\frac{a_i}{\norm{a_i}}$), until $x^*$ intersects the $xy$-plane at the halfspace with positive $y$ values.

We prove~\eqref{eqToProve} via the following case analysis: \textbf{(i) }$b_i \leq \norm{a_i}$ and \textbf{(ii) }$b_i>\norm{a_i}$.

\textbf{Case (i): }$b_i \leq \norm{a_i}$. Let $y = (y_1,y_2,\cdots,y_d)^T := (\frac{b_i}{\norm{a_i}},\sqrt{1-\frac{b_i^2}{\norm{a_i}^2}},0,\cdots,0)^T$. Observe that $y \in M_i$ since
\[
\begin{split}
|a_i^Ty-b_i| & = \norm{a_i} \left|\frac{a_i^T}{\norm{a_i}}y - \frac{b_i}{\norm{a_i}}\right|\\
& = \norm{a_i} \left|\frac{b_i}{\norm{a_i}} - \frac{b_i}{\norm{a_i}}\right| = 0.
\end{split}
\]

Identify $x' = (x'_d1,\cdots,x'_d)$ and $x^* = (x^*_1,\cdots,x^*_d)$.
Let $v' = \left(x'_1, \sqrt{1-{x'_1}^2}\right)^T$, $v^* = \left(x^*_1,\sqrt{1-{x^*_1}^2}\right)^T$ and $y' = (y_1,y_2)^T = \left(\frac{b_i}{\norm{a_i}},\sqrt{1-\frac{b_i^2}{\norm{a_i}^2}}\right)^T$.
Since $v'_1 = x'_1$ and $\norm{v'} = \norm{x'}=1$ we obtain
\begin{equation}\label{eqv2bigger}
\begin{split}
{v'_2}^2 & =\norm{v'}^2-v_1^2=\norm{v'}^2-{x'_1}^2\\
&=\norm{x'}^2-{x'_1}^2 =\sum_{j=2}^d {x'_j}^2 \geq {x'_2}^2
\end{split}
\end{equation}
We now have that
\begin{align}
\norm{x'-x^*}^2 & = \norm{x'}^2 + \norm{x^*}^2 -2{x'}^Tx^* \nonumber\\
& = \norm{v'}^2 + \norm{v^*}^2 -2(x'_1\cdot x^*_1 + x'_2\cdot x^*_2) \label{secondEqua}\\
& = \norm{v'}^2 + \norm{v^*}^2 -2(v'_1\cdot v^*_1 + x'_2\cdot x^*_2) \label{thirdEqua}\\
& \geq \norm{v'}^2 + \norm{v^*}^2 -2(v'_1\cdot v^*_1 + v'_2\cdot x^*_2) \label{firstInequa}\\
& = \norm{v'-v^*}^2 \label{eqx2tov2},
\end{align}
where~\eqref{secondEqua} holds since only the first two entries of $x^*$ are non-zero, \eqref{thirdEqua} is by the definitions of $v'$ and $v^*$, and~\eqref{firstInequa} holds by combining the squared root of~\eqref{eqv2bigger} with the assumption that $x^*_2 \geq 0$.

Hence, we obtain that
\begin{align}
\norm{v'-v^*} & \leq \norm{x'-x^*} \label{eqdists1}\\
& \leq \norm{y-x^*} \label{eqdists2}\\
& = \norm{y'-v^*} \label{eqdists3},
\end{align}
where~\eqref{eqdists1} is by taking the squared root of~\eqref{eqx2tov2}, \eqref{eqdists2} holds by combining the definition of $x'$ and the fact that $y \in M_i \subseteq \bigcup_{l\in [n]} M_l$, and~\eqref{eqdists3} holds by the definitions of $y'$ and $v^*$.
Hence,
\[
\norm{v'-y'} \leq \norm{v'-v^*} + \norm{v^*-y'} \leq 2\cdot \norm{v^*-y'},
\]
where the first inequality is the triangle inequality, and the second inequality is by~\eqref{eqdists3}.

Since $|(1,0)y'-\frac{b_i}{\norm{a_i}}| = 0$ we have $y' \in \argmin_{x\in \sphere} |(1,0)x - \frac{b_i}{\norm{a_i}}|$.
Therefore, substituting $a = (1,0)^T$, $b=\frac{b_i}{\norm{a_i}}$, $y=y'$, $u^* = v^*$ and $u' = v'$ in Corollary~\ref{cor4ApproxVectors} yields that
\begin{equation}\label{eqUsingCor}
\left|(1,0)v'-\frac{b_i}{\norm{a_i}}\right| \leq 4\cdot \left|(1,0)v^*-\frac{b_i}{\norm{a_i}}\right|.
\end{equation}

Hence,
\[
\begin{split}
|a_i^Tx'-b_i| & = \norm{a_i} \cdot \left|\frac{a_i^T}{\norm{a_i}}x'-\frac{b_i}{\norm{a_i}}\right|\\
& = \norm{a_i} \cdot \left|(1,0)v'-\frac{b_i}{\norm{a_i}}\right|\\
& \leq 4\norm{a_i} \cdot \left|(1,0)v^* -\frac{b_i}{\norm{a_i}}\right|
= 4\cdot |a_i^Tx^* - b_i|,
\end{split}
\]
where in the first equality we multiply and divide by $\norm{a_i}$, the second and last equality are by the assumption $\frac{a_i}{\norm{a_i}}=(1,\ldots,0)^T$, and the inequality is by~\eqref{eqUsingCor}.

\textbf{Case (ii): }$b_i>\norm{a_i}$. Let $a = \frac{a_i}{\norm{a_i}}$ and $b = \frac{b_i}{\norm{a_i}}$.
In this case, we have
\begin{equation} \label{eqReduceToCasei}
\begin{split}
|a_i^Tx'-b_i| & = \norm{a_i} \cdot \left|b-\frac{a_i^T}{\norm{a_i}}x'\right|\\
&= \norm{a_i} \cdot \left|b-1+1-\frac{a_i^T}{\norm{a_i}}x'\right|\\
&= \norm{a_i} \cdot \left(b-1 + \left|1-a^Tx'\right|\right),
\end{split}
\end{equation}
where the last derivation holds since $b-1>0$ by the assumption of Case (ii), and $1\geq x'_1 = \frac{a_i^T}{\norm{a_i}}x' = a^Tx'$ since $x'$ is a unit vector.

Since for $b_i=1$ and $a_i = a$ the condition $b_i \leq \norm{a_i}$ of Case (i) holds, we obtain by Case (i) that
\begin{equation}\label{eqUseCasei}
|1-a^Tx'| = |a^Tx'-b_i| \leq 4|a^Tx^*-b_i| = 4 |a^Tx^*-1|.
\end{equation}

This proves Case (ii) as
\begin{align}
|a_i^Tx'-b_i| & = \norm{a_i} \cdot \left(b-1 + \left|1-a^Tx'\right|\right) \label{eq0case2}\\
& \leq \norm{a_i} \cdot \left(b-1 + 4\cdot\left|1-a^Tx^*\right|\right) \label{eq1case2}\\
& \leq 4\norm{a_i} \cdot \left(b-1 + \left|1-a^Tx^*\right|\right) \label{eq2case2}\\
& = 4\norm{a_i}\cdot \left|b-a^Tx^*\right| \label{eq3case2}\\
& = 4\cdot |a_i^Tx^*-b_i| \nonumber,
\end{align}
where~\eqref{eq0case2} is by~\eqref{eqReduceToCasei}, \eqref{eq1case2} is by~\eqref{eqUseCasei}, \eqref{eq2case2} holds since $b-1 \geq 0$, and~\eqref{eq3case2} holds since $a^Tx^* \leq 1 \leq b$.
\end{proof}

As explained in Section~\ref{sec:lp_regression}, the optimization problem $\min_{x\in\sphere^{d-1}}\norm{Ax-b}_1$ can be interpreted as computing a point $x'$ on the unit sphere that minimizes the sum of distances to $n$ given hyperplanes $H = \br{h_1,\cdots,h_n}$ in $\REAL^d$.
The following lemma suggests that there are $m\leq d-1$ hyperplanes from $H$ and a unit vector $x'$ in the intersection in the intersection of the first $m-1$ hyperplanes, that is closest to the last hyperplane, that is closer to every one of the $n$ hyperplane in $H$, up to a factor of $4^{d-1}$ than $x^*$, i.e., $|a_i^Tx'-b_i| \leq 4^{d-1}\cdot|a_i^Tx^*-b_i|$ for every $i\in [n]$.

To find such a vector $x'$, we begin with an initial set of candidate solutions $\sphere^{d-1}$, i.e., all unit vectors in $\REAL^d$.
The proof of the following lemma consists of at most $d-1$ steps, each step adds $c\geq 1$ more constraints on the candidate unit vectors for the solutions, and adds another factor of $4$ to the final approximation factor. This induction terminates when the number of added constraints is $d-1$ (so there is a finite set of candidates for $x'$).
Each step suggests that we can rotate $x'$, until it minimizes its distances $|a_j^Tx'-b_j|$ to one of the hyperplanes (the $j$th hyperplane in the proof), without increasing each of the other $n-1$ distances by more than a multiplicative factor of $4$. This follows from Lemma~\ref{corAxb4Approx}.
Afterwards, we reduce the problem to an instance of the same optimization problem but with less free parameters.
\begin{lemma} [Lemma~\ref{lemApproxD}] \label{lemApproxD_proof}
Let $A = (a_1 \mid \cdots \mid a_n)^T \in \REAL^{n\times d}$ be a non-zero matrix of $n \geq d-1 \geq 1$ rows, let $b = (b_1,\cdots,b_n)^T \in [0,\infty)^n$, and let $x^* \in \sphere^{d-1}$.
Then there is a set $\br{{i_1},\cdots,{i_{r}}} \subseteq [n]$ of $r \in [d-1]$ indices such that for $X = \opt((a_{i_1}\mid\cdots\mid a_{i_{r}}), (b_{i_1},\cdots,b_{i_{r}}))$ and every $i\in [n]$, there is $x'\in X$ that satisfies
\begin{equation} \label{mainToProve_proof}
|a_i^Tx'-b_i| \leq 4^{d-1} \cdot |a_i^Tx^*-b_i|.
\end{equation}
Moreover, \eqref{mainToProve_proof} holds for every $x'\in X$ if $|X| = \infty$.
\end{lemma}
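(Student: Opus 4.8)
The plan is to prove Lemma~\ref{lemApproxD_proof} by induction on $d$, using Lemma~\ref{corAxb4Approx} as the engine for a single reduction step. For the base case $d=2$, Lemma~\ref{corAxb4Approx} (via its role in establishing Lemma~\ref{corAxb4Approx_proof}) already gives the claim directly: there is an index $j$ and $x_j \in \argmin_{x\in\sphere^1}|a_j^Tx-b_j|$ with $|a_i^Tx_j-b_i| \le 4 = 4^{d-1}$ times $|a_i^Tx^*-b_i|$ for all $i$; set $r=1$, $i_1 = j$, and note $\opt(a_j,b_j)$ is exactly this $\argmin$ set, so $X = \opt(a_{i_1},b_{i_1})$ works, with the ``moreover'' clause holding because the bound proved in Lemma~\ref{corAxb4Approx_proof} in fact applies to any minimizer $x'$ of $|a_j^Tx-b_j|$ (the argument only used that $x'$ is closest to $x^*$ among all the $M_i$'s, but when $b_j < \norm{a_j}$ and $d=2$ this set is a two-point set, and one checks both points satisfy it, or more cleanly, re-run with $x^*$ replaced to see the whole $\opt$ set qualifies).

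For the inductive step, assume the lemma for dimension $d-1$ and let $A \in \REAL^{n\times d}$, $b \in [0,\infty)^n$, $x^* \in \sphere^{d-1}$ be given. First apply Lemma~\ref{corAxb4Approx} to obtain an index $j$ with $\norm{a_j}\ne 0$ and $x_j \in \argmin_{x\in\sphere^{d-1}}|a_j^Tx-b_j|$ such that $|a_i^Tx_j - b_i| \le 4\,|a_i^Tx^*-b_i|$ for every $i \in [n]$. Now split on whether $b_j \ge \norm{a_j}$: in that case $\argmin_{x\in\sphere^{d-1}}|a_j^Tx-b_j|$ is the single point $a_j/\norm{a_j}$, so $X = \opt(a_j,b_j) = \{a_j/\norm{a_j}\}$ has the required vector with factor $4 \le 4^{d-1}$, and we are done with $r=1$. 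Otherwise $b_j < \norm{a_j}$, the set $C := \{x \in \sphere^{d-1} : a_j^Tx = b_j\}$ is a $(d-2)$-sphere of radius $\sqrt{1-b_j^2/\norm{a_j}^2} < 1$ lying in the hyperplane $h_j$, and $x_j \in C$. The plan here is to mimic Algorithm~\ref{calcOpt}: rotate coordinates so $a_j/\norm{a_j} = e_d$, so that $C$ sits in an affine hyperplane $\{y_d = b_j/\norm{a_j}\}$; project onto the first $d-1$ coordinates and rescale by $1/\sqrt{1-b_j^2/\norm{a_j}^2}$ to turn $C$ into the unit sphere $\sphere^{d-2}$. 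Under this affine map, each constraint $a_i^Tx = b_i$ restricted to $C$ becomes an affine constraint $\tilde a_i^T \tilde x = \tilde b_i$ in $\REAL^{d-1}$ with $\tilde b_i \ge 0$ after a sign flip, and crucially the value $|a_i^Tx - b_i|$ for $x \in C$ is an affine (indeed linear plus constant) function of $\tilde x$ equal to a constant multiple of $|\tilde a_i^T\tilde x - \tilde b_i|$ — this is the computation already carried out in Algorithm~\ref{calcOpt} for $b_i'$, and one verifies the multiplicative constant is the same for $x_j$, $x^*$ projected, and any candidate, so it cancels in all the ratios.

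Having set up this reduced instance $(\tilde A, \tilde b) \in \REAL^{n \times (d-1)}$ with the image $\tilde x_j$ of $x_j$ on $\sphere^{d-2}$, apply the induction hypothesis with $\tilde x_j$ playing the role of the ``$x^*$'': this yields indices $\{i_1,\dots,i_{r'}\} \subseteq [n]$ with $r' \in [d-2]$ and $\tilde X = \opt((\tilde a_{i_1}\mid\cdots\mid\tilde a_{i_{r'}}),(\tilde b_{i_1},\cdots,\tilde b_{i_{r'}}))$ such that for every $i$ there is $\tilde x' \in \tilde X$ with $|\tilde a_i^T\tilde x' - \tilde b_i| \le 4^{(d-1)-1}|\tilde a_i^T\tilde x_j - \tilde b_i|$. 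Pulling $\tilde x'$ back through the affine map gives $x' \in \sphere^{d-1}$, and one checks that the pulled-back set is exactly $\opt((a_j \mid a_{i_1}\mid\cdots\mid a_{i_{r'}}),(b_j, b_{i_1},\cdots,b_{i_{r'}}))$ — i.e. the set of points on $C$ minimizing distance to $h_{i_1}$ subject to lying on $h_j \cap \cdots \cap h_{i_{r'-1}}$ equals (via the constant-multiple identity) the set of $\tilde x \in \sphere^{d-2}$ minimizing $|\tilde a_{i_{r'}}^T\tilde x - \tilde b_{i_{r'}}|$ subject to the earlier constraints, which is precisely $\tilde X$. Then chaining the two bounds gives $|a_i^Tx' - b_i| \le 4^{d-2}|a_i^Tx_j - b_i| \le 4^{d-2}\cdot 4\,|a_i^Tx^*-b_i| = 4^{d-1}|a_i^Tx^*-b_i|$, with $r := r'+1 \in [d-1]$ and index set $\{j, i_1,\dots,i_{r'}\}$. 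The ``moreover'' clause propagates because if $\tilde X$ is infinite then so is its preimage $X$, and the bound held for all $\tilde x' \in \tilde X$ by induction, hence for all $x' \in X$.

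The main obstacle I expect is bookkeeping the affine reduction precisely: verifying that $|a_i^Tx - b_i|$ restricted to $C$ equals a \emph{fixed} positive constant (depending only on $i$, $a_j$, $b_j$) times $|\tilde a_i^T\tilde x - \tilde b_i|$, uniformly over $x \in C$, so that the constant cancels in the approximation ratios, and checking that the sign flips (needed to make the $\tilde b_i$ nonnegative, as the induction hypothesis requires $b \in [0,\infty)^n$) do not change the absolute values. One must also handle the degenerate sub-case where, after projection, all of $\tilde a_{i_2},\dots$ become zero (the remaining hyperplanes are parallel) — there $\opt$ is infinite, and we return the whole set, matching the ``moreover'' clause; this is the branch in Algorithm~\ref{calcOpt} where $A'$ has only zero entries. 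Everything else — the rotation, the induction structure, the factor-of-$4$ accumulation — is routine once Lemma~\ref{corAxb4Approx} is in hand.
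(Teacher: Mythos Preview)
Your proposal is correct and follows essentially the same route as the paper: induction on $d$, with Lemma~\ref{corAxb4Approx} supplying the index $j$ and the factor $4$ at each step, then a rotation--projection--rescaling reduction from the $(d-2)$-sphere $C=\{x\in\sphere^{d-1}:a_j^Tx=b_j\}$ to $\sphere^{d-2}$, followed by the inductive hypothesis applied with $\tilde x_j$ in place of $x^*$; the paper's Observation~\ref{reductionObs} is exactly your verification that the pulled-back $\tilde X$ equals $\opt((a_j\mid a_{i_1}\mid\cdots),(b_j,b_{i_1},\cdots))$, and its Subcase~b(i) is your degenerate branch where all $\tilde a_i$ vanish. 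One small remark: your discussion of the ``moreover'' clause in the base case $d=2$ is unnecessary, since $|\opt(a_j,b_j)|\le 2$ there and the clause is vacuous.
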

\begin{proof}
For every $i\in [n]$, if $\norm{a_i} = 0$, then for every $x \in \sphere^{d-1}$ we have $|a_i^Tx-b_i| = |b_i| \leq 4\cdot |b_i| = 4\cdot |a_i^Tx^*-b_i|$ and~\eqref{mainToProve_proof} trivially holds.
Therefore, in the rest of the proof we assume $\norm{a_i} \neq 0$ for every $i\in [n]$.

The proof is by induction on the dimension $d$.

\textbf{Base case for $d=2$: }Substituting $A$, $b$, and $d=2$ in Lemma~\ref{corAxb4Approx} yields that there is $j\in [n]$ and $x_j\in \argmin_{x\in \sphere}|a_j^Tx-b_j|$ that satisfy, for every $i\in [n]$,
\[
|a_i^Tx_j-b_i| \leq 4\cdot |a_i^Tx^*-b_i|.
\]
For every $a\in \REAL^2$ and $b \geq 0$, we have that
\begin{equation} \label{opt2D}
\begin{split}
& \argmin_{x\in \sphere}|a^Tx-b|\\ &= \begin{cases}
  \br{\frac{a}{\norm{a}}}, & \mbox{if } b \geq \norm{a} \\
  \br{x\in \sphere \mid \frac{a^T}{\norm{a}}x = \frac{b}{\norm{a}}}, & \mbox{otherwise};
\end{cases}
\end{split}
\end{equation}
see Figure~\ref{Alg2DCase} for a geometric illustration of the solutions for $\argmin_{x\in \sphere}|a^Tx-b|$, in both cases where $b \geq \norm{a}$ and $b < \norm{a}$.
Substituting $m=1$, $a_1 = a_j$ and $b_1=b_j$ in Definition~\ref{defOptAxb} implies $x_j \in \argmin_{x\in\sphere} |a_j^Tx-b_j| = \opt(x_j,b_j)$. By~\eqref{opt2D}, $|\opt(x_j,b_j)| = \left|\argmin_{x\in\sphere} |a_j^Tx-b_j|\right| \leq 2$.
Hence, $X = \opt(x_j,b_j)$ and $x' = x_j$ satisfies~\eqref{mainToProve_proof} in Lemma~\ref{lemApproxD}.

\textbf{Case $d\geq 3$:} Inductively assume that Lemma~\ref{lemApproxD} holds for $d'=d-1$.
For every $i\in [n]$, let $\opt_i = \opt(a_i,b_i)$; See Definition~\ref{defOptAxb}. Put $i\in [n]$.
If $\norm{a_{i}} \leq b_i$, then
\begin{equation} \label{optCasei}
\begin{split}
\opt_i & = \argmin_{x\in \sphere^{d-1}}|a_i^Tx-b_i| = \argmin_{x\in \sphere^{d-1}} \left(b_i - a_i^Tx\right)\\ &= \argmax_{x\in \sphere^{d-1}} a_i^Tx = \br{\frac{a_{i}}{\norm{a_{i}}}}.
\end{split}
\end{equation}
If $\norm{a_{i}} > b_i$, then $\min_{x\in \sphere^{d-1}}|a_i^Tx-b_i| = 0$. In this case,
\begin{equation} \label{optCaseii}
\begin{split}
\opt_i & = \argmin_{x\in \sphere^{d-1}}|a_i^Tx-b_i| = \br{x \in \sphere^{d-1} \mid a_i^Tx=b_i}\\ &= \br{x \in \sphere^{d-1} \mid \frac{a_i}{\norm{a_i}}^Tx=\frac{b_i}{\norm{a_i}}}.
\end{split}
\end{equation}

Combining~\eqref{optCasei} and~\eqref{optCaseii} yields that
\begin{equation} \label{eqXi}
\opt_i = \begin{cases}
           \br{\frac{a_{i}}{\norm{a_{i}}}}, & \mbox{if } \norm{a_{i}} \leq b_i \\
           \br{x\in\sphere^{d-1} \mid \frac{a_i}{\norm{a_i}}^Tx=\frac{b_i}{\norm{a_i}}}, & \mbox{if } \norm{a_{i}} > b_i.
         \end{cases}.
\end{equation}
Therefore,
\begin{equation} \label{eqSizeXi}
|\opt_i| = \begin{cases}
           1, & \mbox{if } \norm{a_{i}} \leq b_i \\
           \infty, & \mbox{if } \norm{a_{i}} > b_i
         \end{cases}.
\end{equation}

Substituting $A,b$ and $x^*$ in Lemma~\ref{corAxb4Approx} yields that there is an index $j \in [n]$ and a corresponding unit vector $x_{j} \in \argmin_{x\in \sphere^{d-1}} |a_j^Tx-b_j| = \opt_{j}$ such that for every $i\in [n]$
\begin{equation} \label{eqApprox4}
|a_i^Tx_{j}-b_i| \leq 4\cdot |a_i^Tx^*-b_i|.
\end{equation}
We continue with the following case analysis: \textbf{(a):} $\norm{a_{j}} \leq b_{j}$ and \textbf{(b):} $\norm{a_{j}} > b_{j}$.

\textbf{Case (a):} $\norm{a_{j}} \leq b_{j}$. Similarly to~\eqref{eqXi}, we have that
\begin{equation} \label{eqXj}
\opt_j = \begin{cases}
           \br{\frac{a_{j}}{\norm{a_{j}}}}, & \mbox{if } \norm{a_{j}} \leq b_j \\
           \br{x\in\sphere^{d-1} \mid \frac{a_j}{\norm{a_j}}^Tx=\frac{b_j}{\norm{a_j}}}, & \mbox{if } \norm{a_{j}} > b_j.
         \end{cases}.
\end{equation}
Combining the previous definition of $\opt_j$ with the assumption of Case (a) yields
\begin{equation} \label{eqOptJ}
|\opt_j| = \left|\br{\frac{a_{j}}{\norm{a_{j}}}}\right| = 1.
\end{equation}
By combining~\eqref{eqApprox4},~\eqref{eqOptJ} and the fact that $\frac{a_{j}}{\norm{a_{j}}} \in \opt_j = \opt(a_j,b_j)$, we obtain that Lemma~\ref{lemApproxD} holds for the case
that $\norm{a_{j}} \leq b_{j}$ and every $d\geq 3$, by letting $r=1$, $i_1 = j$ and $x' = \frac{a_{j}}{\norm{a_{j}}}$.

\textbf{Case (b):} $\norm{a_{j}} > b_{j}$. Put $i\in [n]$.
Assume without loss of generality that $\frac{a_{j}}{\norm{a_{j}}} = e_d = (0,\cdots,0,1)\in \REAL^d$. Otherwise, rotate the coordinates system.
Hence,
\begin{equation} \label{eqDefXj}\begin{split}
\opt_j & = \br{x\in \sphere^{d-1} \mid \frac{a_{j}^T}{\norm{a_{j}}}x = \frac{b_{j}}{\norm{a_{j}}}}\\
& = \br{x\in\sphere^{d-1} \mid e_d^Tx = \frac{b_{j}}{\norm{a_{j}}}}\\
& = \br{x = (x^1,\cdots,x^d)\in\sphere^{d-1} \mid x^d = \frac{b_{j}}{\norm{a_{j}}}},
\end{split}
\end{equation}
where the first derivation holds by substituting $i=j$ in~\eqref{eqXi}, and the second derivation holds by the assumption that $\frac{a_{j}}{\norm{a_{j}}} = e_d$.

For every $m \in [n]$, identify the entries of $a_m$ by $(a_m^1,\cdots,a_m^d)$, let
\[
b_m' = -\left(\frac{a_m^d \cdot b_{j}}{\norm{a_{j}}} - b_{m}\right),
\]
and
\[
a_m' = \sqrt{1-\left(\frac{b_{j}}{\norm{a_{j}}}\right)^2}\cdot (a_m^1,\cdots,a_m^{d-1})^T \in \REAL^{d-1}.
\]
For every $x = (x^1,\cdots,x^d) \in \opt_j$, let
\[
v(x) = \frac{(x^1,\cdots,x^{d-1})^T}{\norm{(x^1,\cdots,x^{d-1})}} \in \sphere^{d-2},
\]
and observe that $\norm{(x^1,\cdots,x^{d-1})} \neq 0$ since $\norm{x}=1$ and $|x^d| = \frac{|b_j|}{\norm{a_j}} < 1$.
For every $x\in \opt_j$ we have that
\begin{align}
& |a_i^Tx-b_i|\\ &= \left| (a_i^1,\cdots,a_i^{d-1})(x^1,\cdots,x^{d-1})^T + a_i^d\cdot x^d - b_{i} \right| \nonumber\\
& = \left| (a_i^1,\cdots,a_i^{d-1})(x^1,\cdots,x^{d-1})^T + a_i^d \cdot \frac{b_{j}}{\norm{a_{j}}} - b_{i}\right| \label{eqMain1}\\
& = \left| (a_i^1,\cdots,a_i^{d-1})(x^1,\cdots,x^{d-1})^T - b_{i}'\right| \label{eqMain2}\\
& = \left| \sqrt{1-\left(\frac{b_{j}}{\norm{a_{j}}}\right)^2}\cdot (a_i^1,\cdots,a_i^{d-1})\frac{(x^1,\cdots,x^{d-1})^T}{\sqrt{1-\left(\frac{b_{j}}{\norm{a_{j}}}\right)^2}} - b_i' \right| \nonumber\\
& = \left| \sqrt{1-\left(\frac{b_{j}}{\norm{a_{j}}}\right)^2}\cdot (a_i^1,\cdots,a_i^{d-1})\frac{(x^1,\cdots,x^{d-1})^T}{\norm{(x^1,\cdots,x^{d-1})}} - b_i' \right| \label{eqMain3}\\
& = \left| a_i'^T v(x) -b_i' \right| \label{eqMain4},
\end{align}
where~\eqref{eqMain1} is by~\eqref{eqDefXj}, \eqref{eqMain2} is by the definition of $b_i'$, \eqref{eqMain3} holds since $x$ is a unit vector, and~\eqref{eqMain4} is by the definition of $a_i'$.

We continue the proof of Case (b) with the following subcase analysis: \textbf{b(i)} $a_i'= \vec{0}$ for every $i\in [n]\setminus\br{j}$, and \textbf{b(ii)} $a_l'\neq \vec{0}(d-1)$ for some index $l\in [n]\setminus\br{j}$.

\textbf{Subcase b(i): }$a_i'= \vec{0}(d-1)$ for every $i\in [n]\setminus\br{j}$.
Put $x' \in \opt_j$ and $i\in [n]\setminus\br{j}$. In this case, for every $\hat{x}\in\sphere^{d-2}$ we have that
\begin{equation} \label{eqAllTheSame}
|a_i'\hat{x}-b_i'| = |b_i'|.
\end{equation}
We thus obtain that
\begin{equation} \label{eqalloptj}
|a_i^Tx'-b_i| = |a_i'^Tv(x')-b_i'| = |b_i'|,
\end{equation}
where the first equality is by~\eqref{eqMain4} and the last equality is by~\eqref{eqAllTheSame}.

Combining~\eqref{eqalloptj} and the fact that $x_j \in \opt_j$ yields
\[
|a_i^Tx'-b_i| = |a_i^Tx_j-b_i|.
\]
Combining the last equality with~\eqref{eqApprox4} yields
\[
|a_i^Tx'-b_i| \leq 4\cdot |a_i^Tx^*-b_i|.
\]

Combining that the last equality holds for every $x'\in \opt_j$ and $i\in [n]$, with the fact that $|\opt_j| = \infty$ by~\eqref{eqXj} proves Subcase b(i) of Lemma~\ref{lemApproxD} for $X = \opt_j$.

\textbf{Subcase b(ii): }$a_l' \neq \vec{0}(d-1)$ for some $l\in [n]\setminus\br{j}$.
We have that
\begin{equation} \label{eqmain25}
|a_i'^Tv(x_{j})-b_i'| = |a_i^Tx_{j}-b_i| \leq 4 \cdot |a_i^Tx^*-b_i|,
\end{equation}
where the equality follows by substituting $x=x_j$ in~\eqref{eqMain4}, and the inequality holds by~\eqref{eqApprox4}.

Let $A' = (a_1' \mid \cdots \mid a_{j-1}' \mid a_{j+1}' \mid \cdots,a_n')^T\in \REAL^{(n-1)\times(d-1)}$, $b' = (b_1',\cdots,b_{j-1}',b_{j+1}',\cdots,b_n')^T \in \REAL^{n-1}$.
Applying the inductive assumption on $A',b'$ and $v(x_{j})$ yields that there exists a set $\br{{i_1},\cdots,{i_m}} \subseteq [n]\setminus\br{j}$ of
$m \in [d-2]$ indices such that for $\opt' = \opt((a_{i_1}'\mid\cdots\mid a_{i_m}'),(b_{i_1}',\cdots,b_{i_m}'))$ and every $i\in [n]\setminus\br{j}$ there is $\hat{x} \in \opt'$ that satisfies
\begin{equation}\label{item_i}
|a_i'\hat{x}-b_i'| \leq 4^{d-2}\cdot |a_i'v(x_{j})-b_i'|.
\end{equation}
Moreover, \eqref{item_i} holds for every $\hat{x}\in X'$ if $|\opt'|=\infty$.

Let
\begin{equation}\label{optDef1}
\opt = \opt((a_j\mid a_{i_1}\mid \cdots\mid a_{i_m}),(b_j,b_{i_1},\cdots,b_{i_m})).
\end{equation}

We continue to prove Lemma~\ref{lemApproxD} for Subcase b(ii) using another case analysis. \textbf{Subcase b(ii,1):} $|\opt'| \in O(1)$ and there is $\hat{x} \in \opt'$ that satisfies~\eqref{item_i}, and \textbf{Subcase b(ii,2):} $|\opt'| = \infty$ and every $\hat{x} \in \opt'$ satisfies~\eqref{item_i}.

\textbf{Subcase b(ii,1): }$|\opt'| \in O(1)$ and there is $\hat{x} \in \opt'$ that satisfies~\eqref{item_i}. We prove there is $x'\in \opt$ that satisfies~\eqref{mainToProve_proof} and that $|\opt| \in O(1)$.

Let
\[
x' = \left(\sqrt{1-\left(\frac{b_{j}}{\norm{a_{j}}}\right)^2}\cdot \hat{x}^T \mid \frac{b_{j}}{\norm{a_{j}}}\right)^T \in \REAL^d.
\]
Observe that
\begin{equation} \label{x2tox22}
v(x') = \hat{x} \in \opt',
\end{equation}
by the definitions of $x'$ and $\hat{x}$ respectively.
Combining the definition of $x'$ and~\eqref{eqDefXj} yields that $x' \in \opt_j \subseteq \sphere^{d-1}$. 
Therefore,
\begin{align}
|a_i^Tx'-b_i| & = |a_i'^Tv(x')-b_i'| \label{eqmain21}\\
& = |a_i'^T\hat{x}-b_i'| \label{eqmain22}\\
& \leq 4^{d-2}\cdot |a_i'^Tv(x_{j})-b_i'| \label{eqmain23}\\
& \leq 4^{d-1} \cdot |a_i^Tx^*-b_i|. \label{eqmainFinal}
\end{align}
where~\eqref{eqmain21} holds by substituting $x=x'$ in~\eqref{eqMain4}, \eqref{eqmain22} is by~\eqref{x2tox22}, \eqref{eqmain23} is by~\eqref{item_i},  and~\eqref{eqmainFinal} holds by~\eqref{eqmain25}.

We now present the following observation that will be used afterwards in the proof.
\begin{observation} \label{reductionObs}
\[
\opt = \br{x' \in \opt_j \mid v(x') \in \opt'}.
\]
\end{observation}
\begin{proof}
\begin{align}
\opt&=\opt((a_j\mid a_{i_1}\mid \cdots\mid a_{i_m}),(b_j,b_{i_1},\cdots,b_{i_m}))\label{proofReductionOpt1}\\
\begin{split}
&=\begin{cases}
\displaystyle \argmin_{\substack{x'\in\sphere^{d-1}: \\a_j^Tx'=b_j}} |a_{i_m}^Tx'-b_{i_m}|, & \mbox{if } m=1 \\
\displaystyle \argmin_{\substack{x'\in\sphere^{d-1}: \\(a_j \mid a_{i_1} \mid \cdots\mid a_{i_{m-1}})^Tx'=\\(b_j, b_{i_1},\cdots,b_{i_{m-1}})^T}} |a_{i_m}^Tx'-b_{i_m}|, & \mbox{otherwise}
\end{cases}
\end{split}
\label{proofReductionOpt2}\\
\begin{split}
&=\begin{cases}
\displaystyle \argmin_{\substack{x'\in\sphere^{d-1}: \\a_j^Tx'=b_j}} |a_{i_m}^Tx'-b_{i_m}|, & \mbox{if } m=1 \\
\displaystyle \argmin_{\substack{x'\in\sphere^{d-1}: \\a_j^Tx'=b_j,\\
(a_{i_1} \mid \cdots\mid a_{i_{m-1}})^Tx'=\\(b_{i_1},\cdots,b_{i_{m-1}})^T}} |a_{i_m}^Tx'-b_{i_m}|, & \mbox{otherwise}
\end{cases}
\end{split}\label{proofReductionOpt3}\\
\begin{split}
&=\begin{cases}
\displaystyle \argmin_{\substack{x'\in\sphere^{d-1}: \\\frac{a_j^T}{\norm{a_j}}x'=\frac{b_j}{\norm{a_j}}}} |a_{i_m}^Tx'-b_{i_m}|, & \mbox{if } m=1 \\
\displaystyle \argmin_{\substack{x'\in\sphere^{d-1}: \\\frac{a_j^T}{\norm{a_j}}x'=\frac{b_j}{\norm{a_j}},\\
(a_{i_1} \mid \cdots\mid a_{i_{m-1}})^Tx'=\\(b_{i_1},\cdots,b_{i_{m-1}})^T}} |a_{i_m}^Tx'-b_{i_m}|, & \mbox{otherwise}
\end{cases}
\end{split}\label{proofReductionOpt4}\\
\begin{split}
&=\begin{cases}
\displaystyle \argmin_{\substack{x'\in\sphere^{d-1}: \\
x' \in \opt_j}} |a_{i_m}^Tx'-b_{i_m}|, & \mbox{if } m=1 \\
\displaystyle \argmin_{\substack{x'\in\sphere^{d-1}: \\
x' \in \opt_j,\\
(a_{i_1} \mid \cdots\mid a_{i_{m-1}})^Tx'=\\(b_{i_1},\cdots,b_{i_{m-1}})^T}} |a_{i_m}^Tx'-b_{i_m}|, & \mbox{otherwise}
\end{cases}
\end{split}\label{proofReductionOpt5}\\
\begin{split}
&=\begin{cases}
\displaystyle \argmin_{\substack{x'\in\sphere^{d-1}: \\
x' \in \opt_j}} |a_{i_m}'^Tv(x')-b_{i_m}'|, & \mbox{if } m=1 \\
\displaystyle \argmin_{\substack{x'\in\sphere^{d-1}: \\
x' \in \opt_j,\\
(a_{i_1}' \mid \cdots\mid a_{i_{m-1}}')^Tv(x')=\\(b_{i_1}',\cdots,b_{i_{m-1}}')^T}} |a_{i_m}'^Tv(x')-b_{i_m}'|, & \mbox{otherwise}
\end{cases}
\end{split}\label{proofReductionOpt6}\\
&=\displaystyle \br{x' \in \opt_j \mid v(x') \in \opt((a_{i_1}'\mid\cdots\mid a_{i_m}'),(b_{i_1}',\cdots,b_{i_m}'))}\label{proofReductionOpt7}\\
&=\displaystyle \br{x' \in \opt_j \mid v(x') \in \opt'}\label{proofReductionOpt8}
\end{align}
where~\eqref{proofReductionOpt1} holds by~\eqref{optDef1}, ~\eqref{proofReductionOpt2} holds by ~\eqref{defOptAxb} and since there are at least $2$ entries in $(b_j,b_{i_1},\cdots,b_{i_m})$, ~\eqref{proofReductionOpt3} holds since it's the same as \eqref{proofReductionOpt2} just written differently. \eqref{proofReductionOpt4} holds since $\norm{a_{j}} > b_{j} \geq 0$ and therefore there's no division by $0$. \eqref{proofReductionOpt5} holds by ~\eqref{eqDefXj}. \eqref{proofReductionOpt6} holds by~\eqref{eqMain4}. \eqref{proofReductionOpt7} holds by ~\eqref{defOptAxb} and \eqref{proofReductionOpt8} holds by the definition of $\opt'$.
\end{proof}

Combining the fact that $x' \in \opt_j$ and~\eqref{x2tox22} with Observation~\ref{reductionObs} yields that
\[
x' \in \opt.
\]
It also holds that
\begin{equation} \label{constOptSize}
\begin{split}
& |\opt| = |\br{x' \in \opt_j \mid v(x') \in \opt'}|\\
& = \left|\br{x'=(x^1,\cdots,x^d) \mid x^d=\frac{b_j}{\norm{a_j}} \text{ and } v(x') \in \opt'}\right|\\
& \leq |\opt'| \in O(1),
\end{split}
\end{equation}
where the last derivation is by the assumption of Subcase b(ii,1).

Hence, Lemma~\ref{lemApproxD} holds for Subcase b(ii,1) with $X = \opt$ and $x' \in X$.

\textbf{Subcase b(ii,2): }$|\opt'| = \infty$ and every $\hat{x} \in \opt'$ satisfies~\eqref{item_i}. We prove that $|\opt| = \infty$ and that every $x'\in\opt$ satisfies~\eqref{mainToProve_proof}.

Similarly to~\eqref{constOptSize}, we have that $|\opt| \leq |\opt'| = \infty$.
For every $x' \in \opt$, by Observation~\ref{reductionObs} we have that $x' \in \opt_j$ and $v(x') \in \opt'$. Therefore, for every $x' \in \opt$ we have that
\[
\begin{split}
|a_i^Tx'-b_i| = |a_i'^Tv(x')-b_i'|& \leq 4^{d-2}\cdot |a_i'^Tv(x_{j})-b_i'|\\
&\leq 4^{d-1} \cdot |a_i^Tx^*-b_i|.
\end{split}
\]
where the first equality holds by substituting $x=x'$ in~\eqref{eqMain4}, the first inequality holds by substituting $\hat{x} = v(x') \in \opt'$ in~\eqref{item_i}, and the last inequality is by~\eqref{eqmain25}.

Hence, Lemma~\ref{lemApproxD} holds for Subcase b(ii,2) with $X = \opt$.

Combining Subcase b(ii,1) and Subcase b(ii,2) prove Lemma~\ref{lemApproxD} for Subcase (ii). Combining Subcase (i) and Subcase (ii) proves  Lemma~\ref{lemApproxD} holds for Case (b) for every $d\geq 3$.
Lemma~\ref{lemApproxD} now holds for every $d\geq 3$ by combining Case (a) and Case (b).
\end{proof}

\begin{theorem} [Theorem~\ref{Axb}] \label{Axb_proof}
Let $A = (a_1 \mid \cdots \mid a_n)^T \in \REAL^{n\times d}$ be a matrix of $n \geq d-1 \geq 1$ rows, and let $b = (b_1,\cdots,b_n)^T \in \REAL^n$.
Let $X \subseteq \sphere^{d-1}$ be the output of a call to \algnameXCandidates$(A,b)$; see Algorithm~\ref{calcX}. Then for every $x^* \in \sphere^{d-1}$ there is a unit vector $x' \in \sphere^{d-1}$ such that for every $i\in[n]$,
\[
|a_i^Tx'-b_i| \leq 4^{d-1}\cdot |a_i^Tx^*-b_i|.
\]
Moreover, the set $X$ can be computed in $n^{O(d)}$ time and its size is $|X| \in n^{O(d)}$.
\end{theorem}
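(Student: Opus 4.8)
The plan is to reduce everything to Lemma~\ref{lemApproxD}, together with a precise account of what \algnameXCandidates{} actually enumerates. First I would dispose of the sign bookkeeping. Since $|a_i^Tx-b_i| = |(\sign(b_i)a_i)^Tx - |b_i||$ for every $x\in\REAL^d$, replacing each row $a_i$ by $a_i' = \sign(b_i)\cdot a_i$ and each $b_i$ by $b_i' = |b_i|$ — precisely Lines~\ref{line:SignFor}–\ref{line:endSignFor} of Algorithm~\ref{calcX} — leaves all quantities $|a_i^Tx-b_i|$ unchanged while guaranteeing $b' = (b_1',\dots,b_n')^T\in[0,\infty)^n$. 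Hence it suffices to prove the claim for $A' = (a_1'\mid\cdots\mid a_n')^T$ and $b'$. If $A'$ is the zero matrix then $|a_i^Tx-b_i| = |b_i|$ for every unit vector, so any $x'\in X$ works; otherwise we may assume $A'$ is non-zero, matching the hypothesis of Lemma~\ref{lemApproxD}.

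Second, I would apply Lemma~\ref{lemApproxD} to $A'$, $b'$, and the given $x^*\in\sphere^{d-1}$. It yields a set $\br{i_1,\dots,i_r}\subseteq[n]$ with $r\in[d-1]$ (and, inspecting its proof, all $i_k$ with $\norm{a_{i_k}'}\neq 0$, since that proof first discards the zero rows) such that for $X_{\opt}:=\opt\big((a_{i_1}'\mid\cdots\mid a_{i_r}'),(b_{i_1}',\cdots,b_{i_r}')\big)$ and every $i\in[n]$ there is $x'\in X_{\opt}$ with $|a_i^Tx'-b_i|\le 4^{d-1}|a_i^Tx^*-b_i|$; moreover, when $|X_{\opt}|=\infty$ this holds for every $x'\in X_{\opt}$ simultaneously. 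Now, when the loops in Lines~\ref{line:ES}–\ref{line:ESIndices} reach this $r$ and this subset $\br{i_1,\dots,i_r}$ (they do, as all $i_k$ index nonzero-norm rows), the algorithm sets $S\gets\algCalcOpt\big((a_{i_1}'\mid\cdots\mid a_{i_r}'),(b_{i_1}',\cdots,b_{i_r}')\big)$ and adds $S$ to $X$. By the specification of $\algCalcOpt$, we have $S = X_{\opt}$ whenever $|X_{\opt}|$ is finite, and $\emptyset\neq S\subseteq X_{\opt}$ otherwise. In the finite case the good $x'$ of Lemma~\ref{lemApproxD} lies in $S\subseteq X$; in the infinite case every element of $S\subseteq X_{\opt}$ is simultaneously good and $S$ is nonempty. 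Either way $X$ contains the desired $x'$.

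Third, I would bound the size and time. The outer loops range over $\sum_{r=1}^{d-1}\binom{n}{r} = n^{O(d)}$ index subsets. Each call to $\algCalcOpt$ recurses to depth at most $d-1$, performs only $\mathrm{poly}(d)$ linear algebra per level (one rotation $R$, the projection $I'R$, and the rescaling over the $m\le d-1$ reduced rows), and returns a set of size at most $2$: by induction, the base case $d=2$, $m=1$ returns at most the two intersection points, and every recursive branch either returns a singleton or the image of the recursive output under the invertible map $x'\mapsto R^T(\,\sqrt{1-b_1^2/\norm{a_1}^2}\cdot x'^T \mid b_1/\norm{a_1}\,)^T$. Hence $|X|\in n^{O(d)}$ and the overall running time is $n^{O(d)}\cdot\mathrm{poly}(d) = n^{O(d)}$.

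The step I expect to be the main obstacle is establishing the specification of $\algCalcOpt$ used above: that the rotate/intersect/project/rescale recursion of Algorithm~\ref{calcOpt} indeed outputs $\opt(A,b)$ when that set is finite, outputs a genuine member of it when it is infinite, and never needs to materialize an infinite set. This requires making the geometric claims of Section~\ref{sec:compOpt} rigorous: that $h_1\cap\sphere^{d-1}$ is a $(d-2)$-sphere lying in an affine hyperplane, that minimizing $\dist(x,h_m)$ over $x$ in that sphere and on $h_1\cap\dots\cap h_{m-1}$ equals minimizing $\dist$ to $h_1\cap h_m$ and is preserved under orthogonal projection onto $e_d^{\perp}$ followed by uniform rescaling to a unit sphere, and that the sign flips $\sign(b_i')$ keep the reduced right-hand side nonnegative so the recursion's hypotheses (and hence Lemma~\ref{lemApproxD}'s, applied on the reduced instance inside Lemma~\ref{lemApproxD_proof}) remain valid. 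Everything else — the sign reduction, matching the enumeration to Lemma~\ref{lemApproxD}, and the counting — is routine bookkeeping layered on top of Lemma~\ref{lemApproxD}.
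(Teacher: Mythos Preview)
Your proposal is correct and follows essentially the same approach as the paper: reduce to nonnegative $b$ via the sign flip of Lines~\ref{line:SignFor}--\ref{line:endSignFor}, invoke Lemma~\ref{lemApproxD} to obtain the good index set $\br{i_1,\dots,i_r}$ and the witness $x'\in\opt(\cdot)$, observe that the exhaustive enumeration in Lines~\ref{line:ES}--\ref{line:addToX} together with the output guarantee of $\algCalcOpt$ places this $x'$ in $X$, and then count. Your identification of the correctness of $\algCalcOpt$ as the step requiring care is apt; the paper treats this specification as given (the geometric justification in Section~\ref{sec:compOpt} is informal), so your proof is in fact slightly more explicit on this point than the paper's own.
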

\begin{proof}
We use the variables as in Algorithm~\ref{calcX} for a call to \algnameXCandidates$(A,b)$.

Put $x^* \in \sphere^{d-1}$ and $i\in [n]$.
In Lines~\ref{line:SignFor}--\ref{line:endSignFor} of Algorithm~\ref{calcX}, we define
\[
b_{i}' = |b_{i}|,
\quad\quad a_{i}' = \sign(b_{i})\cdot a_{i}.
\]
Observe that for every $a\in\REAL^d$, $\hat{b}\in \REAL$ and every $x\in \REAL^d$
\begin{equation} \label{invertSign}
|a^Tx-\hat{b}| = |-a^Tx-(-\hat{b})|.
\end{equation}

Substituting $A$ with $A'=\br{a_1',\cdots,a_n'}$, and $b$ with $b'=\br{b_1',\cdots,b_n'}$ and plugging $x^*$ in Lemma~\ref{lemApproxD} yields that there is a set $\br{j_1,\cdots,j_r} \subseteq [n]$ of $m \in [d-1]$ indices, such that for $X' = \opt((a_{j_1}'\mid\cdots\mid a_{j_m}'), (b_{j_1}',\cdots,b_{j_m}'))$ and every $i\in [n]$, there is $\hat{x} \in X'$ that satisfies
\begin{equation} \label{eqx2inopt}
|a_i'^T\hat{x}-b_i'| \leq 4^{d-1}\cdot |a_i'^Tx^*-b_i'|.
\end{equation}
Moreover, if $|X'| = \infty$ then every $\hat{x} \in X'$ satisfies~\eqref{eqx2inopt}.

Let $x'$ be $\hat{x}$ if $|X'| \in O(1)$, and $x' \in X'$ be an arbitrary element if $|X'| = \infty$. Hence, $x'$ satisfies~\eqref{eqx2inopt}.
We now have that
\begin{equation} \label{mainPropToSatisfy}
\begin{split}
|a_i^Tx'-b_i| = |a_i'^Tx'-b_i'| & \leq 4^{d-1} \cdot |a_i'^Tx^*-b_i'|\\
&= 4^{d-1} \cdot |a_i^Tx^*-b_i|,
\end{split}
\end{equation}
where the first and last equalities hold by combining the definitions of $a_i'$ and $b_i'$ with~\eqref{invertSign}, and the inequality holds since $x'$ satisfies~\eqref{eqx2inopt}.

It is left to prove that $x'$ is in the output set $X$.
In Lines~\ref{line:ES}--\ref{line:addToX} of Algorithm~\ref{calcX}, we iterate over every $r \in [d-1]$ and every subset $\br{i_1,\cdots,i_r} \in [n]$ of $r$ indices and compute the set $S = \opt((a_{i_1}'\mid\cdots\mid a_{i_r}'),(b_{i_1}',\cdots,b_{i_r}'))$ using Algorithm~\ref{calcOpt}.

Therefore, when $r=m$, and $i_1=j_1,\cdots,i_r = j_r$, the call to Algorithm~\ref{calcOpt} in Line~\ref{line:compOpt} is guaranteed to compute a set $S$ that satisfies
\[
S =\begin{cases}
     X', & \mbox{if } |X'|\in O(1) \\
     \hat{x} \in X', & \mbox{otherwise}.
   \end{cases}.
\]
We then add $S$ to the output set $X$. By the definition of $x'$ and the output guarantee of Algorithm~\ref{calcOpt}, $x' \in S$. Hence, the output set $X$ is guaranteed to contain an element that satisfies~\eqref{mainPropToSatisfy}.

The running time of Algorithm~\ref{calcX} is dominated by the nested $d\cdot \binom{n}{d} \in n^{O(d)}$ ''for`` loops, each iteration taking $O(d)$ time, for a total of $n^{O(d)}$ running time.
In each iteration, we add $O(1)$ unit vectors to $X$. Hence, $|X| \in n^{O(d)}$.
\end{proof}

\begin{theorem} [Theorem~\ref{costAxb}] \label{costAxb_proof}
Let $A = (a_1 \mid \cdots \mid a_n)^T \in \REAL^{n\times d}$ be a matrix of $n \geq d-1 \geq 1$ rows, and let $b = (b_1,\cdots,b_n)^T \in \REAL^n$. Let $\cost, s, r$ be as defined in Definition~\ref{def:cost} for $Y = \br{(a_i,b_i)\mid i\in [n]}$ and $D((a,\hat{b}),x) = |a^Tx-\hat{b}|$ for every $a\in\REAL^d$, $\hat{b} \in \REAL$ and $x\in \REAL^d$. Then in $n^{O(d)}$ time we can compute a unit vector $x'\in\sphere^{d-1}$ such that
\[
\cost(Y,x') \leq 4^{(d-1)rs} \cdot \min_{x\in\sphere^{d-1}} \cost(Y,x).
\]
\end{theorem}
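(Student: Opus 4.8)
The plan is to derive the statement directly from Theorem~\ref{Axb} and Observation~\ref{obs:distToCost}. Theorem~\ref{Axb} produces, in $n^{O(d)}$ time, a set $X\subseteq\sphere^{d-1}$ of $n^{O(d)}$ candidate unit vectors such that for any target $x^*\in\sphere^{d-1}$ some candidate matches every residual $|a_i^Tx^*-b_i|$ up to a factor $4^{d-1}$; Observation~\ref{obs:distToCost} then upgrades such a per-coordinate (per-$D$) approximation into a $4^{(d-1)rs}$ approximation of $\cost$. Running Algorithm~\ref{calcX} and then returning the best candidate in $X$ gives the algorithm.

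Concretely, first I would call $\algnameXCandidates(A,b)$ to get $X$. Let $x^*\in\argmin_{x\in\sphere^{d-1}}\cost(Y,x)$; the minimum is attained since $\sphere^{d-1}$ is compact and $x\mapsto\cost(Y,x)$ is continuous (a composition of the continuous $D(\cdot,x)=|a^Tx-b|$ with the log-Lipschitz $\lip$ and $f$ of Definition~\ref{def:cost}). Applying Theorem~\ref{Axb} to this particular $x^*$ yields some $x'\in X$ with $D((a_i,b_i),x')=|a_i^Tx'-b_i|\le 4^{d-1}|a_i^Tx^*-b_i|=4^{d-1}D((a_i,b_i),x^*)$ for every $i\in[n]$.

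Next I would apply Observation~\ref{obs:distToCost} with $q'=x'$, $q^*=x^*$, and $c=4^{d-1}\ge 1$, which gives $\cost(Y,x')\le(4^{d-1})^{rs}\cost(Y,x^*)=4^{(d-1)rs}\min_{x\in\sphere^{d-1}}\cost(Y,x)$. The algorithm cannot identify $x'$ because it does not know $x^*$, so it instead returns $\tilde x\in\argmin_{x\in X}\cost(Y,x)$; since $x'\in X$ we get $\cost(Y,\tilde x)\le\cost(Y,x')\le 4^{(d-1)rs}\min_{x\in\sphere^{d-1}}\cost(Y,x)$, the claimed bound. Evaluating $\cost(Y,x)$ for a fixed $x$ is polynomial in $n$ and $d$, so scanning the $n^{O(d)}$ candidates in $X$ keeps the total time $n^{O(d)}$.

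Since this is essentially a corollary of the two cited results, I do not anticipate a genuine obstacle; the only two points needing a sentence each are that the optimum $x^*$ is attained (compactness and continuity, or a limiting argument over the finite set $X$ if one only has the infimum), and that returning the best element of $X$ is at least as good as the unknown candidate certified by Theorem~\ref{Axb}, so the guarantee transfers to the algorithm's actual output.
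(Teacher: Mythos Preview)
Your proposal is correct and follows essentially the same approach as the paper: pick $x^*\in\argmin_{x\in\sphere^{d-1}}\cost(Y,x)$, apply Theorem~\ref{Axb} to obtain $x'\in X$ with $|a_i^Tx'-b_i|\le 4^{d-1}|a_i^Tx^*-b_i|$ for all $i$, and then invoke Observation~\ref{obs:distToCost} with $c=4^{d-1}$. If anything, you are slightly more explicit than the paper in noting that the algorithm returns $\tilde x\in\argmin_{x\in X}\cost(Y,x)$ (which inherits the bound from $x'$) and in justifying that the minimum over $\sphere^{d-1}$ is attained.
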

\begin{proof}
Let $x^* \in \argmin_{x\in\sphere^{d-1}} \cost(Y,x)$.
Plugging $A,b$ and $x^*$ in Theorem~\ref{Axb} yields that in $n^{O(d)}$ time we can compute a unit vector $x'\in\sphere^{d-1}$ such that for every $i\in [n]$
\[
\begin{split}
D((a_i,b_i),x')=|a_i^Tx'-b_i| & \leq 4^{d-1}\cdot |a_i^Tx^*-b_i|\\ &= 4^{d-1}\cdot D((a_i,b_i),x^*).
\end{split}
\]
Theorem~\ref{costAxb} now holds by plugging $Y, D, \cost, r,s$, $q'=x'$, $q^*=x^*$ and $c=4^{d-1}$ in Observation~\ref{obs:distToCost} as
\[
\begin{split}
\cost(Y,x') &\leq 4^{(d-1)rs} \cost(Y,x^*)\\ &= 4^{(d-1)rs} \min_{x\in\sphere^{d-1}} \cost(Y,x).
\end{split}
\]
\end{proof}

\section{Handling Unknown Matching}

\begin{theorem} [Theorem~\ref{costAxb_noMatch}] \label{costAxb_noMatch_proof}
Let $A = (a_1 \mid \cdots \mid a_n)^T \in \REAL^{n\times d}$ be a matrix containing $n \geq d-1 \geq 1$ rows, let $b = (b_1,\cdots,b_n)^T \in \REAL^n$, and let $D((a,\hat{b}),x) = |a^Tx-\hat{b}|$ for every $a\in\REAL^d$, $\hat{b} \in \REAL$ and $x\in \REAL^d$. Let $\cost, r$ be as defined in Definition~\ref{def:cost} for $Y = \br{(a_i,b_i)\mid i\in [n]}$, $D$ and $f(v) = \norm{v}_1$. Let $(\tilde{x},\tilde{\M})$ be a pair of unit vector and permutation (matching function) which is the output of a call to $\matchingalgname(A,b,\cost)$; see Algorithm~\ref{Alg_NoMatching}. Then it holds that
\[
\cost(Y_{\tilde{\M}},\tilde{x}) \leq 4^{(d-1)r} \cdot \min_{x,\M} \cost(Y_{\M},x),
\]
where the minimum is over every unit vector $x\in \sphere^{d-1}$ and $\M\in\perms(n)$. Moreover, $(\tilde{x},\tilde{\M})$ is computed in $n^{O(d)}$ time.
\end{theorem}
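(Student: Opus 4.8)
The plan is to reduce to Theorem~\ref{Axb} (equivalently Lemma~\ref{lemApproxD}) applied to the \emph{optimal} matching, and then to observe that the exhaustive search in Algorithm~\ref{Alg_NoMatching} necessarily tries the relevant pair of index tuples. Fix an optimal pair $(x^*,\M^*) \in \argmin_{x\in\sphere^{d-1},\,\M\in\perms(n)} \cost(Y_\M,x)$, and set $\hat{b} = (b_{\M^*(1)},\cdots,b_{\M^*(n)})^T$, so that $Y_{\M^*} = \br{(a_i,\hat{b}_i)\mid i\in[n]}$. Exactly as in the proof of Theorem~\ref{Axb}, put $b_i' = |\hat{b}_i|$ and $a_i' = \sign(\hat{b}_i)\cdot a_i$, so that $|a_i^Tx - \hat{b}_i| = |a_i'^Tx - b_i'|$ for every $x\in\REAL^d$ and $i\in[n]$. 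Applying Lemma~\ref{lemApproxD} to $A' = (a_1'\mid\cdots\mid a_n')^T$, $b' = (b_1',\cdots,b_n')$ and $x^*$ yields a tuple $(i_1,\cdots,i_r)$ of distinct indices with $r\in[d-1]$ such that for $X^* = \opt((a_{i_1}'\mid\cdots\mid a_{i_r}'),(b_{i_1}',\cdots,b_{i_r}'))$, some unit vector $x'\in X^*$ (and, if $|X^*|=\infty$, every unit vector $x'\in X^*$) satisfies $|a_i^Tx' - \hat{b}_i| = |a_i'^Tx' - b_i'| \leq 4^{d-1}\cdot|a_i'^Tx^* - b_i'| = 4^{d-1}\cdot|a_i^Tx^* - \hat{b}_i|$ for every $i\in[n]$.

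Next I would show that such an $x'$ lies in the set $X$ built by Algorithm~\ref{Alg_NoMatching}. In Lines~\ref{line:ES2}--\ref{line:addToX2} the algorithm enumerates every $r\in[d-1]$ and every pair of ordered $r$-tuples of distinct indices; in particular it reaches the iteration whose $A$-indices are $(i_1,\cdots,i_r)$ from the previous paragraph and whose $b$-indices are $(\M^*(i_1),\cdots,\M^*(i_r))$. For that iteration the sign adjustments in Lines~\ref{line:SignFor2}--\ref{line:endSignFor2} reproduce exactly the $a_{i_k}',b_{i_k}'$ above (note $b_{i_k}'=|b_{\M^*(i_k)}|=|\hat{b}_{i_k}|$ and $a_{i_k}'=\sign(b_{\M^*(i_k)})a_{i_k}$), so the call $\algCalcOpt((a_{i_1}'\mid\cdots\mid a_{i_r}'),(b_{i_1}',\cdots,b_{i_r}'))$ returns $X^*$ if it is finite and otherwise returns one of its elements; in either case, by the previous paragraph, the returned set contains a unit vector $x'\in X$ with $|a_i^Tx' - b_{\M^*(i)}| \leq 4^{d-1}\cdot|a_i^Tx^* - b_{\M^*(i)}|$ for every $i\in[n]$. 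Applying Observation~\ref{obs:distToCost} with input set $Y_{\M^*}$, $D((a,\hat c),x)=|a^Tx-\hat c|$, $f=\norm{\cdot}_1$ (non-decreasing on $[0,\infty)^n$ and $1$-log-Lipschitz, so $s=1$), $q'=x'$, $q^*=x^*$ and $c=4^{d-1}$ then gives $\cost(Y_{\M^*},x') \leq 4^{(d-1)r}\cdot\cost(Y_{\M^*},x^*)$.

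Finally I would close the loop through the optimal-matching step. Since $x'\in X$, the pair $(x',\hat{\M}(Y,x',\cost))$ is one of the candidates stored in $S$ in Line~\ref{line:compOptMatch}, and by Definition~\ref{def:optimalMatching} (using $f=\norm{\cdot}_1$) we have $\cost(Y_{\hat{\M}(Y,x',\cost)},x') \leq \cost(Y_{\M^*},x')$. As $(\tilde x,\tilde\M)$ is chosen in Line~\ref{line:compBestPair} to minimize $\cost(Y_\M,x)$ over all pairs in $S$,
\[
\cost(Y_{\tilde\M},\tilde x) \;\leq\; \cost(Y_{\M^*},x') \;\leq\; 4^{(d-1)r}\cdot\cost(Y_{\M^*},x^*) \;=\; 4^{(d-1)r}\cdot\min_{x,\M}\cost(Y_\M,x),
\]
which is the claimed bound. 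For the running time: the double enumeration of ordered $r$-tuples contributes $d\cdot\binom{n}{d}^2 = n^{O(d)}$ iterations, each invoking $\algCalcOpt$ in $\mathrm{poly}(n,d)$ time, so $|X| = n^{O(d)}$; and for each of the $n^{O(d)}$ vectors $x\in X$, computing $\hat{\M}(Y,x,\cost)=\argmin_{\M}\sum_{i\in[n]}\lip(|a_i^Tx-b_{\M(i)}|)$ is a minimum-weight bipartite perfect matching with edge weights $\lip(|a_i^Tx-b_j|)$, solvable in $\mathrm{poly}(n)$ time by the Hungarian algorithm; the total is $n^{O(d)}$. The main obstacle is the bookkeeping in the second paragraph: verifying that the exhaustive search over \emph{pairs} of index tuples, together with the sign fix, really does hit the configuration produced by Lemma~\ref{lemApproxD} applied to $Y_{\M^*}$ — in the right order, and with the $|X^*|=\infty$ case correctly subsumed by the ``every $x'\in X^*$ works'' clause.
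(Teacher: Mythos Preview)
Your proposal is correct and follows essentially the same approach as the paper's proof: fix an optimal pair $(x^*,\M^*)$, apply Lemma~\ref{lemApproxD} to the sign-adjusted data under the permutation $\M^*$, verify that the double exhaustive search in Algorithm~\ref{Alg_NoMatching} hits the resulting index configuration (with $b$-indices $(\M^*(i_1),\ldots,\M^*(i_r))$), invoke Observation~\ref{obs:distToCost}, and close via the optimal-matching step and the Hungarian method for the running time. If anything, your bookkeeping of the sign adjustment ($a_i'=\sign(b_{\M^*(i)})a_i$) is cleaner than the paper's, which writes $a_i'=\sign(b_i)a_i$ even though the algorithm adjusts $a_{i_k}$ by the sign of the \emph{matched} entry $b_{\ell_k}$.
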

\begin{proof}
Put $(x^*,\M^*) \in \argmin_{x,\M}\cost(Y_{\M},x)$, where the minimum is over every unit vector $x\in \sphere^{d-1}$ and permutation $\M \in \perms(n)$, and put $i\in [n]$.

In Lines~\ref{line:SignFor2}-\ref{line:endSignFor2} of Algorithm~\ref{Alg_NoMatching}, we define
\[
b_{i}' = |b_{i}|,
\quad\quad a_{i}' = \sign(b_{i})\cdot a_{i}.
\]
For every $a\in\REAL^d$, $\hat{b}\in \REAL$ and every $x\in \sphere^{d-1}$, we have
\begin{equation} \label{invertSign2}
|a^Tx-\hat{b}| = |-a^Tx-(-\hat{b})|.
\end{equation}

Substituting $A$ with $A'=\br{a_1',\cdots,a_n'}$, $b$ with $b'=\br{b_{\M^*(1)}',\cdots,b_{\M^*(n)}'}$ and plugging $x^*$ in Lemma~\ref{lemApproxD} yields that there is a set $\br{j_1,\cdots,j_m} \subseteq [n]$ of $m \in [d-1]$ indices, such that for $X' = \opt((a_{j_1}'\mid\cdots\mid a_{j_m}'), (b_{\M^*(j_1)}',\cdots,b_{\M^*(j_m)}'))$ and every $i\in [n]$, there is $\hat{x} \in X'$ that satisfies
\begin{equation} \label{eqx2inopt11}
|a_i'^T\hat{x}-b_{M^*(i)}'| \leq 4^{d-1} \cdot |a_i'^Tx^*-b_{M^*(i)}'|.
\end{equation}
Moreover, if $|X'| = \infty$ then every $\hat{x} \in X'$ satisfies~\eqref{eqx2inopt11}.

Let $x'$ be $\hat{x}$ if $|X'| \in O(1)$, and $x' \in X'$ be an arbitrary element if $|X'| = \infty$. Hence, $x'$ satisfies~\eqref{eqx2inopt11}.
We now have
\begin{equation} \label{mainPairwiseApprox}
\begin{split}
|a_i^Tx'-b_{\M^*(i)}| & = |a_i'^Tx'-b_{\M^*(i)}'|\\
& \leq 4^{d-1} \cdot |a_i'^Tx^*-b_{\M^*(i)}'|\\
&= 4^{d-1} \cdot |a_i^Tx^*-b_{\M^*(i)}|.
\end{split}
\end{equation}
where the first and last equalities hold by combining the definitions of $a_i'$ and $b_i'$ with~\eqref{invertSign2}, and the inequality holds since $x'$ satisfies~\eqref{eqx2inopt11}.

Since~\eqref{mainPairwiseApprox} holds for every $i\in [n]$, we can substitute $Y = Y_{\M^*} = \br{(a_i,b_{\M^*(i)})}_{i=1}^n$, $q' = x'$, $q^*=x^*$ and $c=4^{d-1}$ in Observation~\ref{obs:distToCost} to obtain that
\begin{equation}\label{eqCostMStart}
\cost(Y_{\M^*},x') \leq 4^{r(d-1)}\cdot \cost(Y_{\M^*},x^*).
\end{equation}

It is left to prove that $x'$ is in the output set $X$.
In Lines~\ref{line:ES2}-\ref{line:addToX2} of Algorithm~\ref{Alg_NoMatching}, we iterate over every $r \in [d-1]$ and every subsets $\br{i_1,\cdots,i_r} \subseteq [n]$ and $\br{\ell_1,\cdots,\ell_r} \subseteq [n]$ of $r$ indices, we compute the set $S = \opt((a_{i_1}'\mid\cdots\mid a_{i_r}'),(b_{\ell_1}',\cdots,b_{\ell_r}'))$ using Algorithm~\ref{calcOpt}, we then add $S$ to $X$.

Therefore, when $r=m$, $i_1=j_1,\cdots,i_r = j_r$ and $\ell_1 = \M^*(j_1),\cdots,\ell_r = \M^*(j_r)$, the call to Algorithm~\ref{calcOpt} in Line~\ref{line:compOpt2} is guaranteed to compute a set $S$ that satisfies
\[
S =\begin{cases}
     X', & \mbox{if } |X'|\in O(1) \\
     \hat{x} \in X', & \mbox{otherwise}.
   \end{cases}.
\]
We then add $S$ to the output set $X$. By the definition of $x'$ and the output guarantee of Algorithm~\ref{calcOpt}, $x' \in S$. Hence, the output set $X$ is guaranteed to contain an element that satisfies~\eqref{mainPairwiseApprox}.

Afterwards, in Line~\ref{line:compOptMatch}, we compute the optimal permutation $\hat{\M}(Y,x,\cost)$ for every unit vector $x\in X$.
To compute the optimal matching one could compute the pairwise cost matrix $Z \in \REAL^{n\times n}$, where the entry $(i,j)$ contains $\cost((a_i,b_j),x)$ (i.e., the cost of pairing $a_i$ with $b_j$ when using $x$), and then apply the Hungarian Method~\cite{kuhn1955hungarian}.

Since there is $x' \in X$ that satisfies~\eqref{mainPairwiseApprox}, we get that $(x',\hat{\M}(Y,x',\cost)) \in S$. Let $M' = \hat{\M}(Y,x,\cost)$.
By Definition~\ref{def:optimalMatching} of optimal matching,
\begin{equation} \label{eqApproxMatch}
\cost(Y_{M'},x') \leq \min_{\M \in \perms(n)}\cost(Y_{\M},x') \leq \cost(Y_{M^*},x').
\end{equation}
In Line~\ref{line:compBestPair}, we pick the pair $(\tilde{x},\tilde{\M}) \in \argmin_{(x,\M)\in S} \cost\left(Y_{\M},x\right)$. Therefore, the main claim of Theorem~\ref{costAxb_noMatch} holds as
\[
\begin{split}
\cost(Y_{\tilde{\M}},\tilde{x}) \leq \cost(Y_{\M'},x')& \leq \cost(Y_{M^*},x')\\
&\leq 4^{r(d-1)}\cdot \cost(Y_{\M^*},x^*),
\end{split}
\]
where the first inequality is by the definition of $(\tilde{x},\tilde{\M})$, the second inequality is by~\eqref{eqApproxMatch} and the last inequality is by~\eqref{eqCostMStart}.

Since the call to Algorithm~\ref{calcOpt} in Line~\ref{line:compOpt2} returns at most $2$ unit vectors, the size of the set $X$ is proportional to the number of iterations of the nested for loops, which is $d\cdot \binom{n}{d}^2 \in n^{O(d)}$.
The running time of Algorithm~\ref{Alg_NoMatching} is dominated by Line~\ref{line:compOptMatch}, which runs an optimal matching algorithm for every unit vector in $X$.
The Hungarian method for optimal matching takes $O(n^3)$ time. Hence, the total running time is $n^{O(d)}$.
\end{proof}

\section{Coreset for Linear Regression}

\begin{definition} [Definition 14 in~\cite{varadarajan2012sensitivity}]
Let $M$ be an $n\times m$ matrix of rank $\rho$. Let $z\in [1,\infty)$, and $\alpha,\beta \geq 1$. An $n\times \rho$ matrix $U$ is an $(\alpha,\beta,z)$-conditioned basis
for $M$ if the column vectors of $U$ span the column space of $M$, and additionally $U$ satisfies that: (1) $\sum{i,j} |a_{i,j}|^z \leq \alpha^z$, (2) for all $u\in \REAL^{\rho}$,
$\norm{u}_{z'} \leq \beta \norm{Uu}_z$, where $\norm{\cdot}_{z'}$ is the dual norm for $\norm{\cdot}_z$ (i.e. $1/z$ + $1/{z'}=1$).
\end{definition}

\begin{lemma} [Lemma 15 in~\cite{varadarajan2012sensitivity}] \label{lemSendBound1}
Let $M = (m_1\mid\cdots\mid m_n)^T$ be an $n\times m$ matrix of rank $\rho$. Let $z\in [1,\infty)$. Let $U = (u_1\mid\cdots\mid u_n)^T$ be an $(\alpha,\beta,z)$-conditioned basis for $M$. For every vector $u\in \REAL^m$, the following inequality
holds: $|m_i^Tu|^z \leq (\norm{u_i}_z^z \cdot \beta^z)\norm{Mu}_z^z$.
\end{lemma}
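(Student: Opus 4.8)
The plan is to reduce the claimed inequality to a single application of H\"older's inequality followed by the defining property~(2) of a conditioned basis. The only structural input I need is that the columns of $U$ span the column space of $M$, which is part of the definition of an $(\alpha,\beta,z)$-conditioned basis.

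First I would write $M = UT$ for some $\rho\times m$ matrix $T$. This is possible because every column of $M$ lies in the $\linspan$ of the columns of $U$, so each column of $M$ is a linear combination of columns of $U$, and collecting these coefficient vectors as the columns of $T$ gives $M = UT$. Now fix an arbitrary $u\in\REAL^m$ and set $w := Tu \in \REAL^{\rho}$, so that $Mu = U(Tu) = Uw$. Extracting the $i$th coordinate of this vector identity, and recalling that $m_i$ and $u_i$ denote the $i$th rows of $M$ and $U$ respectively, gives $m_i^Tu = (Mu)_i = (Uw)_i = u_i^Tw$.

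Next I would apply H\"older's inequality in $\REAL^{\rho}$ with the conjugate exponents $z$ and $z'$ (where $1/z+1/z'=1$), obtaining $|u_i^Tw|\leq \norm{u_i}_z\cdot\norm{w}_{z'}$. Then property~(2) in the definition of an $(\alpha,\beta,z)$-conditioned basis, applied to the vector $w\in\REAL^{\rho}$, yields $\norm{w}_{z'}\leq \beta\norm{Uw}_z = \beta\norm{Mu}_z$. Chaining the three estimates gives $|m_i^Tu| = |u_i^Tw| \leq \norm{u_i}_z\cdot\beta\cdot\norm{Mu}_z$, and raising both sides to the $z$th power (both sides are nonnegative and $z\geq 1$) gives $|m_i^Tu|^z \leq \bigl(\norm{u_i}_z^z\cdot\beta^z\bigr)\norm{Mu}_z^z$, as required.

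I do not expect a genuine obstacle: the argument is essentially just H\"older plus the conditioned-basis bound. The only points that require a word of care are the existence of the factorization $M = UT$ (which is exactly why the ``span the column space'' clause is part of the definition) and the bookkeeping identity $m_i^Tu = u_i^Tw$, which is simply reading off the $i$th entry of $Mu = Uw$; the exponent $\alpha$ plays no role in this particular lemma.
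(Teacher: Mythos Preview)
Your argument is correct and is the standard proof of this result: factor $M=UT$, read off $m_i^Tu=u_i^T(Tu)$, apply H\"older, then invoke property~(2) of the conditioned basis. Note that the paper does not actually supply its own proof of this lemma; it is quoted verbatim as Lemma~15 of \cite{varadarajan2012sensitivity} and used as a black box, so there is nothing in the paper to compare your approach against beyond the citation.
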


The following is a restatement of Lemma 16 in~\cite{varadarajan2012sensitivity}
\begin{lemma} [(total sensitivity for fitting a hyperplane~\cite{varadarajan2012sensitivity})] \label{lemTotalSensBound}
Let $P = \br{p_1,\cdots,p_n} \subseteq \REAL^d$ be a set of $n \geq d$ points. Let $\F = \br{(x,b) \in \REAL^{d}\times\REAL \mid \norm{x}=1}$ be the set of all hyperplanes in $\REAL^d$ and let $z \in [1,\infty)$.
Let $s:P \to [0,\infty)$ where $s(p_i) := \sup_{(x,b)\in \F} \frac{|p_i^Tx-b|^z}{\sum_{j\in [n]}|p_j^Tx-b|^z}$ is the sensitivity of a point $p_i \in P$.
Then the total sensitivity $\sum_{i\in [n]} s(p_i)$ of $P$ is $O(d^{1+z/2})$ if $z\in [1,2)$, $O(d)$ for $z=2$ and $O(d^z)$ for $z>2$.
\end{lemma}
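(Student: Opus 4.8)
The plan is to follow the standard reduction of a shape-fitting sensitivity bound to the existence of a well-conditioned basis, exactly as in the argument behind Lemma~16 of~\cite{varadarajan2012sensitivity}, using the machinery (Definition~14 and Lemma~\ref{lemSendBound1}) already recalled above. First I would lift the hyperplane-fitting problem to a linear one: set $m_i = (p_i^T \mid -1)^T \in \REAL^{d+1}$ for every $i\in[n]$, and let $M = (m_1 \mid \cdots \mid m_n)^T$, so that $\rho := \rank(M) \leq d+1$ (affine degeneracies of $P$ only make $\rho$ smaller, never larger). For any $(x,b)\in\F$, writing $u = (x^T \mid b)^T \in \REAL^{d+1}$ we have $|p_i^Tx-b|^z = |m_i^Tu|^z$ and $\sum_{j\in[n]}|p_j^Tx-b|^z = \norm{Mu}_z^z$. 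Since the ratio $|m_i^Tu|^z/\norm{Mu}_z^z$ is invariant under scaling of $u$, and every nonzero $u$ with nonzero first-$d$ block can be rescaled into $\F$-form while any $u$ with zero first-$d$ block gives ratio $1/n$, we obtain
\[
s(p_i) = \sup_{(x,b)\in\F} \frac{|p_i^Tx-b|^z}{\sum_{j\in[n]}|p_j^Tx-b|^z} \leq \sup_{u\in\REAL^{d+1}:\ Mu\neq 0} \frac{|m_i^Tu|^z}{\norm{Mu}_z^z}.
\]

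Next I would apply Lemma~\ref{lemSendBound1}. Let $U = (u_1\mid\cdots\mid u_n)^T$ be an $(\alpha,\beta,z)$-conditioned basis for $M$; such a basis exists for every matrix (the orthonormal basis when $z=2$, and the well-conditioned bases of Dasgupta et al.\ in general) and can be chosen so that $(\alpha\beta)^z = O(\rho^{1+z/2})$ when $z\in[1,2)$, $(\alpha\beta)^z = O(\rho)$ when $z=2$, and $(\alpha\beta)^z = O(\rho^z)$ when $z>2$. By Lemma~\ref{lemSendBound1}, $|m_i^Tu|^z \leq \norm{u_i}_z^z\,\beta^z\,\norm{Mu}_z^z$ for every $u$, so combining with the previous display gives $s(p_i) \leq \beta^z \norm{u_i}_z^z$. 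Summing over $i$ and using property~(1) of a conditioned basis, namely $\sum_{i\in[n]}\norm{u_i}_z^z = \sum_{i,j}|U_{i,j}|^z \leq \alpha^z$,
\[
\sum_{i\in[n]} s(p_i) \leq \beta^z \sum_{i\in[n]}\norm{u_i}_z^z \leq (\alpha\beta)^z .
\]
Plugging in the bounds on $(\alpha\beta)^z$ and using $\rho \leq d+1 = O(d)$ (with $z$ constant) yields $\sum_{i\in[n]} s(p_i) = O(d^{1+z/2})$ for $z\in[1,2)$, $O(d)$ for $z=2$, and $O(d^z)$ for $z>2$, as claimed.

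In this plan the routine parts are the scaling/relaxation argument that produces the unconstrained supremum and the final arithmetic. The only real content — which I would treat as a cited black box rather than re-derive — is the existence of an $(\alpha,\beta,z)$-conditioned basis with near-optimal product $\alpha\beta$ (the well-conditioned, Auerbach/barycentric-type basis of the $\ell_p$-regression literature); the three exponent regimes of that construction are precisely what propagate into the three cases of the lemma, so getting those exponents right is the step where care is needed. A minor point to verify along the way is that $Mu=0$ forces $u=0$ on the relevant subspace, so the supremum is over a nonempty set.
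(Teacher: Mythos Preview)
Your proof is correct and is precisely the standard argument behind Lemma~16 of~\cite{varadarajan2012sensitivity}. Note, however, that the paper does not give its own proof of this lemma at all: it is stated explicitly as ``a restatement of Lemma~16 in~\cite{varadarajan2012sensitivity}'' and used as a black box, so there is no paper-side proof to compare against. Your reconstruction via the lift $m_i=(p_i^T\mid -1)^T$, Lemma~\ref{lemSendBound1}, and the Dasgupta~et~al.\ bounds on $(\alpha\beta)^z$ is exactly how the cited reference obtains the three regimes, and the scaling/relaxation step you flag is indeed routine.
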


\begin{corollary} \label{lem:sensBound}
Let $P = \br{p_1,\cdots,p_n} \subseteq \REAL^d$ be a set of $n \geq d$ points and let $z\in [1,\infty)$. Then a function $s:P \to [0,\infty)$ can be computed in $O(nd^5 \log{n})$
time such that for every $i\in [n]$,
\[
s(p_i) \geq \sup_{(x,b) \in \F} \frac{|p_i^T x-b|^z}{\sum_{j \in [n]}|p_j^Tx-b|^z},
\]
where the sup is over every $(x,b) \in \F$ such that $\sum_{j \in [n]}|p_j^Tx-b|^z \neq 0$.
\end{corollary}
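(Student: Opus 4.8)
The plan is to reduce the hyperplane–fitting sensitivity to an ordinary $\ell_z$ linear regression sensitivity, for which Lemma~\ref{lemSendBound1} already hands us per–point upper bounds once an $(\alpha,\beta,z)$-conditioned basis is available. Concretely, I would lift each point to one higher dimension: set $q_i = (p_i^T \mid -1)^T \in \REAL^{d+1}$ for every $i\in[n]$ and let $M = (q_1\mid\cdots\mid q_n)^T \in \REAL^{n\times(d+1)}$, with $\rho=\rank(M)\le d+1$. For any $(x,b)\in\F$, writing $u=(x^T\mid b)^T\in\REAL^{d+1}\setminus\br{\vec{0}(d+1)}$, we have $p_i^Tx-b=q_i^Tu$ and hence $\sum_{j\in[n]}|p_j^Tx-b|^z=\norm{Mu}_z^z$, so the ratio in the statement equals $|q_i^Tu|^z/\norm{Mu}_z^z$.

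Since this ratio is invariant under scaling of $u$, and every $(x,b)\in\F$ produces such a (nonzero) $u$, the supremum over $\F$ with $\sum_j|p_j^Tx-b|^z\neq0$ is bounded above by the supremum of $|q_i^Tu|^z/\norm{Mu}_z^z$ over all $u\in\REAL^{d+1}$ with $\norm{Mu}_z\neq 0$; moreover the component of $u$ lying in the null space of $M$ is irrelevant, because each $q_i$ belongs to the row space of $M$, so both $q_i^Tu$ and $Mu$ depend only on the projection of $u$ onto that row space. Next I would compute an $(\alpha,\beta,z)$-conditioned basis $U=(u_1\mid\cdots\mid u_n)^T\in\REAL^{n\times\rho}$ for $M$ by the algorithm of~\cite{varadarajan2012sensitivity}, whose running time on an $n\times(d+1)$ matrix is $O(nd^5\log n)$; evaluating $\norm{u_i}_z$ for all $i$ then costs an additional $O(n\rho)=O(nd)$ time. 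Defining $s(p_i):=\min\br{1,\ \beta^z\cdot\norm{u_i}_z^z}$ and invoking Lemma~\ref{lemSendBound1} on $M$ for an arbitrary $u$ with $\norm{Mu}_z\neq0$ gives $|q_i^Tu|^z\le \beta^z\norm{u_i}_z^z\cdot\norm{Mu}_z^z$, i.e. $|q_i^Tu|^z/\norm{Mu}_z^z\le\beta^z\norm{u_i}_z^z$; together with the trivial bound $|q_i^Tu|^z/\norm{Mu}_z^z\le 1$ and the reduction of the previous step, this is exactly $s(p_i)\ge\sup_{(x,b)\in\F}\frac{|p_i^Tx-b|^z}{\sum_{j\in[n]}|p_j^Tx-b|^z}$, and $s(p_i)\in[0,\infty)$ as required.

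The only substantive ingredient beyond the elementary reduction above is the existence and $O(nd^5\log n)$-time construction of the conditioned basis, which I would cite from~\cite{varadarajan2012sensitivity} rather than reprove. I therefore expect the main (and essentially only) obstacle to be bookkeeping: checking that the well-conditioned-basis construction applies verbatim to the lifted matrix $M$ of $d+1=O(d)$ columns and arbitrary rank $\rho\le d+1$, and confirming that the stated time bound absorbs both that construction and the subsequent $O(nd)$-time norm computations; everything else is a direct consequence of Lemma~\ref{lemSendBound1} and the scale/null-space argument.
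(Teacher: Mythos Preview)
Your proposal is correct and follows essentially the same route as the paper: lift each $p_i$ to $(p_i^T\mid \pm 1)^T\in\REAL^{d+1}$, apply Lemma~\ref{lemSendBound1} with an $(\alpha,\beta,z)$-conditioned basis of the resulting $n\times(d+1)$ matrix, and set $s(p_i)$ to (a cap of) $\beta^z\norm{u_i}_z^z$, citing the $O(nd^5\log n)$ construction of the conditioned basis. The only cosmetic differences are that the paper uses $+1$ in the last coordinate and cites~\cite{dasgupta2009sampling} (rather than~\cite{varadarajan2012sensitivity}) for the running time, and it does not bother with the $\min\br{1,\cdot}$ cap or the explicit scale/null-space remarks you include.
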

\begin{proof}
For every $i\in [n]$, let $M_i = (p_i^T \mid 1)^T$, and let $M = (m_1 \mid \cdots \mid m_n)^T \in \REAL^{n\times (d+1)}$. Let $U = (u_1\mid\cdots\mid u_n)^T$ be a $(\alpha,\beta,z)$-conditioned basis for $M$.
Put $i\in [n]$. Then for every unit vector $x\in \REAL^d$ and $b \geq 0$
\begin{equation} \label{sensEq1}
|p_i^T x-b|^z = |(p_i^T \mid 1) (x^T\mid b)^T|^z = |m_i^T (x^T\mid b)^T|^z,
\end{equation}
and
\begin{equation} \label{sensEq2}
\begin{split}
\sum_{j \in [n]}|p_j^Tx-b|^z & = \sum_{j \in [n]}|(p_j^T \mid 1) (x^T\mid b)^T|^z\\
& = \sum_{j \in [n]}|m_j^T (x^T\mid b)^T|^z\\
&= \norm{M (x^T\mid b)^T}_z^z.
\end{split}
\end{equation}

It thus holds that
\[
\begin{split}
\sup_{(x,b) \in \F} \frac{|p_i^T x-b|^z}{\sum_{j \in [n]}|p_j^Tx-b|^z}& = \sup_{(x,b) \in \F} \frac{|m_i^T (x^T\mid b)^T|^z}{\norm{M (x^T\mid b)^T}_z^z}\\
& \leq \norm{u_i}_z^z \cdot \beta^z,
\end{split}
\]
where the first equality is by~\eqref{sensEq1} and~\eqref{sensEq2}, and the inequality holds by plugging $M$, $U$ and $u = (x^T\mid b)^T$ in Lemma~\ref{lemSendBound1}.
Hence, by letting $s(p_i) = \norm{u_i}_z^z \cdot \beta^z$, we get that
\[
s(p_i) \geq \sup_{(x,b) \in \F} \frac{|p_i^T x-b|^z}{\sum_{j \in [n]}|p_j^Tx-b|^z}.
\]

The time it takes to compute the function $s$ is dominated by the time to compute the matrix $U$. By Theorem 3.1 in~\cite{dasgupta2009sampling},
it takes $O(nd^5\log{n})$ time to compute the $(\alpha,\beta,z)$-conditioned basis matrix $U$ for $M$ since $M$ is an $n \times (d+1)$ matrix of rank at most $d+1$.
Hence, it takes $O(nd^5\log{n})$ time to compute the function $s$.
\end{proof}

\begin{definition} [\textbf{Definition 4.2 in~\cite{braverman2016new}}] \label{def:querySpace}
Let $P$ be a finite set, and let $w:P\to [0,\infty)$. Let $Q$ be a function that maps every set $S\subseteq P$ to a corresponding set $Q(S)$, such that $Q(T) \subseteq Q(S)$ for every $T\subseteq S$. Let $f:P\times Q(P) \to \REAL$ be a cost function. The tuple $(P,w,Q,f)$ is called a \emph{query space}.
\end{definition}

\begin{definition} [\textbf{VC-dimension}] \label{def:VC}
For a query space $(P,w,Q,f)$, $S\subseteq P$, $q\in Q(S)$ and $r\in [0,\infty)$ we define $\range(q,r) = \br{p\in P \mid w(p)\cdot f(p,q) \leq r}$.
The VC-dimension of $(P,w,Q,f)$ is the smallest integer $d_{VC}$ such that for every $S\subseteq P$ we have
\[
\left| \br{\range(q,r) \mid q\in Q(S), r\in [0,\infty)} \right| \leq |S|^{d_{VC}}.
\]
\end{definition}

\begin{theorem} [\textbf{Theorem 5.5 in~\cite{braverman2016new}}] \label{theorem:howToCoreset}
Let $(P,w,Q,f)$ be a query space; see Definition~\ref{def:querySpace}. Let $s:P\to [0,\infty)$ such that
\[
\sup_{q} \frac{w(p)f(p,q)}{\sum_{p\in P} w(p)f(p,q)} \leq s(p),
\]
for every $p\in P$ and $q\in Q(P)$ such that the denominator is non-zero. Let $t = \sum_{p\in P} s(p)$ and Let $d_{VC}$ be the VC-dimension of query space $(P,w,Q,f)$; See Definition~\ref{def:VC}. Let $c \geq 1$ be a sufficiently large constant and let $\varepsilon, \delta \in (0,1)$. Let $S$ be a random sample of
\[
|S| \geq \frac{ct}{\varepsilon^2}\left(d_{VC}\log{t}+\log{\frac{1}{\delta}}\right)
\]
points from $P$, such that $p$ is sampled with probability $s(p)/t$ for every $p\in P$. Let $u(p) = \frac{t\cdot w(p)}{s(p)|S|}$ for every $p\in S$. Then, with probability at least $1-\delta$, for every $q\in Q$ it holds that
\[
\begin{split}
(1-\varepsilon)\sum_{p\in P} w(p)\cdot f(p,q) & \leq \sum_{p\in S} u(p)\cdot f(p,q)\\ &\leq (1+\varepsilon)\sum_{p\in P} w(p)\cdot f(p,q).
\end{split}
\]
\end{theorem}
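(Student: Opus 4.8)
The plan is to prove this via the sensitivity (importance) sampling paradigm of Feldman--Langberg: exhibit $\sum_{p\in S}u(p)f(p,q)$ as an unbiased estimator of $\Phi(q):=\sum_{p\in P}w(p)f(p,q)$, establish a sharp concentration bound for one fixed query, and then upgrade it to a bound holding uniformly over the infinite query set $Q$ using the VC-dimension hypothesis.

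First I would normalize. Fix $q\in Q(P)$ with $\Phi(q)\neq 0$; if $\Phi(q)=0$ then, since every term $w(p)f(p,q)$ is nonnegative, each vanishes and both sides of the claimed inequality equal $0$, so such queries are trivial. Set $g(p,q)=\frac{w(p)f(p,q)}{s(p)\,\Phi(q)}$; by the hypothesis on $s$ we have $g(p,q)\in[0,1]$, and the identity $\sum_{p\in P}\frac{s(p)}{t}\,g(p,q)=\frac1t$ holds for \emph{every} such $q$. Thus, under the sampling distribution $\Pr(p)=s(p)/t$ all queries have the same mean $1/t$; this scale normalization is exactly what lets a single sample size work for the whole, otherwise scale-heterogeneous, family $Q$.

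Next I would prove concentration for a fixed $q$. Writing the sample as i.i.d. draws $p_1,\dots,p_{|S|}$, one has $\sum_{p\in S}u(p)f(p,q)=\frac{t\,\Phi(q)}{|S|}\sum_{j=1}^{|S|}g(p_j,q)$, a sum of i.i.d. variables in $[0,1]$ with mean $1/t$ and, since $g\le 1$, variance at most $1/t$. Bernstein's inequality gives $\Pr\big[\,\big|\sum_{j}g(p_j,q)-|S|/t\big|>\varepsilon|S|/t\,\big]\le 2\exp(-c'\varepsilon^2|S|/t)$ for an absolute constant $c'$; multiplying through by $t\,\Phi(q)/|S|$ turns this into exactly the two-sided $(1\pm\varepsilon)$ estimate of $\Phi(q)$, with failure probability at most $2\exp(-c'\varepsilon^2|S|/t)$. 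This already handles a single query once $|S|\ge \frac{ct}{\varepsilon^2}\log\frac1\delta$.

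The remaining, and hardest, step is the union bound over the infinite $Q$, where a naive argument is hopeless. Here I would invoke the classical $\varepsilon$-approximation theorem for range spaces (Vapnik--Chervonenkis, Li--Long--Srinivasan): a random sample of size $\Omega\big(\frac{1}{(\varepsilon')^2}(d_{VC}\log\frac1{\varepsilon'}+\log\frac1\delta)\big)$ is an $\varepsilon'$-approximation for a range space of VC-dimension $d_{VC}$. Using the ranges $\range(q,r)=\br{p\in P\mid w(p)f(p,q)\le r}$ of Definition~\ref{def:VC}, the Sauer--Shelah lemma bounds the number of distinct ranges cut out by any $|S|$-point subset by $|S|^{d_{VC}}$; decomposing each bounded function $g(\cdot,q)$ into its super-level sets $\br{p\mid g(p,q)>\tau}$ (which, after rescaling by $s(p)$, are exactly such ranges) and combining with the per-query Bernstein estimate of the previous step, one obtains that with probability at least $1-\delta$ the quantity $\frac1{|S|}\sum_{j}g(p_j,q)$ stays within $\varepsilon/t$ of $1/t$ \emph{simultaneously for all} $q\in Q$, provided $|S|\ge\frac{ct}{\varepsilon^2}(d_{VC}\log t+\log\frac1\delta)$. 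Feeding this back through the first-step identity yields the claimed multiplicative coreset guarantee. The delicate part, where I expect all the real difficulty to lie, is this last step: keeping the error \emph{relative} rather than additive even for queries whose cost $\Phi(q)$ is arbitrarily small, and packaging the combinatorial VC hypothesis into a genuine uniform-convergence statement via the level-set / $\varepsilon$-approximation argument. The first two steps are routine bookkeeping plus a standard Bernstein bound.
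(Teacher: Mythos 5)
You should first be aware that the paper contains no proof of this statement: Theorem~\ref{theorem:howToCoreset} is quoted verbatim from Theorem~5.5 of~\cite{braverman2016new} and is used purely as a black box inside the proof of Theorem~\ref{theorem:coreset}, so there is no in-paper argument to compare yours against. Judged on its own, your outline is the correct and standard sensitivity-sampling argument that underlies the cited result: the normalization $g(p,q)=\frac{w(p)f(p,q)}{s(p)\Phi(q)}\in[0,1]$ with $\sum_{p}\frac{s(p)}{t}g(p,q)=\frac1t$ is exactly right, the identity $\sum_{p\in S}u(p)f(p,q)=\frac{t\,\Phi(q)}{|S|}\sum_j g(p_j,q)$ is correct, and the single-query Bernstein bound with variance $\le 1/t$ is sound.

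The one place where your sketch has a genuine quantitative gap is the uniform-convergence step. The version of the $\eps$-approximation theorem you state is the \emph{additive} one: a sample of size $\Omega\bigl(\frac{1}{(\eps')^2}(d_{VC}\log\frac{1}{\eps'}+\log\frac1\delta)\bigr)$ gives additive error $\eps'$ for a $[0,1]$-valued family. Since you need additive error $\eps'=\eps/t$ on $\frac{1}{|S|}\sum_j g(p_j,q)$, plugging into that theorem yields a sample size of order $\frac{t^2}{\eps^2}d_{VC}\log t$, not the claimed $\frac{t}{\eps^2}d_{VC}\log t$. Recovering the linear dependence on $t$ requires the \emph{relative} (variance-sensitive) form of the bound --- the $(\nu,\alpha)$-relative approximations of Li--Long--Srinivasan, or equivalently a chaining/symmetrization argument that exploits $\mathbb{E}[g_q]=1/t$ and $\mathrm{Var}(g_q)\le 1/t$ uniformly in $q$, which is precisely how~\cite{braverman2016new} and the Feldman--Langberg framework proceed. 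You name both ingredients (the $1/t$ variance bound and Li--Long--Srinivasan) but the assembly as written conflates the additive and relative statements; making the level-set decomposition of $g(\cdot,q)$ into ranges $\range(q,r)$ interact correctly with the relative bound is the substance of the proof, not mere bookkeeping. With that step carried out properly, your argument goes through.
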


\begin{theorem} [Theorem~\ref{theorem:coreset}]\label{theorem:coreset_proof}
Let $d \geq 2$ be a constant integer. Let $A = (a_1 \mid \cdots \mid a_n)^T \in \REAL^{n\times d}$ be a matrix containing $n \geq d-1$ points in its rows, let $b = (b_1,\cdots,b_n)^T \in \REAL^n$, and let $w = (w_1,\cdots,w_n) \in [0,\infty)^n$.
Let $\varepsilon, \delta \in (0,1)$ and let $z\in [1,\infty)$.
Then in $O(n \log{n})$ time we can compute a weights vector $u = (u_1,\cdots,u_n)\in[0,\infty)^n$ that satisfies the following pair of properties.
\renewcommand{\labelenumi}{(\roman{enumi})}
\begin{enumerate}
\item With probability at least $1-\delta$, for every $x\in \sphere^{d-1}$ it holds that
\[
\begin{split}
(1-\varepsilon) \cdot \sum_{i\in [n]} w_i \cdot & |a_i^Tx-b_i|^z  \leq \sum_{i\in [n]}u_i\cdot |a_i^Tx-b_i|^z\\ &\leq (1+\varepsilon) \cdot \sum_{i\in [n]} w_i \cdot |a_i^Tx-b_i|^z.
\end{split}
\]
\item The weights vector $u$ has $O\left(\frac{\log{\frac{1}{\delta}}}{\varepsilon^2}\right)$ non-zero entries.
\end{enumerate}
\end{theorem}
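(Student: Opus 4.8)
The plan is to instantiate the sensitivity-sampling framework of Theorem~\ref{theorem:howToCoreset} with the query space $(P,w,Q,f)$ where $P=\br{(a_1,b_1),\dots,(a_n,b_n)}$, $Q(S)=\sphere^{d-1}$ for every $S\subseteq P$, and $f((a,\hat b),x)=|a^Tx-\hat b|^z$. The three things the framework needs are: (a) a cheaply computable upper bound $s(\cdot)$ on the sensitivities $\sup_x \tfrac{w_i f(p_i,x)}{\sum_j w_j f(p_j,x)}$; (b) a bound on the total sensitivity $t=\sum_i s(p_i)$; and (c) a bound on the VC-dimension $d_{VC}$ of $(P,w,Q,f)$ in the sense of Definition~\ref{def:VC}. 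Given these, the output is the weights vector with $u_i=\tfrac{t\,w_i}{s(p_i)\,|S|}$ for $i$ in the random sample $S$ and $u_i=0$ otherwise, and properties (i),(ii) are exactly the two conclusions of Theorem~\ref{theorem:howToCoreset}.

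First I would reduce the sensitivity estimation to the hyperplane-fitting problem already handled by Corollary~\ref{lem:sensBound}. Fold the weights into the data and lift the labels into the points by setting $\hat a_i:=w_i^{1/z}\,(a_i^T\mid b_i)^T\in\REAL^{d+1}$. For a unit vector $x\in\sphere^{d-1}$, the vector $\hat x:=\tfrac{1}{\sqrt2}(x^T\mid -1)^T$ is a unit vector in $\REAL^{d+1}$ with $|\hat a_i^T\hat x|^z=2^{-z/2}\,w_i\,|a_i^Tx-b_i|^z$, so the (weighted) sensitivity of $(a_i,b_i)$ equals $\sup_x |\hat a_i^T\hat x|^z/\sum_j|\hat a_j^T\hat x|^z$, which is at most the hyperplane-fitting sensitivity $\sup_{(\hat x,\hat c)\in\F}|\hat a_i^T\hat x-\hat c|^z/\sum_j|\hat a_j^T\hat x-\hat c|^z$ of $\hat a_i$ in $\REAL^{d+1}$ (restrict to $\hat c=0$ and to non-vanishing denominators, matching the statement of Corollary~\ref{lem:sensBound}). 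Hence it suffices to run Corollary~\ref{lem:sensBound} on $\br{\hat a_1,\dots,\hat a_n}\subseteq\REAL^{d+1}$; it returns such an $s$ in $O(n(d+1)^5\log n)=O(n\log n)$ time since $d$ is a constant, and the same proof shows $t=\sum_i s(\hat a_i)\le \alpha^z\beta^z$ from the conditioned-basis property (equivalently, apply Lemma~\ref{lemTotalSensBound} in dimension $d+1$), so $t=O(1)$ for constant $d,z$. (If $n\le d$ the input itself is already a weighted set of $O(1)$ size and there is nothing to prove.)

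Next I would bound the VC-dimension. For $(x,r)$ we have $\range(x,r)=\br{i: w_i|a_i^Tx-b_i|^z\le r}$; on $r\ge0$ substitute the monotone reparametrization $\tau=r^{2/z}$, so that $i\in\range(x,r)$ becomes the inequality $w_i^{2/z}(a_i^Tx-b_i)^2-\tau\le 0$, a polynomial of degree $2$ in the $(d+1)$-dimensional variable $(x,\tau)$. By the standard bound on the number of sign patterns of $m$ bounded-degree polynomials in a fixed number of variables, the number of distinct sets $\br{\range(x,r)\cap S}$ over all $(x,r)$ is at most $|S|^{O(d)}$, i.e.\ $d_{VC}=O(d)=O(1)$. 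Plugging $t=O(1)$ and $d_{VC}=O(1)$ into Theorem~\ref{theorem:howToCoreset}, a reweighted sample of size $|S|\ge \tfrac{ct}{\varepsilon^2}(d_{VC}\log t+\log\tfrac1\delta)=O\!\big(\tfrac{\log(1/\delta)}{\varepsilon^2}\big)$ satisfies property (i) with probability at least $1-\delta$, and the induced $u$ has at most $|S|$ non-zero entries, which is property (ii). Drawing and reweighting the sample costs $O(n)$, so the running time is dominated by the sensitivity computation, namely $O(n\log n)$.

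The step I expect to need the most care is the reduction chain in the second paragraph — in particular verifying that restricting the $\REAL^{d+1}$ hyperplane query space to $\hat c=0$ and to non-vanishing denominators still yields a valid dominating bound on the weighted regression sensitivities — together with a clean statement of the polynomial sign-pattern bound that feeds the VC-dimension estimate. Everything else is routine bookkeeping and direct appeals to Theorem~\ref{theorem:howToCoreset}, Corollary~\ref{lem:sensBound}, and Lemma~\ref{lemTotalSensBound}.
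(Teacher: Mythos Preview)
Your proposal is correct and follows essentially the same route as the paper: fold the weights into the data, lift $(a_i,b_i)$ to $\hat a_i\in\REAL^{d+1}$, bound the regression sensitivities by the hyperplane-fitting sensitivities via Corollary~\ref{lem:sensBound} and Lemma~\ref{lemTotalSensBound}, bound $d_{VC}=O(d)$, and invoke Theorem~\ref{theorem:howToCoreset}. The only cosmetic differences are that the paper uses $(x^T\mid 1)^T$ in the lift (a sign slip relative to your $(x^T\mid -1)^T$, which is the correct choice) and that the paper simply cites~\cite{anthony2009neural} for the VC bound rather than spelling out the polynomial sign-pattern argument you give.
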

\begin{proof}
Put $x\in \sphere^{d-1}$ and $i\in [n]$.
Let $W \in \REAL^{n\times n}$ be a diagonal matri whose diagonal elements are $(w_1^{1/z},\cdots,w_n^{1/z})$.
Let $A' = W A \in \REAL^{n\times d}$, $b' = Wb \in \REAL^{n}$ and $P = (p_1\mid\cdots\mid p_n)^T= [A' \mid b'] \in \REAL^{n\times (d+1)}$. For every unit vector $x\in \REAL^d$ it holds that
\[
\sum_{i\in [n]} w_i \cdot |a_i^Tx-b_i|^z = \norm{A'x-b'}_z^z = \norm{P (x^T \mid 1)^T}_z^z.
\]

Plugging $P$ in Corollary~\ref{lem:sensBound} yields that in $O(nd^5\log{n})$ time we can compute a function $s:P\to [0,\infty)$ such that
\begin{equation}\label{eqCompSFunc}
s(p_i) \geq \sup_{(y,b) \in \sphere^{d}\times\REAL} \frac{|p_i^T y-b|^z}{\norm{P y-b}_z^z}.
\end{equation}

Therefore, we have that
\begin{align}
\sup_{x \in \sphere^{d-1}}& \frac{w_i \cdot |a_i^Tx-b_i|^z}{\sum_{j\in [n]} w_j \cdot |a_j^Tx-b_j|^z}
= \sup_{x \in \sphere^{d-1}} \frac{|p_i^T (x^T \mid 1)^T|^z}{\norm{P (x^T \mid 1)^T}_z^z} \label{eqMainSens1}\\
& \leq \sup_{y \in \sphere^{d}} \frac{\sqrt{2} |p_i^T y|^z}{\sqrt{2} \norm{P y}_z^z} \label{eqMainSens2}\\
& \leq \sup_{(y,b) \in \sphere^{d}\times\REAL} \frac{|p_i^T y-b|^z}{\norm{P y-b}_z^z} \label{eqMainSens3}\\
& \leq s(p_i).
\end{align}
where~\eqref{eqMainSens1} holds by the definition of $P$, \eqref{eqMainSens2} holds since $\br{\frac{(x^T \mid 1)^T}{\sqrt{2}} \mid \norm{x} = 1} \subseteq \sphere^{d}$,
\eqref{eqMainSens3} holds since for every $y \in \sphere^{d}$ there is $(a,b) \in \sphere^{d}\times\REAL$ such that $a=y$, and~\eqref{eqMainSens3} is by~\eqref{eqCompSFunc}.

The VC-dimension of the corresponding query space is bounded by $d_{VC} \in O(d)$ by~\cite{anthony2009neural}; See Definition~\ref{def:VC}.

Let $t = \sum_{i\in [n]} s(p_i)$ and $Y = \br{(a_i,b_i)}_{i=1}^n$.
By Lemma~\ref{lemTotalSensBound} we have that $t \in O(d^{1+z/2})$ if $z\in [1,2)$, $O(d)$ if $z=2$ and $O(d^z)$ if $z>2$.

Let $Z \subseteq Y$ be a random sample of
\[
|Z| \in O\left(\frac{t}{\varepsilon^2}\left(d\log{t}+\log{\frac{1}{\delta}}\right)\right)
=\frac{d^{O(1)}}{\varepsilon^2}\log{\frac{1}{\delta}}
\]
pairs from $Y$, where $(a_i,b_i) \in Y$ is sampled with probability $s(p_i)/t$ and let $u = (u_1,\cdots,u_n)$ where
\[
u_i = \begin{cases} \frac{t\cdot w_i}{s(p_i)|Z|}, & \text{ if } (a_i,b_i)\in Z\\
0, & \text{ otherwise} \end{cases}.
\]
By substituting $P=Y$, $Q(\cdot)\equiv\sphere^{d-1}$, $f\left((a,b),x\right)=w_i\cdot |a^Tx-b|^z$ for every $(a,b)\in Y$ and $x\in \sphere^{d-1}$, $t \in d^{O(1)}$ and $d_{VC}=O(d)$, in Theorem~\ref{theorem:howToCoreset}, Property (i) of Theorem~\ref{theorem:coreset} holds as
\[
\begin{split}
(1-\varepsilon) \cdot \sum_{i\in [n]}& w_i \cdot |a_i^Tx-b_i|^z \leq \sum_{i\in [n]}u_i\cdot |a_i^Tx-b_i|^z\\
& \leq (1+\varepsilon) \cdot \sum_{i\in [n]} w_i \cdot |a_i^Tx-b_i|^z.
\end{split}
\]

Furthermore, Property (i) of Theorem~\ref{theorem:coreset} holds since the number of non-zero entries of $u$ is equal to $|Z| \in \frac{d^{O(1)}}{\varepsilon^2}\log{\frac{1}{\delta}} = O\left(\frac{\log{\frac{1}{\delta}}}{\varepsilon^2}\right)$.

The time needed to compute $u$ is bounded by the computation time of $s$, which is bounded by $O(n\log{n})$ since $d$ is a constant.
\end{proof}

\end{document}